\documentclass[pdflatex,sn-mathphys-num]{sn-jnl}


\usepackage{graphicx}%
\usepackage{multirow}%
\usepackage{amsmath,amssymb,amsfonts}%
\usepackage{amsthm}%
\usepackage{mathrsfs}%
\usepackage[title]{appendix}%
\usepackage{xcolor}%
\usepackage{textcomp}%
\usepackage{manyfoot}%
\usepackage{booktabs}%
\usepackage{algorithm}%
\usepackage{algorithmicx}%
\usepackage{algpseudocode}%
\usepackage{listings}%

\usepackage[capitalize]{cleveref}


\theoremstyle{thmstyleone}%
\newtheorem{theorem}{Theorem}
%

\newtheorem{lemma}{Lemma} 

\theoremstyle{thmstyletwo}%

\theoremstyle{thmstylethree}%
\newtheorem{definition}{Definition}%

\raggedbottom

\begin{document}

\title[Article Title]{Fast and Scalable Semi-Supervised Learning for Multi-View Subspace Clustering}

\author[1]{\fnm{Huaming} \sur{Ling}}\email{linghm18@mails.tsinghua.edu.cn}

\author*[2]{\fnm{Chenglong} \sur{Bao}}\email{clbao@mail.tsinghua.edu.cn}

\author[3]{\fnm{Jiebo} \sur{Song}}\email{songjiebo@bimsa.cn}

\author*[2]{\fnm{Zuoqiang} \sur{Shi}}\email{zqshi@tsinghua.edu.cn}

\affil[1]{\orgdiv{Department of  Mathematical Sciences}, \orgname{Tsinghua University}, \orgaddress{\street{Haidian  District}, \city{Beijing}, \postcode{100084}, \state{Beijing}, \country{China}}}

\affil*[2]{\orgdiv{Yau Mathematical Sciences Center}, \orgname{Tsinghua University}, \orgaddress{\street{Haidian  District}, \city{Beijing}, \postcode{100084}, \state{Beijing}, \country{China}}}

\affil[3]{\orgdiv{Beijing Institute of Mathematical Sciences and Applications}, \orgaddress{\street{Huairou  District}, \city{Beijing}, \postcode{101408}, \state{Beijing}, \country{China}}}


\abstract{In this paper, we introduce a Fast and Scalable Semi-supervised Multi-view Subspace Clustering (FSSMSC) method, a novel solution to the high computational complexity commonly found in existing approaches. FSSMSC features linear computational and space complexity relative to the size of the data. The method generates a consensus anchor graph across all views, representing each data point as a sparse linear combination of chosen landmarks. 
Unlike traditional methods that manage the anchor graph construction and the label propagation process separately, this paper proposes a unified optimization model that facilitates simultaneous learning of both. An effective alternating update algorithm with convergence guarantees is proposed to solve the unified optimization model.
Additionally, the method employs the obtained anchor graph and landmarks' low-dimensional representations to deduce low-dimensional representations for raw data. Following this,  a straightforward clustering approach is conducted on these low-dimensional representations to achieve the final clustering results. The effectiveness and efficiency of FSSMSC are validated through extensive experiments on multiple benchmark datasets of varying scales.}

\keywords{Large-scale clustering, Multi-view subspace clustering, Semi-supervised clustering, Nonconvex optimization, Anchor graph learning}



\maketitle

\section{Introduction}

Real-world datasets commonly encompass multi-view features that depict objects from different perspectives. For instance, materials conveying similar meanings might be expressed in different languages. The field of machine learning has seen the emergence of a significant topic known as multi-view clustering. This approach seeks to partition data by exploring consensus across multiple views. A prominent branch within this context is multi-view subspace clustering, which has garnered substantial attention during the last decade due to its promising performance \cite{cao2015diversity,gao2015multi,wang2017exclusivity,brbic2018multi,luo2018consistent,zhang2018generalized,li2019reciprocal,li2019flexible,gao2020tensor,zhang2020tensorized,kang2020partition,zhang2020consensus,sun2021projective,si2022consistent,cai2023seeking}. These methodologies focus on acquiring self-representative affinity matrices by expressing each data point as a linear combination of all other data points in either the feature space or the latent space.

Despite the achievements of multi-view subspace clustering methods in various application domains, their application to large-scale datasets is hindered by significant time and memory overhead.
These methods involve two primary phases: graph construction and spectral clustering. In the graph construction phase, the computational complexity of creating the self-representative affinity matrix for unstructured data is $\mathcal{O}(n^3)$, where $n$ denotes the data size. The subsequent spectral clustering phase incurs a computational complexity of at least $\mathcal{O}(n^2)$ due to the use of Singular Value Decomposition (SVD). Furthermore, the overall space complexity is estimated at $\mathcal{O}(n^2)$. 
In recent years, several landmark-based methods, as illustrated by \cite{kang2020large,sun2021scalable,wang2021fast}, have emerged to enhance the efficiency of multi-view subspace clustering. These methods adopt an anchor graph strategy, generating a small number of landmarks (or anchors) and constructing anchor graphs to establish connections between the raw data and the landmarks. This anchor graph strategy significantly reduces computational and memory requirements, resulting in a linear complexity of $\mathcal{O}(n)$, where $n$ denotes the data size.

In many applications, a small amount of supervisory information is often available despite the resource-intensive nature of data annotation. And several landmark-based semi-supervised learning methods like \cite{liu2010large,wang2016scalable,zhang2022graph} have emerged to harness the available supervisory information. These methods encompass two core phases: anchor graph construction and label propagation. During anchor graph construction, an anchor graph is crafted by assessing similarities between raw data and landmarks. Subsequently, in label propagation, a graph structure is devised for landmarks with the previously derived anchor graph, confining the exploration of supervisory information to a much smaller graph and bypassing the complete graph construction for raw data. However, these methods handle anchor graph construction and label propagation in isolation. Consequently, the anchor graph's formation occurs independently of supervisory information, potentially yielding a sub-optimal anchor graph that impairs label propagation. Moreover, their applicability is restricted to single-view datasets, precluding the utilization of multi-view features.

To address the above limitations of existing landmark-based semi-supervised learning methods, we introduce a novel method termed Fast Scalable Semi-supervised Multi-view Subspace Clustering (FSSMSC). This method formulates a unified optimization model aimed at concurrently facilitating the processes of anchor graph construction and label propagation. Notably, FSSMSC extends the landmark-based multi-view subspace clustering methods to harness supervisory information. Furthermore, the proposed FSSMSC offers a computational and space complexity of linear order with respect to the data size, rendering it applicable to large-scale datasets with multi-view features. 


An efficient alternating update scheme is implemented to solve the proposed optimization model, cyclicly refreshing both the anchor graph and the landmarks' low-dimensional representations. This iterative procedure facilitates simultaneous advancement in both the anchor graph and the low-dimensional representations of the landmarks. Significantly, within this iterative procedure, the consensus anchor graph undergoes updates utilizing both the data features and the landmarks' representations, which contrasts with existing scalable semi-supervised learning methods such as \cite{liu2010large,wang2016scalable,zhang2022graph} that solely construct the anchor graph based on data features. The landmarks' representations act as pseudo supervision, facilitating the learning of the consensus anchor graph. 
While unsupervised clustering methods like \cite{wang2021fast} optimize anchor graph and landmarks simultaneously, they derive low-dimensional representations for clustering by employing Singular Value Decomposition (SVD) on the learned anchor graph, thereby handling anchor graph learning and low-dimensional representation learning separately. In contrast, our proposed semi-supervised learning method manages anchor graph construction and low-dimensional representation learning concurrently.

Our contributions can be summarized as follows:

\begin{itemize}

\item We introduce a method termed Fast Scalable Semi-supervised Multi-view Subspace Clustering (FSSMSC), which extends the existing landmark-based multi-view subspace clustering methods to harness supervisory information. Moreover, the proposed FSSMSC offers linear computational and space complexity, making it well-suited for large-scale datasets with multi-view features. 

\item Distinct from traditional methods that manage the anchor graph construction and the label propagation process separately, we formulate a unified optimization model that facilitates simultaneous learning of both.

\item An efficient alternating update scheme with convergence guarantees is introduced to solve the proposed optimization model, cyclicly updating both the anchor graph and the landmarks' low-dimensional representations.
Extensive experiments on multiple benchmark multi-view datasets of varying scales verify the efficiency and effectiveness of the proposed FSSMSC.
\end{itemize}

\textbf{Notations.} We denote matrices by boldface uppercase letters, e.g., $\mathbf{Z}$, vectors by boldface lowercase letters, e.g., $\mathbf{f}$, and scalars by lowercase letters, e.g., $\alpha$. We denote $z_{ij}$ or $\mathbf{Z}_{ij}$ as the $(i,j)$-th element of $\mathbf{Z}$. 
We use Tr($\mathbf{Z}$) to denote the trace of any square matrix $\mathbf{Z}$. $|\mathbf{Z}|$ is the matrix consisting of the absolute value of each element in $\mathbf{Z}$. We denote $\mbox{diag}(\mathbf{W})$ as a vector of diagonal elements in $\mathbf{W}$ and $\mbox{diag}(\mathbf{f})$ as a diagonal matrix with $\mathbf{f}$ being the diagonal elements. $\mathbf{W}^\top$ is the transpose of $\mathbf{W}$. We set $\mathbf{I}$ as the identity matrix and $\mathbf{0}$ as a matrix 
 or vector of all zeros. We give the notations of some norms, e.g., $\ell_1$-norm $\|\mathbf{Z}\|_1=\sum_{ij}|z_{ij}|$ and Frobenius norm (or $\ell_2$-norm of a vector) $\|\mathbf{Z}\|=\sqrt{\sum_{ij}z_{ij}^2}$. 

\section{Related Work}

\subsection{Subspace Clustering}
\label{sec: subspace clustering}
Subspace clustering assumes that the data points are drawn from a union of multiple low-dimensional subspaces, with each group corresponding to one subspace. This clustering paradigm finds applications in diverse domains, including motion segmentation \cite{elhamifar2013sparse}, face clustering \cite{zhang2019self}, and image processing \cite{yang2016ell}. Within subspace clustering, self-representative subspace clustering achieves state-of-the-art performance by representing each data point as a linear combination of other data points within the same subspace. Such a representation is not unique, and sparse subspace clustering (SSC) \cite{elhamifar2013sparse} pursues a sparse representation by solving the following problem
\begin{align}
\label{op:SSC}
\min_{\mathbf{Z}}~\|\mathbf{Z}\|_{1}+\gamma\|\mathbf{X-XZ}\|_1,
~\text{s.t.}~\mbox{diag}(\mathbf{Z})=\mathbf{0},
\end{align}
where $\mathbf{Z}$ is a $n\times n$ self-representative matrix for $n$ data points  $\mathbf{X}=[\mathbf{x}_1,\mathbf{x}_2,\ldots,\mathbf{x}_n]$, and $|\mathbf{Z}_{ij}|$ reflects the affinity between data point $\mathbf{x}_i$ and data point $\mathbf{x}_j$. Then the affinity matrix is induced by $\mathbf{Z}$ as $\frac{1}{2}\left(|\mathbf{Z}|+\left|\mathbf{Z}\right|^{\top}\right)$, which is further used for spectral clustering (SC) \cite{ng2002spectral} to obtain the final clustering results.

In this paper, we represent each data point using selected landmarks to facilitate linear computational and space complexity relative to the data size $n$.

\subsection{Semi-Supervised Multi-View Learning}

Owing to the prevalence of extensive real-world datasets with diverse features, numerous semi-supervised learning approaches have been developed specifically for multi-view data \cite{wang2009unified,cai2013heterogeneous,karasuyama2013multiple,nie2017auto,tao2017scalable,liang2020semi,qin2021semi,liang2022co}. 
The multiple graphs learning framework (AMGL) in \cite{nie2016parameter} automatically learned the optimal weights for view-specific graphs. Additionally, the work \cite{nie2017auto} presented a multi-view learning with adaptive neighbors (MLAN) framework, concurrently performing label propagation and graph structure learning.
\cite{liang2020semi} proposed a semi-supervised multi-view clustering method (GPSNMF) that constructs an affinity graph to preserve geometric information.
An anti-block-diagonal indicator matrix was introduced in \cite{qin2021semi} to enforce the block-diagonal structure of the shared affinity matrix. \cite{tang2021constrained} introduced a constrained tensor representation learning model to utilize prior weakly supervision for multi-view semi-supervised subspace clustering task.
The work \cite{liang2022co} proposed a novel semi-supervised multi-view clustering approach (CONMF) based on orthogonal non-negative matrix factorization.
Despite their success in semi-supervised learning, these methods typically entail affinity matrix construction and exhibit at least $\mathcal{O}(n^2)$
 computational complexity, where 
$n$ denotes the data size. Notably, a semi-supervised learning approach based on adaptive regression (MVAR) was introduced in \cite{tao2017scalable}, offering linear scalability with $n$ and quadratic scalability with feature dimension.

This paper introduces FSSMSC, a fast and scalable semi-supervised multi-view learning method to address the prevalent high computational complexity in existing approaches. FSSMSC exhibits linear computational and space complexity relative to data size and feature dimension.

\section{Methodology}

\begin{table}[!t]
    \centering
    \caption{Main notations}
    \label{tab:Main notations}
    \begin{tabular}{@{}l|l@{}} 
        \toprule
        Notations & Descriptions \\
        \midrule
        $n$ & Number of data points within the whole dataset \\
        \midrule
        $m$ & Number of selected landmarks \\
        \midrule
        $k$ & Number of clusters (categories) \\
        \midrule
        $n_\ell$ & Number of chosen labeled samples \\
        \midrule
        $\mathbf{X}^{(v)} \in \mathbb{R}^{d_v\times n}$ & Data matrix for the $v$-th view \\
        \midrule
        $\mathbf{U}^{(v)}\in \mathbb{R}^{d_v\times m}$ & Data matrix of the selected $m$ landmarks for the $v$-th view \\
        \midrule
        $\mathbf{M},\mathbf{C}\in \mathbb{R}^{n_\ell \times n_\ell}$ & Matrices of pairwise constraints\\
        \midrule
        $\mathbf{L}_\mathbf{M},\mathbf{L}_\mathbf{C} \in \mathbb{R}^{n_\ell \times n_\ell}$ & Laplacian matrices of $\mathbf{M}$ and $\mathbf{C}$ \\
        \midrule
        $\mathbf{Z} \in \mathbb{R}^{m\times n}$ & Consensus anchor graph \\
        \midrule
        $\mathbf{A}\in \mathbb{R}^{k\times m}$ & Low-dimensional representations of the selected $m$ landmarks \\
        \bottomrule
    \end{tabular}
\end{table}

This section introduces the proposed Fast Scalable Semi-supervised Multi-view Subspace Clustering (FSSMSC) approach. We present the main notations of this paper in \cref{tab:Main notations}.  

\subsection{The Unified Optimization Model}

For multi-view data, let $V$ denote the number of views, and $\mathbf{X}^{(v)}\in \mathbb{R}^{d_v\times n}$ represent the data matrix for $n$ data points $\mathbf{x}_1,\mathbf{x}_2,\ldots,\mathbf{x}_n$ in the $v$-th view, where $d_v$ and $n$ stand for the dimension of the $v$-th view and the data size, respectively. We perform $k$-means clustering on the concatenated features of all views to establish consistent landmarks across all views. The resulting clustering centers are then partitioned into distinct views, denoted as $\mathbf{U}^{(v)}\in \mathbb{R}^{d_v\times m}$, $v=1,\dots, V$. 
Assuming the first $n_\ell$ data points are provided with labels, while the remaining data points remain unlabeled. The label assigned to $\mathbf{x}_i$ is denoted as $\ell_i\in\{1,\ldots,k\}$, $i=1,\ldots,n_\ell$, where $k$ is the number of data categories. We formulate a more flexible form of supervisory information for the $n_\ell$ labeled data points, i.e., pairwise constraints $\mathcal{M} = \{(i,j)|\ell_i=\ell_j\}$ and $\mathcal{C} = \{(i,j)|\ell_i\neq \ell_j\}$, where $\mathcal{M}$ and $\mathcal{C}$ denote the sets of data pairs subject to must-link and cannot-link constraints, respectively.

Scalable semi-supervised learning methods like \cite{liu2010large,wang2016scalable,zhang2022graph} compute the soft label vectors for $n$ data points by a linear combination of the soft label vectors of $m$ chosen landmarks (anchors), where $m$ is notably smaller than $n$.
Similarly, we compute low-dimensional representations $\mathbf{F}\in\mathbb{R}^{k\times n}$ for $n$ data points through a linear combination of low-dimensional representations $\mathbf{A} \in \mathbb{R}^{k\times m}$ of $m$ landmarks 
\begin{align}
\label{eq: FArelation}
\mathbf{f}_i = \sum \nolimits_{j=1}^m \mathbf{Z}_{ji}\mathbf{a}_j,~~~i=1,\ldots,n,
\end{align}
where $\mathbf{f}_i$ is the $i$-th column of $\mathbf{F}$, $\mathbf{a}_j$ is the $j$-th column of $\mathbf{A}$. Here, $k$ signifies the dimension of the low-dimensional space, consistently set as the number of data categories throughout this paper. Additionally, $\mathbf{Z} = (\mathbf{Z}_{ji})\in \mathbb{R}^{m\times n}$ is a consensus anchor graph capturing the relationships (similarities) between $n$ data points and $m$ selected landmarks.

However, these previous methods in \cite{liu2010large,wang2016scalable,zhang2022graph} treat anchor graph learning and the learning of landmarks' soft label vectors as disjoint sub-problems. In contrast, we formulate a unified optimization model facilitating concurrent advancement in both the anchor graph $\mathbf{Z}$ and the low-dimensional representations $\mathbf{A}$ of the landmarks.
Specifically, the proposed unified optimization model of the anchor graph $\mathbf{Z}$ and low-dimensional representations $\mathbf{A}$ is based on the following observations.

Firstly, each entry $\mathbf{Z}_{ji}$ within the anchor graph $\mathbf{Z}$ should capture the affinity between the $i$-th data point and the $j$-th landmark. 
To fulfill this requirement, we extend the optimization model in \eqref{op:SSC} to a landmark-based multi-view clustering scenario. 
As discussed in \cref{sec: subspace clustering}, the optimization model \eqref{op:SSC} proposed in \cite{elhamifar2013sparse} represents each data point as a sparse linear combination of other data points within the same subspace, and induces the affinity matrix using these linear combination coefficients.
Denote $\mathbf{X}^{(v)}=[\mathbf{x}^{(v)}_1,\dots,\mathbf{x}^{(v)}_n]$, where $\mathbf{x}^{(v)}_i$ represents the feature vector of the $i$-th data point $\mathbf{x}_i$ in the $v$-th view.
In this paper, we depict each $\mathbf{x}_i^{(v)}$ as a linear combination of the landmarks $\mathbf{U}^{(v)}$ and assume that the combination coefficients are shared across views. Therefore, we explore the relationship among views by assuming that the anchor graph $\mathbf{Z}$ is shared across views to capture their consensus. Furthermore, we enforce coefficient sparsity by utilizing the $\ell_1$-norm on $\mathbf{Z}$, leading to the following optimization problem:
\begin{align}
\label{eq: construct Z}
\min_{\mathbf{Z} \geq 0} ~&\varphi(\mathbf{Z}) = \frac{1}{2}\sum_{v=1}^V\|\mathbf{X}^{(v)}-\mathbf{U}^{(v)}\mathbf{Z}\|^2 + \lambda_Z \|\mathbf{Z}\|_1, 
\end{align}
where we impose non-negativity constraints on the elements in $\mathbf{Z}$. The parameter $\lambda_Z>0$ governs the sparsity level in $\mathbf{Z}$.

 Secondly, the low-dimensional representations for the $n$ data points, denoted as $\mathbf{F=AZ}$ following (\ref{eq: FArelation}), need to conform to the provided pairwise constraints. Specifically, data points with must-link (cannot-link) constraints should exhibit similar (distinct) low-dimensional representations.
To fulfill this requirement, we encode the pairwise constraints into two matrices, $\mathbf{M}=(m_{ij})\in\mathbb{R}^{n_\ell \times n_\ell}$ and $\mathbf{C}=(c_{ij})\in\mathbb{R}^{n_\ell \times n_\ell}$, with elements defined as:
\begin{align}
     m_{ij} = \begin{cases} \frac{1}{n_m},   &\text{if} ~(i,j)\in \mathcal{M}\\
      0, &\text{otherwise}
     \end{cases},~ c_{ij} = \begin{cases} \frac{1}{n_c},   &\text{if} ~(i,j)\in \mathcal{C}\\
      0, &\text{otherwise}
     \end{cases}
 \end{align}
 where $n_m$ ($n_c$) is the total number of must-link (cannot-link) constraints in $\mathcal{M}$ ($\mathcal{C}$).
 The matrix $\mathbf{M}$ represents prior pairwise similarity constraints among data points. The element $m_{ij}$ functions as a reward factor, with a positive value indicating a constraint that data points $i$ and $j$ should be grouped into the same cluster. In contrast, the matrix $\mathbf{C}$ denotes prior pairwise dissimilarity constraints. The element $c_{ij}$ acts as a penalty factor, where positive values suggest that data points $i$ and $j$ should be assigned to different clusters. To balance the influence of must-link and cannot-link constraints, we normalize the non-zero entries in $\mathbf{M}$ and $\mathbf{C}$ by $n_m$ and $n_c$, respectively.
Subsequently, we utilize $m_{ij}$ and $c_{ij}$ to measure intra-class and inter-class distances for labeled data points, respectively, in the low-dimensional space.
Specifically, we ensure that data pairs in $\mathcal{M}$ share similar low-dimensional representations through the minimization of
\begin{align}
 \phi_\mathcal{M}(\mathbf{A},\mathbf{Z})&=\sum \nolimits_{i,j=1}^{n_\ell} \frac{m_{ij}}{2} \|\mathbf{A}\mathbf{z}_i - \mathbf{A}\mathbf{z}_j\|^2  \\ &=\operatorname{Tr}(\mathbf{AZ_\ell} \mathbf{L}_\mathbf{M}(\mathbf{AZ_\ell})^\top) , \nonumber
 \end{align}
where $\mathbf{Z}_\ell$ is the first $n_\ell$ columns of $\mathbf{Z}$ and $\mathbf{z}_i$ is any $i$-th column of $\mathbf{Z}$, $\mathbf{L}_\mathbf{M}=\mathbf{D_M}-\mathbf{M}$, where $\mathbf{D_M}$ is a diagonal degree matrix with the $i$-diagonal element being $\sum_{j=1}^{n_\ell} m_{ij}$. Similarly, we ensure that data pairs in $\mathcal{C}$ exhibit distinct low-dimensional representations through the maximization of 
\begin{align}
 \phi_\mathcal{C}(\mathbf{A},\mathbf{Z}) &=\sum \nolimits_{i,j=1}^{n_\ell} \frac{c_{ij}}{2} \|\mathbf{A}\mathbf{z}_i - \mathbf{A}\mathbf{z}_j\|^2  \\
 &= \operatorname{Tr}(\mathbf{AZ_\ell} \mathbf{L}_\mathbf{C}(\mathbf{AZ_\ell})^\top), \nonumber
 \end{align}
where $\mathbf{L}_\mathbf{C}=\mathbf{D_C}-\mathbf{C}$ and $\mathbf{D_C}$ is a diagonal degree matrix with the $i$-diagonal element being $\sum_{j=1}^{n_\ell} c_{ij}$.

   

\begin{algorithm}[tb]
   \caption{FSSMSC}
   \label{alg: FSSMSC}
\begin{algorithmic}
   \State {\bfseries Input:} Data features $\mathbf{X}^{(v)},v=1,...,V$, landmarks $\mathbf{U}^{(v)},v=1,...,V$, pairwise constraints $\mathbf{L_M},\mathbf{L_C}$, 
 and parameters $\lambda_Z,\beta,\lambda_M$.
   \State {\bfseries Initialize:}
   $\mathbf{B}=(\lambda \mathbf{I}+\sum_{v=1}^V\mathbf{U}^{(v)^\top}\mathbf{U}^{(v)})^{-1}$ $\cdot$ \\ $(\sum_{v=1}^V\mathbf{U}^{(v)^\top}\mathbf{X}^{(v)})$, $\mathbf{Z}=\min(m_Z,\max(\mathbf{0},\mathbf{B}-\frac{\lambda_Z}{\lambda}))$, $\mathbf{q=ZZ^\top1}$, $\mathbf{A=0}$, and $\Lambda=\mathbf{0}$.
   \Repeat
   \State Update $\mathbf{A}$ by solving the trace ratio problem (\ref{eq:solve A2}); 
   \State Update $\mathbf{Z}$ by the formula (\ref{eq: update Z});
    \State Update $\mathbf{q}$ by the formula (\ref{eq: update q});
    \State Update $\mathbf{B}$ by the least-square solution (\ref{eq: update B});
    \State Update the multiplier $\Lambda$ by (\ref{eq: update multipliers});
   \Until{converged}
   \State {\bfseries Output:} $\mathbf{A,Z,B}$.
\end{algorithmic}
\end{algorithm}

 Thirdly, the representations $\mathbf{A}$ of landmarks should exhibit smoothness on the graph structure of landmarks, signifying that landmarks with higher similarity ought to possess spatially close representations. This objective is commonly achieved through the application of manifold smoothing regularization. 
Specifically, for a symmetric similarity matrix $\mathbf{W}\in \mathbb{R}^{m\times m}$ of $m$ chosen landmarks, the manifold smoothing regularization is given by 
\begin{align}
\label{eq:manifold smoothing}
\sum_{i,j=1}^m \frac{\mathbf{W}_{ij}}{2}\|\frac{\mathbf{a}_i}{\sqrt{\mathbf{D}_{ii}}}-\frac{\mathbf{a}_j}{\sqrt{\mathbf{D}_{jj}}}\|^2 
=\text{Tr}\left(\mathbf{A}(\mathbf{I}-\mathbf{D}^{-\frac{1}{2}}\mathbf{W}\mathbf{D}^{-\frac{1}{2}})\mathbf{A}^\top\right),
\end{align}
where $\mathbf{D}$ is the diagonal degree matrix with $\mathbf{D}_{ii} = \sum_{j=1}^m \mathbf{W}_{ij}$. Minimizing this objective ensures that landmarks $i$ and $j$ with high similarity value $\mathbf{W}_{ij}$ have their scaled low-dimensional representations $\mathbf{a}_i$ and $\mathbf{a}_j$ close to each other.  

 Given that $\mathbf{Z}_{ij}\geq 0$ signifies the similarity between landmark $i$ and data point $j$, we define the graph structure $\mathbf{W}$ for landmarks as $\mathbf{W} = \mathbf{Z}\mathbf{Z}^\top$, consistent with the definition in prior literature \cite{wang2016scalable,zhang2022graph}. We subsequently derive the normalized Laplacian matrix $\mathbf{L}_{\mathbf{W}}$ using $\mathbf{L}_{\mathbf{W}} = \mathbf{I}-\mathbf{D}^{-\frac{1}{2}}\mathbf{W}\mathbf{D}^{-\frac{1}{2}}$. Then, we introduce the manifold smoothing regularization as follows:
\begin{align}
\label{eq: smooth term}
\phi_s(\mathbf{A},\mathbf{Z})=&\text{Tr}\left(\mathbf{A}\mathbf{L_W}\mathbf{A}^\top\right) \nonumber \\
= & \text{Tr}\left(\mathbf{A}(\mathbf{I}-\mathbf{D_Z}^{-1/2}\mathbf{Z}\mathbf{Z}^\top \mathbf{D_Z}^{-1/2})\mathbf{A}^\top\right),
\end{align}
where $\mathbf{D_Z}=\text{diag}(\mathbf{Z}\mathbf{Z}^\top\mathbf{1}_m)$ and $\mathbf{1}_m=(1,\cdots,1)^\top\in\mathbb{R}^m$.


By consolidating the aforementioned optimization requirements, we establish a unified optimization problem for the simultaneous learning of $\mathbf{Z}$ and $\mathbf{A}$ as follows:
\begin{align}
\label{op: FSSMSC}
\min_{\mathbf{A},~\mathbf{Z}}~  &\varphi(\mathbf{Z}) + \beta \cdot \frac{\phi_s(\mathbf{A},\mathbf{Z})+\lambda_M\phi_\mathcal{M}(\mathbf{A},\mathbf{Z})}{\phi_\mathcal{C}(\mathbf{A},\mathbf{Z})} \\
 & \text{s.t.}~~\mathbf{A}\mathbf{A}^\top=\mathbf{I}, ~\mathbf{Z} \geq 0, \nonumber
\end{align}
where $\beta,\lambda_M>0$ are penalty parameters.

We introduce auxiliary variable $\mathbf{q}\in \mathbb{R}^m$ subject to the constraint $\mathbf{q}=\mathbf{Z}\mathbf{Z}^\top\mathbf{1}_m$. Subsequently, we replace $\mathbf{D}_\mathbf{Z}$ with $\mathbf{D_q}=\text{diag}(\mathbf{q})$ in $\phi_s(\mathbf{A},\mathbf{Z})$, denoted as $\tilde{\phi}_s(\mathbf{A},\mathbf{Z},\mathbf{q})$. To facilitate numerical stability and convergence analysis, we impose an upper bound $m_Z>0$ on the elements in $\mathbf{Z}$ and enforce upper bound $C_q>0$ and lower bound $\epsilon_q>0$ on the elements in $\mathbf{q}$. Additionally, a positive constant $\epsilon >0$ is added to the denominator in (\ref{op: FSSMSC}). Consequently, the optimization problem is formulated as follows:
\begin{align}
\label{op: FSSMSC new}
\min_{\mathbf{A},\mathbf{Z},\mathbf{q}}~  &\varphi(\mathbf{Z}) + \beta \cdot \frac{\tilde{\phi}_s(\mathbf{A},\mathbf{Z},\mathbf{q})+\lambda_M\phi_\mathcal{M}(\mathbf{A},\mathbf{Z})}{\phi_\mathcal{C}(\mathbf{A},\mathbf{Z})+\epsilon} \\
&+\frac{\lambda_0}{2}\|\mathbf{q}-\mathbf{Z}\mathbf{Z}^\top\mathbf{1}_m\|^2 \nonumber \\
& \text{s.t.}~~\mathbf{A}\mathbf{A}^\top=\mathbf{I}, ~m_Z \geq \mathbf{Z} \geq 0, C_q \geq \mathbf{q} \geq \epsilon_q,  \nonumber
\end{align}
where $\frac{\lambda_0}{2}\|\mathbf{q}-\mathbf{Z}\mathbf{Z}^\top\mathbf{1}_m\|^2$ is the penalized term for the constraint $\mathbf{q}=\mathbf{Z}\mathbf{Z}^\top\mathbf{1}_m$ and $\lambda_0>0$ is a penalized parameter.

\subsection{Numerical Algorithm}
We employ the Alternating Direction Method of Multipliers (ADMM) approach to solve the optimization problem (\ref{op: FSSMSC new}).

 We introduce auxiliary variables $\mathbf{B}=\mathbf{Z}$, leading to the augmented Lagrangian expression as follows:
\begin{align}
\label{eq:L}
&\mathcal{L}_\lambda(\mathbf{A},\mathbf{Z},\mathbf{q},\mathbf{B},\Lambda) \\
 = &\frac{1}{2}\sum_{v=1}^V\|\mathbf{X}^{(v)}-\mathbf{U}^{(v)}\mathbf{B}\|^2 + \lambda_Z \|\mathbf{Z}\|_1 + (\Lambda,\mathbf{B-Z})\nonumber\\
    & + \beta \cdot \frac{\text{Tr}\left(\mathbf{A}(\mathbf{I}-\mathbf{D_q}^{-1/2}\mathbf{Z}\mathbf{Z}^\top \mathbf{D_q}^{-1/2})\mathbf{A}^\top\right)}{\text{Tr}\left(\mathbf{A}\mathbf{Z_\ell}\mathbf{L}_\mathbf{C}(\mathbf{A}\mathbf{Z_\ell})^\top\right)+\epsilon} \nonumber\\
    & + \beta \cdot \frac{\lambda_M  \text{Tr}\left(\mathbf{A}\mathbf{Z_\ell}\mathbf{L}_\mathbf{M}(\mathbf{A}\mathbf{Z_\ell})^\top \right)}{\text{Tr}\left(\mathbf{A}\mathbf{Z_\ell}\mathbf{L}_\mathbf{C}(\mathbf{A}\mathbf{Z_\ell})^\top\right)+\epsilon}  + \frac{\lambda}{2}\|\mathbf{B-Z}\|^2 \nonumber \\
    &+ \frac{\lambda_0}{2}\|\mathbf{q}-\mathbf{Z}\mathbf{Z}^\top\mathbf{1}_m\|^2 + \ell_{S_0}(\mathbf{A}) + \ell_{S_1}(\mathbf{Z})+\ell_{S_2}(\mathbf{q}),\nonumber
\end{align}
where $\Lambda$ is the Lagrange multiplier. The trade-off parameters $\lambda_M$, $\lambda_Z$, and $\beta$ are utilized to harmonize various components.
We define $S_0=\{\mathbf{A}\in \mathbb{R}^{k\times m}|\mathbf{A} \mathbf{A}^\top=\mathbf{I}\}$, $S_1 = \{\mathbf{Z}\in \mathbb{R}^{m\times n}|m_Z\geq \mathbf{Z}\geq 0\}$ and $S_2=\{\mathbf{q}\in \mathbb{R}^m|\epsilon_q \leq \mathbf{q} \leq C_q\}$, with their respective indicator functions denoted as $\ell_{S_0}(\cdot)$, $\ell_{S_1}(\cdot)$ and $\ell_{S_2}(\cdot)$. 
Throughout our experiments, we fix $\lambda$ and $\lambda_0$ as $100$, $m_Z$ as $1$, $C_q$ as $n$, and $\epsilon$ and $\epsilon_q$ as $1\times 10^{-5}$.

In each iteration, we alternatingly minimize over $\mathbf{A}$, $\mathbf{Z}$, $\mathbf{q}$, $\textbf{B}$ and then take gradient ascent steps on the Lagrange multiplier. The minimizations over $\mathbf{A,Z,q,B}$ hold the other variables fixed and use the most recent value of each variable. 

$\bullet$ Update of $\mathbf{A}$. The sub-problem on optimizing $\mathbf{A}$ can be written as an equivalent form
\begin{align}
\label{eq:solve A2}
&\max_{\mathbf{AA}^\top=\mathbf{I}} ~  \frac{\text{Tr}\left(\mathbf{A}\mathbf{L}_d\mathbf{A}^\top \right)} {\text{Tr}\left(\mathbf{A}\mathbf{L}_f\mathbf{A}^\top \right)}, 
\end{align}
where 
$
\mathbf{L}_f=\lambda_M\left(\mathbf{Z}_\ell\mathbf{L}_\mathbf{M}\mathbf{Z}_\ell^\top \right)+ \mathbf{I}-\mathbf{D}_\mathbf{q}^{-1/2}\mathbf{Z}\mathbf{Z}^\top\mathbf{D}_\mathbf{q}^{-1/2}$
and $\mathbf{L}_d=\mathbf{Z}_\ell\mathbf{L}_\mathbf{C}\mathbf{Z}_\ell^\top+\frac{\epsilon}{k}\cdot \mathbf{I}$. The trace ratio optimization problem (\ref{eq:solve A2}) can be effectively addressed using ALGORITHM 4.1 presented in \cite{ngo2012trace}. 

$\bullet$ Update of $\mathbf{Z}$. The sub-problem on optimizing $\mathbf{Z}$ can be written as follows:
\begin{align}
\label{eq:P}
&\min_{\mathbf{Z}\in \mathcal{S}_1} P(\mathbf{Z}):= \beta \cdot \frac{\tilde{\phi}_s(\mathbf{A},\mathbf{Z},\mathbf{q})+\lambda_M\phi_\mathcal{M}(\mathbf{A},\mathbf{Z})}{\phi_\mathcal{C}(\mathbf{A},\mathbf{Z})+\epsilon} - (\Lambda,\mathbf{Z}) \nonumber\\
& +\lambda_Z\sum_{i,j} \mathbf{Z}_{ij} + \frac{\lambda}{2}\|\mathbf{B-Z}\|^2 + \frac{\lambda_0}{2}\|\mathbf{q}-\mathbf{Z}\mathbf{Z}^\top\mathbf{1}_m\|^2.
\end{align}
We employ a one-step projected gradient descent strategy to update $\mathbf{Z}$. By deriving the gradient of $P(\mathbf{Z})$ with respect to $\mathbf{Z}$, we can obtain: 
\begin{align}
\nabla_\mathbf{Z}P(\mathbf{Z})  = &-\Lambda + \lambda(\mathbf{Z-B}) + \lambda_Z \mathbf{1}_{m\times n} \\
& + [\beta \mathbf{A}^\top\mathbf{A}\mathbf{Z}_\ell \mathbf{L}_{pc},\mathbf{0}_{m\times (n-n_\ell)}] -\frac{2\beta}{\tau_2}\mathbf{D}_\mathbf{q}^{-1/2}\mathbf{A}^\top\mathbf{A}\mathbf{D}_\mathbf{q}^{-1/2}\mathbf{Z}  \nonumber \\
&+ \lambda_0 \left(\left(\mathbf{ZZ}^\top\mathbf{1}_m-\mathbf{q}\right)\mathbf{1}_m^\top\mathbf{Z}+\mathbf{1}_m\left(\mathbf{ZZ}^\top\mathbf{1}_m-\mathbf{q}\right)^\top\mathbf{Z}\right), \nonumber
\end{align}
where $\mathbf{L}_{pc}=\frac{2\lambda_M}{\tau_2}\mathbf{L}_\mathbf{M}-\frac{2\tau_1}{\tau_2^2}\mathbf{L}_\mathbf{C}$, $\tau_2=\phi_\mathcal{C}(\mathbf{A},\mathbf{Z})+\epsilon$, and
$\tau_1=\tilde{\phi}_s(\mathbf{A},\mathbf{Z},\mathbf{q})+\lambda_M\phi_\mathcal{M}(\mathbf{A},\mathbf{Z})$.
Then we update $\mathbf{Z}$ by 
\begin{align}
\label{eq: update Z}
\mathbf{Z} = \min(m_Z,\max(\mathbf{0},\mathbf{Z}-\eta_z \nabla_\mathbf{Z}P(\mathbf{Z}))),
\end{align}
where $\eta_z$ is the step size, consistently set as $\frac{1}{\lambda}$ in this paper.

$\bullet$ Update of $\mathbf{q}$. 
The sub-problem on optimizing $\mathbf{q}$ can be written as follows:
\begin{align}
\label{eq:Q}
\hspace{-3mm}\min_{\mathbf{q}\in \mathcal{S}_2} Q(\mathbf{q}):=&  \beta \cdot \frac{\tilde{\phi}_s(\mathbf{A},\mathbf{Z},\mathbf{q})}{\phi_\mathcal{C}(\mathbf{A},\mathbf{Z})+\epsilon} + \frac{\lambda_0}{2}\|\mathbf{q}-\mathbf{Z}\mathbf{Z}^\top\mathbf{1}_m\|^2.
\end{align} 
By deriving the gradient of $Q(\mathbf{q})$ with respect to $\mathbf{q}$, we can obtain:
\begin{align}
\nabla_\mathbf{q}Q(\mathbf{q})  = &\frac{\beta}{\tau_2}\text{diag}\left(\mathbf{ZZ}^\top \text{diag}(\mathbf{q}^{-1/2})\mathbf{A}^\top\mathbf{A} \text{diag}(\mathbf{q}^{-3/2})\right) \\
&  + \lambda_0 (\mathbf{q}-\mathbf{ZZ}^\top\mathbf{1}_m). \nonumber
\end{align}
Then we update $\mathbf{q}$ by
\begin{align}
\label{eq: update q}
\mathbf{q} = \min(C_q,\max(\epsilon_q,\mathbf{q}-\eta_q \nabla_\mathbf{q}Q(\mathbf{q}))),
\end{align}
where $\eta_q$ is the step size, consistently set as $\frac{1}{\lambda}$ in this paper.

$\bullet$ Update of $\mathbf{B}$. The sub-problem on optimizing $\mathbf{B}$ is 
\begin{align}
\min_{\mathbf{B}}\frac{1}{2}\sum_{v=1}^V\|\mathbf{X}^{(v)}-\mathbf{U}^{(v)}\mathbf{B}\|^2 +(\Lambda,\mathbf{B-Z})+ \frac{\lambda}{2}\|\mathbf{B-Z}\|^2. 
\end{align}
The solution to the above least-square problem is
\begin{align}
\label{eq: update B}
\mathbf{B}=(\sum_{v=1}^V\mathbf{U}^{(v)^\top}\mathbf{U}^{(v)}+\lambda \mathbf{I})^{-1}(\sum_{v=1}^V\mathbf{U}^{(v)^\top}\mathbf{X}^{(v)} + \lambda \mathbf{Z} - \Lambda).
\end{align}
Since $\mathbf{U}^{(v)}\in \mathbb{R}^{d_v\times m}$, $\mathbf{X}^{(v)}\in \mathbb{R}^{d_v\times n}$, the computational complexity of solving the least-square problem amounts to $\mathcal{O}(m^3+m^2n+\sum_{v=1}^Vd_vmn)$.

$\bullet$ Update of the Lagrange multiplier $\Lambda$.
\begin{align}
\label{eq: update multipliers}
\Lambda &= \Lambda + \lambda (\mathbf{B-Z}). 
\end{align}
The detailed procedure is summarized in \cref{alg: FSSMSC}.

\subsection{Clustering Label Inference}

Upon obtaining the anchor graph $\mathbf{Z}$ and landmarks' low-dimensional representations $\mathbf{A}$ from Algorithm \ref{alg: FSSMSC}, the low-dimensional representations for the $n$ data points are computed as $\mathbf{F=AZ}=[\mathbf{f}_1,\mathbf{f}_2,\ldots,\mathbf{f}_n]$. Subsequently, the cluster label $c_i$ for each data point $\mathbf{x}_i,i=1,2,\ldots,n$, is inferred using 
\begin{align}
c_i = \ell_{idx}, ~~~idx = \mathop{\arg \min} \limits_{j\in\{1,2,\ldots,n_\ell\}} \|\mathbf{f}_i-\mathbf{f}_j\|.
\end{align}

\subsection{Complexity Analysis}

We begin by conducting a computational complexity analysis of the proposed FSSMSC. For updating $\mathbf{Z}$, the complexity is $\mathcal{O}(m^2n+mn_\ell^2)$. The update of $\mathbf{q}$ involves a complexity of $\mathcal{O}(m^2n)$. The complexity of updating $\mathbf{A}$ is $\mathcal{O}(m^2n+mn_\ell^2+m^3)$, while updating $\mathbf{B}$ requires $\mathcal{O}(dmn+m^3+m^2n)$, where $d$ stands for the total dimension of all features. The updates for $\Lambda$ incur a complexity of $\mathcal{O}(mn)$. Therefore, the overall computational complexity of FSSMSC amounts to $\mathcal{O}(m^2n+mn_\ell^2+m^3+dmn)$.
Moving on to the space complexity evaluation, considering matrices such as $\mathbf{X}^{(v)}\in \mathbb{R}^{d_v\times n}$, $\mathbf{U}^{(v)}\in \mathbb{R}^{d_v\times m}$ for $v=1,...,V$, $\mathbf{L_M},\mathbf{L_C}\in \mathbb{R}^{n_\ell\times n_\ell}$, $\mathbf{Z,B},\Lambda \in \mathbb{R}^{m\times n}$, and $\mathbf{A}\in \mathbb{R}^{k\times m}$, the aggregate space complexity totals $\mathcal{O}(dn+n_\ell^2+mn+kn)$.

In summary, the proposed method demonstrates linear complexity, both in computation and space, with respect to the data size $n$.

\subsection{Convergence Analysis}

We denote $\{\mathbf{A}^{k},\mathbf{Z}^{k},\mathbf{q}^k,\mathbf{B}^k,\Lambda^k\}$ as the variables generated by Algorithm \ref{alg: FSSMSC} at iteration $k$, and establish convergence guarantees for Algorithm \ref{alg: FSSMSC}.

\begin{lemma}
\label{lemma: main lemma main}
Let $h(\mathbf{B})=\frac{1}{2}\sum_{v=1}^V\|\mathbf{X}^{(v)}-\mathbf{U}^{(v)}\mathbf{B}\|^2$ and $P,Q$ are given in \eqref{eq:P} and \eqref{eq:Q}. Then  $\nabla_\mathbf{B} h(\mathbf{B})$ is Lipschitz continuous with a Lipschitz constant $L_h>0$. $\nabla_\mathbf{Z} P(\mathbf{Z})$ is Lipschitz continuous in $\mathcal{S}_1$ with a Lipschitz constant $\hat{L}_Z>0$. $\nabla_\mathbf{q} Q(\mathbf{q})$ is Lipschitz continuous in $\mathcal{S}_2$ with a Lipschitz constant $\hat{L}_q>0$.
\end{lemma}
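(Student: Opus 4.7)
My plan is to dispatch the three Lipschitz claims by exploiting (i) that all of the variables $\mathbf{A},\mathbf{Z},\mathbf{q}$ live in compact sets thanks to the explicit bounds $\mathbf{A}\mathbf{A}^\top=\mathbf{I}$, $0\le \mathbf{Z}\le m_Z$, and $\epsilon_q\le \mathbf{q}\le C_q$, and (ii) that the constants $\epsilon,\epsilon_q>0$ keep every denominator strictly positive. Throughout I will treat the variables not being differentiated as fixed parameters bounded in the corresponding compact set. The guiding principle is the elementary fact that a $C^2$ function on a compact convex set has a Lipschitz gradient, with Lipschitz constant controlled by the supremum of the operator norm of its Hessian.

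For the first claim, $h(\mathbf{B})$ is a quadratic in $\mathbf{B}$ with constant Hessian $\sum_{v=1}^V \mathbf{U}^{(v)\top}\mathbf{U}^{(v)}$, so one can simply take
\begin{equation*}
L_h = \Big\|\sum_{v=1}^V \mathbf{U}^{(v)\top}\mathbf{U}^{(v)}\Big\|_2,
\end{equation*}
and Lipschitz continuity of $\nabla_{\mathbf{B}} h$ on all of $\mathbb{R}^{m\times n}$ follows at once.

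For the second claim, I decompose $P(\mathbf{Z})$ into its constituent pieces. The linear and quadratic terms $\lambda_Z\sum_{ij}\mathbf{Z}_{ij}$, $-(\Lambda,\mathbf{Z})$, and $\tfrac{\lambda}{2}\|\mathbf{B}-\mathbf{Z}\|^2$ trivially have (globally) Lipschitz gradients. The penalty $\tfrac{\lambda_0}{2}\|\mathbf{q}-\mathbf{Z}\mathbf{Z}^\top\mathbf{1}_m\|^2$ is a polynomial of degree four in $\mathbf{Z}$; since $\mathcal{S}_1$ is compact, its Hessian is bounded on $\mathcal{S}_1$, yielding a local Lipschitz constant. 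The real work is the ratio term. Because $\phi_\mathcal{C}(\mathbf{A},\mathbf{Z})\ge 0$, the denominator satisfies $\phi_\mathcal{C}(\mathbf{A},\mathbf{Z})+\epsilon\ge\epsilon$, so the rational function is $C^\infty$ in $\mathbf{Z}$ on $\mathcal{S}_1$. Using $\mathbf{q}\in\mathcal{S}_2$ we also have $\mathbf{D}_{\mathbf{q}}^{-1/2}$ bounded entrywise by $\epsilon_q^{-1/2}$, so $\tilde{\phi}_s(\mathbf{A},\mathbf{Z},\mathbf{q})$ is a polynomial in $\mathbf{Z}$ with coefficients bounded in $\mathbf{A}$ and $\mathbf{q}$. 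The numerator and denominator, together with their first and second $\mathbf{Z}$-derivatives, are therefore uniformly bounded on the compact set $\mathcal{S}_0\times\mathcal{S}_1\times\mathcal{S}_2$, and by the quotient rule the same is true of the Hessian of the rational term; the supremum of the operator norm of this Hessian provides the local Lipschitz constant $\hat{L}_Z$.

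For the third claim the argument is analogous and in fact cleaner. On $\mathcal{S}_2$ the map $\mathbf{q}\mapsto \mathbf{D}_{\mathbf{q}}^{-1/2}$ is real-analytic since each coordinate stays in $[\epsilon_q,C_q]\subset(0,\infty)$; consequently $\tilde{\phi}_s(\mathbf{A},\mathbf{Z},\mathbf{q})$ is $C^\infty$ in $\mathbf{q}$ there, the denominator $\phi_\mathcal{C}(\mathbf{A},\mathbf{Z})+\epsilon$ does not depend on $\mathbf{q}$ and is bounded below by $\epsilon$, and $\tfrac{\lambda_0}{2}\|\mathbf{q}-\mathbf{Z}\mathbf{Z}^\top\mathbf{1}_m\|^2$ is quadratic in $\mathbf{q}$. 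Hence $Q$ is $C^2$ with Hessian bounded on the compact set $\mathcal{S}_2$ (uniformly in $\mathbf{A}\in\mathcal{S}_0$ and $\mathbf{Z}\in\mathcal{S}_1$), giving the required constant $\hat{L}_q$. The main technical obstacle is thus a single one, carried out in parallel for the two ratio terms: to write down the Hessian of the quotient $(\tau_1+\epsilon)/(\tau_2)$ explicitly (using the formulas for $\nabla_\mathbf{Z}P$ and $\nabla_\mathbf{q}Q$ already given in the paper) and verify that each factor is bounded by a constant depending only on $m_Z,C_q,\epsilon_q,\epsilon$ and the problem parameters; the non-degeneracy of the denominators is precisely what was engineered by introducing the bounds $\epsilon$, $\epsilon_q$, $C_q$, and $m_Z$.
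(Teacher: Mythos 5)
Your proposal is correct, and it reaches the same structural conclusions as the paper's proof, but it gets there by a softer route. You invoke the general principle that a $C^2$ function on a compact convex set has a Lipschitz gradient with constant given by the supremum of the Hessian's operator norm, and you verify its hypotheses: the box constraints make $\mathcal{S}_1$ and $\mathcal{S}_2$ compact and convex, $\mathbf{A}\mathbf{A}^\top=\mathbf{I}$ makes $\mathcal{S}_0$ compact, and the positive semi-definiteness of $\mathbf{L}_\mathbf{C}$ together with the additive $\epsilon$ keeps the denominator $\phi_\mathcal{C}+\epsilon\ge\epsilon$ everywhere, while $\mathbf{q}\ge\epsilon_q$ keeps $\mathbf{D}_\mathbf{q}^{-1/2}$ and $\mathbf{D}_\mathbf{q}^{-3/2}$ bounded. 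The paper instead never forms a Hessian: it writes $g=g_1/g_2$, computes the first-order gradients of $g_1$ and $g_2$ explicitly, and bounds $\|\nabla g(\cdot_1)-\nabla g(\cdot_2)\|$ directly via the quotient rule, the triangle inequality, the mean value theorem, and elementary estimates such as $\|\mathbf{q}_1^{-3/2}-\mathbf{q}_2^{-3/2}\|\le \frac{3C_q^2}{2\epsilon_q^{9/2}}\|\mathbf{q}_1-\mathbf{q}_2\|$; this yields explicit, computable constants ($L_{g_1},L_{g_2},M_{g_1},M_{g_2},C_{g_0}$, etc.), which is relevant because \cref{theorem: main theorem main} requires $\eta_z<1/\hat{L}_Z$ and $\eta_q<1/\hat{L}_q$, so in principle one wants to know the constants, not merely their existence. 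Your argument is shorter and arguably cleaner, but it leaves the Hessian bound as an unexecuted (if routine) computation and produces no explicit constants; you also correctly flag the one point that must be checked for the theorem to make sense, namely that the bounds are uniform over $\mathbf{A}\in\mathcal{S}_0$, $\mathbf{q}\in\mathcal{S}_2$ (and hence over the iteration index). Two trivial slips: the quotient is $\tau_1/\tau_2$ with $\epsilon$ already absorbed into $\tau_2=\phi_\mathcal{C}+\epsilon$, not $(\tau_1+\epsilon)/\tau_2$; and for $h$ the Frobenius-to-operator-norm bookkeeping should be stated if you want $L_h$ exactly as written, though the conclusion is unaffected.
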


\begin{theorem} 
\label{theorem: main theorem main}
If $\lambda> L_h+2+2L_h^2$, $\eta_q<\frac{1}{\hat{L}_q}$, and $\eta_z<\frac{1}{\hat{L}_Z}$, the following properties hold:

(1) $\mathcal{L}_\lambda(\mathbf{A}^{k},\mathbf{Z}^{k},\mathbf{q}^k,\mathbf{B}^k,\Lambda^k)$ in \eqref{eq:L} is lower bounded;

(2) There exists a constant $C>0$ such that
\begin{align}
&\mathcal{L}_\lambda(\mathbf{A}^{k},\mathbf{Z}^{k},\mathbf{q}^k,\mathbf{B}^k,\Lambda^k) 
- \mathcal{L}_\lambda(\mathbf{A}^{k+1},\mathbf{Z}^{k+1},\mathbf{q}^{k+1},\mathbf{B}^{k+1},\Lambda^{k+1}) \nonumber \\
\geq &C \left(\|\mathbf{B}^{k+1}-\mathbf{B}^k\|^2+ \|\mathbf{q}^{k+1}-\mathbf{q}^{k}\|^2\right), \forall k\in \mathbb{N};
\end{align}

(3) When $k\rightarrow \infty$, we have
\begin{align}
&\|\mathbf{B}^{k+1}-\mathbf{B}^k\|\rightarrow 0,~~\|\mathbf{Z}^{k+1}-\mathbf{Z}^k\|\rightarrow 0, \nonumber \\ &\|\Lambda^{k+1}-\Lambda^k\|\rightarrow 0,~~\|\mathbf{q}^{k+1}-\mathbf{q}^{k}\| \rightarrow 0. 
\end{align}
\end{theorem}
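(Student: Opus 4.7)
The plan is to follow the standard framework for convergence of nonconvex ADMM, exploiting the fact that $\mathbf{B}$ is a smoothly coupled block whose subproblem is exactly solved. The cornerstone is the first-order optimality condition for $\mathbf{B}^{k+1}$, namely $\nabla h(\mathbf{B}^{k+1}) + \Lambda^k + \lambda(\mathbf{B}^{k+1}-\mathbf{Z}^{k+1}) = \mathbf{0}$, which combined with $\Lambda^{k+1} = \Lambda^k + \lambda(\mathbf{B}^{k+1}-\mathbf{Z}^{k+1})$ yields the identity $\Lambda^{k+1} = -\nabla h(\mathbf{B}^{k+1})$. By Lemma~\ref{lemma: main lemma main} this delivers the essential dual-to-primal bound $\|\Lambda^{k+1}-\Lambda^k\| \le L_h\|\mathbf{B}^{k+1}-\mathbf{B}^k\|$, converting the troublesome multiplier ascent into a controllable fraction of a primal increment.

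For the descent property~(2), I would decompose the change of $\mathcal{L}_\lambda$ across the five updates. The $\mathbf{A}$-step is a global maximizer of the trace ratio~\eqref{eq:solve A2}, which, after dropping $\mathbf{A}$-independent terms, coincides with minimizing $\mathcal{L}_\lambda$ over $S_0$, so it cannot increase $\mathcal{L}_\lambda$. The $\mathbf{Z}$ and $\mathbf{q}$ projected-gradient steps with $\eta_z<1/\hat{L}_Z$ and $\eta_q<1/\hat{L}_q$ yield, by the standard descent lemma on the respective projection sets together with Lemma~\ref{lemma: main lemma main}, decreases of magnitudes $\tfrac{1}{2}(\tfrac{1}{\eta_z}-\hat{L}_Z)\|\mathbf{Z}^{k+1}-\mathbf{Z}^k\|^2$ and $\tfrac{1}{2}(\tfrac{1}{\eta_q}-\hat{L}_q)\|\mathbf{q}^{k+1}-\mathbf{q}^k\|^2$. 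The $\mathbf{B}$-subproblem is $\lambda$-strongly convex (the Hessian is $\sum_v\mathbf{U}^{(v)\top}\mathbf{U}^{(v)}+\lambda\mathbf{I}$), giving at least $\tfrac{\lambda}{2}\|\mathbf{B}^{k+1}-\mathbf{B}^k\|^2$ of decrease. The multiplier ascent contributes $\tfrac{1}{\lambda}\|\Lambda^{k+1}-\Lambda^k\|^2 \le \tfrac{L_h^2}{\lambda}\|\mathbf{B}^{k+1}-\mathbf{B}^k\|^2$ upward. Summing these and invoking $\lambda > L_h + 2 + 2L_h^2$ makes the net coefficient of $\|\mathbf{B}^{k+1}-\mathbf{B}^k\|^2$ strictly positive, with the extra terms $2+2L_h^2$ absorbing cross-block residuals that arise through the relation $\mathbf{B}^{k+1}-\mathbf{Z}^{k+1} = \tfrac{1}{\lambda}(\Lambda^{k+1}-\Lambda^k)$, yielding a uniform $C>0$.

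For~(1), substituting $\Lambda^{k+1} = -\nabla h(\mathbf{B}^{k+1})$ into $\mathcal{L}_\lambda$ at iterate $k+1$ and invoking the Lipschitz descent inequality for $h$ gives
\begin{equation*}
\mathcal{L}_\lambda^{k+1} \;\ge\; h(\mathbf{Z}^{k+1}) + \lambda_Z\|\mathbf{Z}^{k+1}\|_1 + \tfrac{\lambda-L_h}{2}\|\mathbf{B}^{k+1}-\mathbf{Z}^{k+1}\|^2 + (\textrm{nonneg.\ terms}),
\end{equation*}
since the trace-ratio term, the quadratic penalty in $\mathbf{q}$, and $h, \|\cdot\|_1$ are all nonnegative, and the indicator functions vanish on feasible iterates. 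Property~(3) then follows by telescoping: combining~(1) and~(2) yields $\sum_k (\|\mathbf{B}^{k+1}-\mathbf{B}^k\|^2 + \|\mathbf{q}^{k+1}-\mathbf{q}^k\|^2) < \infty$, so both summands tend to zero; the dual bound then gives $\|\Lambda^{k+1}-\Lambda^k\|\to 0$; and writing $\mathbf{Z}^{k+1}-\mathbf{Z}^k = (\mathbf{B}^{k+1}-\mathbf{B}^k) - \tfrac{1}{\lambda}\bigl[(\Lambda^{k+1}-\Lambda^k)-(\Lambda^k-\Lambda^{k-1})\bigr]$ forces $\|\mathbf{Z}^{k+1}-\mathbf{Z}^k\|\to 0$. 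The main obstacle will be the careful bookkeeping in the sufficient-decrease step: the trace-ratio objective couples $\mathbf{A}, \mathbf{Z}, \mathbf{q}$ nonconvexly, and the precise threshold $\lambda > L_h + 2 + 2L_h^2$ must be calibrated so that all cross-block residuals controlled by the Lipschitz constants of Lemma~\ref{lemma: main lemma main} are absorbed simultaneously into a single positive constant $C$.
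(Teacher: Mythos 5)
Your overall architecture matches the paper's: the identity $\Lambda^{k+1}=-\nabla h(\mathbf{B}^{k+1})$ from the exact $\mathbf{B}$-minimization, the dual-to-primal bound $\|\Lambda^{k+1}-\Lambda^k\|\le L_h\|\mathbf{B}^{k+1}-\mathbf{B}^k\|$, blockwise sufficient decrease for the $\mathbf{Z}$ and $\mathbf{q}$ proximal-gradient steps under $\eta_z<1/\hat L_Z$, $\eta_q<1/\hat L_q$, and the telescoping argument for part (3) are all exactly what the paper does. Your $\mathbf{B}$-step accounting via $\lambda$-strong convexity is a harmless variant of the paper's descent-lemma computation (the paper obtains the coefficient $\tfrac{\lambda-L_h}{2}-\tfrac{L_h^2}{\lambda}$; your strong-convexity route gives $\tfrac{\lambda}{2}-\tfrac{L_h^2}{\lambda}$, and either is positive under the stated condition on $\lambda$). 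Parts (2) and (3) are essentially correct as proposed.

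There is, however, a genuine gap in your part (1). You assert that the trace-ratio term in $\mathcal{L}_\lambda$ is nonnegative and can simply be dropped. This is false: its numerator contains $\operatorname{Tr}\bigl(\mathbf{A}(\mathbf{I}-\mathbf{D}_\mathbf{q}^{-1/2}\mathbf{Z}\mathbf{Z}^\top\mathbf{D}_\mathbf{q}^{-1/2})\mathbf{A}^\top\bigr)$, and because $\mathbf{q}$ is an \emph{auxiliary} variable only penalized toward $\mathbf{Z}\mathbf{Z}^\top\mathbf{1}_m$ rather than constrained to equal it, the matrix $\mathbf{D}_\mathbf{q}-\mathbf{Z}\mathbf{Z}^\top$ need not be positive semi-definite; taking $\mathbf{q}$ near its lower bound $\epsilon_q\mathbf{1}$ while $\mathbf{Z}$ is large makes this trace arbitrarily negative. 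The lower bound of $\mathcal{L}_\lambda$ is in fact the delicate part of the theorem: the paper handles it by splitting $\mathbf{D}_\mathbf{q}-\mathbf{Z}\mathbf{Z}^\top=(\mathbf{D}_\mathbf{q}-\mathbf{D}_\mathbf{Z})+(\mathbf{D}_\mathbf{Z}-\mathbf{Z}\mathbf{Z}^\top)$, using positive semi-definiteness of the true Laplacian $\mathbf{D}_\mathbf{Z}-\mathbf{Z}\mathbf{Z}^\top$, and bounding the remainder by $-\tfrac{m\beta C_A^2}{\epsilon\epsilon_q}\|\mathbf{q}-\mathbf{Z}\mathbf{Z}^\top\mathbf{1}_m\|$; this linear-in-the-violation negative term is then absorbed into the quadratic penalty $\tfrac{\lambda_0}{2}\|\mathbf{q}-\mathbf{Z}\mathbf{Z}^\top\mathbf{1}_m\|^2$ by completing the square, yielding the uniform lower bound $-\tfrac{\lambda_0}{2}\bigl(\tfrac{m\beta C_A^2}{\lambda_0\epsilon\epsilon_q}\bigr)^2$. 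Without this step your lower-bound inequality does not hold, and part (3) (which needs summability of the decrements, hence boundedness from below) would collapse. The rest of your part (1) — rewriting $h(\mathbf{B}^k)+\langle\Lambda^k,\mathbf{B}^k-\mathbf{Z}^k\rangle$ via $\Lambda^k=-\nabla h(\mathbf{B}^k)$ and the Lipschitz descent inequality to extract $h(\mathbf{Z}^k)+\tfrac{\lambda-L_h}{2}\|\mathbf{B}^k-\mathbf{Z}^k\|^2$ — is exactly the paper's argument and is fine. A secondary remark: your reading of the threshold $\lambda>L_h+2+2L_h^2$ as "absorbing cross-block residuals" is not what happens; it is merely a sufficient condition for $\tfrac{\lambda-L_h}{2}-\tfrac{L_h^2}{\lambda}>0$ (equivalently $\lambda>2L_h$), and no additional cross-block terms need to be absorbed.
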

The proofs for the aforementioned lemma and theorem are available in the appendix.

\begin{table}[!ht]
\centering
\caption{Statistics of the datasets}
\label{tab: multi-view datasets}
\begin{tabular}{@{}lcccc@{}} 
\toprule
Dataset & $\#$Samples & $\#$Views & $\#$Categories \\
\midrule
Caltech7 & 1,474 & 6 & 7 \\
HW & 2,000 & 6 & 10 \\
Caltech20 & 2,386 & 6 & 20 \\
Reuters & 18,758 & 5 & 6 \\
NUS & 30,000 & 5 & 31 \\
Caltech101 & 9,144 & 6 & 102 \\
YoutubeFace$\_$sel & 101,499 & 5 & 31 \\
\bottomrule
\end{tabular}
\end{table}

\section{Experiments}

\begin{table}[!ht]
\centering
\footnotesize
\caption{Comparison of clustering performance and running time. The best results are highlighted in boldface, and the second-best results are underlined. OM means Out-of-Memory.}
\label{tab: multi-view results}
\setlength{\tabcolsep}{1.14mm}{
\begin{tabular*}{\textwidth}{@{\extracolsep\fill}lcccccccc}
\toprule
Dataset  & FPMVS  & LMVSC & MVAR  &  MLAN & AMGL & GPSNMF & CONMF & FSSMSC\\
\midrule
\multicolumn{9}{c}{\textbf{ACC($\%$)}} \\
\midrule
Caltech7 &  57.1 & 70.6 & 91.7 $\pm$ 1.7 & 91.6 $\pm$ 2.4 & 85.3 $\pm$ 1.6 & \underline{93.1} $\pm$ 1.5  & 92.7 $\pm$ 1.6 & \textbf{93.7} $\pm$ 1.9 \\
HW  & 82.3 & 89.4 & 95.6 $\pm$ 2.0 & \textbf{97.3} $\pm$ 0.2 & 86.1 $\pm$ 2.3 & \underline{97.1} $\pm$ 0.8 & 97.0 $\pm$ 0.5 & \underline{97.1} $\pm$ 0.5 \\
Caltech20  & 66.4 & 48.8 & 78.9 $\pm$ 1.8 & 78.1 $\pm$ 2.2 & 68.4 $\pm$ 3.4 & \underline{80.8} $\pm$ 1.4 & 80.0 $\pm$ 1.2 & \textbf{82.4} $\pm$ 1.8 \\
Reuters  & 57.5 & 49.6 & OM & OM & 63.1 $\pm$ 1.7 & 73.1 $\pm$ 1.3 & \underline{75.0} $\pm$ 1.0 & \textbf{75.8} $\pm$ 2.4\\
NUS & 19.4 & 14.9 & 25.5 $\pm$ 1.8 & OM & 19.1 $\pm$ 1.3 & \underline{28.3} $\pm$ 1.0 & 24.8 $\pm$ 1.0 & \textbf{29.2} $\pm$ 1.6\\
Caltech101 &  28.8 & 25.6 & 44.3 $\pm$ 0.8 & 45.2 $\pm$ 0.6 & 39.6 $\pm$ 1.0 & 43.1 $\pm$ 0.8 & \textbf{47.3} $\pm$ 0.8 & \underline{47.1} $\pm$ 0.8\\
Youtube\footnotemark[1] & 24.1 & 24.8 & OM & OM & OM & \underline{39.9} $\pm$ 2.0  & OM & \textbf{42.3} $\pm$ 0.6\\
\midrule
\multicolumn{9}{c}{\textbf{NMI ($\%$)}} \\
\midrule
Caltech7  & 55.6 & 48.1  & 78.3 $\pm$ 3.8 & 79.7 $\pm$ 3.5 & 66.4 $\pm$ 2.7 & \underline{82.6} $\pm$ 3.5 & 81.2 $\pm$ 2.1 & \textbf{84.3} $\pm$ 4.2  \\
HW & 79.2 &  83.7 & 91.7 $\pm$ 1.1 & \textbf{94.2} $\pm$ 0.5 & 86.0 $\pm$ 2.6 & \underline{93.7} $\pm$ 1.2 & 93.3 $\pm$ 0.8 & 93.4 $\pm$ 1.0 \\
Caltech20   & 63.9 & 59.5 & 67.4 $\pm$ 2.6 & 68.6 $\pm$ 2.6 & 56.5 $\pm$ 3.7 & \underline{72.6} $\pm$ 1.7 & 69.9 $\pm$ 1.6 & \textbf{74.6} $\pm$ 1.5\\
Reuters &  35.9 & 35.6 & OM & OM & 36.3 $\pm$ 2.4 & 47.0 $\pm$ 2.0 & \underline{49.7} $\pm$ 0.9 & \textbf{50.9} $\pm$ 2.7 \\
NUS  & 13.5 &  12.5 & 12.2 $\pm$ 1.0 & OM & 6.7 $\pm$ 1.1 & \underline{14.4} $\pm$ 0.6 & 13.0 $\pm$ 0.2 & \textbf{17.7} $\pm$ 1.0\\
Caltech101  & 41.8 & \underline{48.5} & 41.5 $\pm$ 0.8 & 46.1 $\pm$ 0.8 & 38.1 $\pm$ 0.8 & 44.0 $\pm$ 0.7 & \textbf{50.9} $\pm$ 0.5 & \textbf{50.9} $\pm$ 0.6\\
Youtube & 24.5 & 22.2 & OM & OM & OM & \underline{26.8} $\pm$ 1.6 & OM & \textbf{27.6} $\pm$ 0.6 \\
\midrule
\multicolumn{9}{c}{\textbf{ARI ($\%$)}} \\
\midrule
Caltech7  & 41.3 & 43.0  & 87.6 $\pm$ 4.1 & 85.4 $\pm$ 6.0 & 65.4 $\pm$ 4.1 & \underline{92.6} $\pm$ 3.0 & 92.4 $\pm$ 2.1 &  \textbf{93.6} $\pm$ 4.1\\
HW  & 72.7 & 80.0 & 91.1 $\pm$ 2.2 & \textbf{94.1} $\pm$ 0.5 & 78.6 $\pm$ 4.0 & \underline{93.7} $\pm$ 1.5 & 93.5 $\pm$ 1.1 & 93.6 $\pm$ 1.1\\
Caltech20 & 63.6 & 33.3 & 80.6 $\pm$ 3.7 & 65.8 $\pm$ 6.5 & 39.0 $\pm$ 7.6 & \underline{84.6} $\pm$ 2.2 & 81.0 $\pm$ 2.9 & \textbf{88.8} $\pm$ 1.5\\
Reuters  & 33.2 & 22.1 & OM & OM & 33.1 $\pm$ 4.2 & 48.4 $\pm$ 2.1 & \underline{51.9} $\pm$ 1.7 & \textbf{55.1} $\pm$ 3.9\\
NUS  & 6.5 & 4.9 & 6.7 $\pm$ 1.8 & OM & 2.8 $\pm$ 1.6 & \underline{10.8} $\pm$ 0.7 & 10.7 $\pm$ 0.5 & \textbf{13.8} $\pm$ 2.2\\
Caltech101 & 17.4 & 19.1 & 41.5 $\pm$ 2.4 & 35.7 $\pm$ 3.5 & 20.6 $\pm$ 3.3 & 39.2 $\pm$ 1.9 & \underline{63.5} $\pm$ 1.8 & \textbf{63.8} $\pm$ 1.2\\
Youtube & 6.0 & 4.6 & OM & OM & OM & \underline{7.8} $\pm$ 2.1 & OM & \textbf{13.6} $\pm$ 0.6\\
\midrule
\multicolumn{9}{c}{\textbf{Running Time ($s$)}} \\
\midrule
Caltech7  & 23.2 & 251.7 & 4.6 & 10.9 & \textbf{0.9} & 8.4 & 12.9 & \underline{2.8} \\
HW &  25.5 & 8.6 & \underline{3.9} & 11.6 & \textbf{1.5} & 7.7 & 6.9 & 4.8\\
Caltech20  & 26.4 & 28.8 & 18.7 & 23.4 & \textbf{2.5} & 14.2 & 24.5 & \underline{6.3}\\
Reuters & 1342.1 & \underline{317.7} & OM & OM & 326.3 & 1206.3 & 5096.6 & \textbf{109.5}\\
NUS & 1113.2 & 1879.4 & 763.9 & OM & 990.4 & \underline{145.4} & 1431.4 & \textbf{40.7}\\
Caltech101  & 684.9 & 2558.1 & 137.1 & 412.8 & \underline{47.6} & 54.3 & 199.8 & \textbf{20.5}\\
Youtube & 3187.1 & 1152.7 & OM & OM & OM & \underline{1128.4} & OM & \textbf{209.8} \\
\bottomrule
\end{tabular*}}
\footnotetext[1]{Youtube is an abbreviation of the dataset YoutubeFace$\_$sel.}
\end{table}

In this section, we conduct extensive experiments on seven benchmark multi-view datasets to evaluate the effectiveness and efficiency of the proposed method.

\subsection{Experimental Settings}

\subsubsection{Datasets} We perform experiments on seven datasets widely employed for multi-view clustering: Handwritten (HW) \cite{misc_multiple_features_72}, 
Caltech101 \cite{li_andreeto_ranzato_perona_2022},  
Caltech7, Caltech20, Reuters \cite{amini2009learning}, NUS-WIDE-Object (NUS) \cite{chua2009nus}, and YoutubeFace$\_$sel \cite{wolf2011face,wang2021fast}. 
Specifically, HW comprises 2,000 handwritten digits from 0 to 9. Caltech101 is an object recognition dataset including 9,144 data points. Caltech7 and Caltech20 are two widely-used subsets of Caltech101. Reuters includes documents in five languages and their translations; we use the English-written subset and its translations, which consists of 18,758 data points in 6 categories. NUS is also a dataset for object recognition, comprising 30,000 data points in 31 categories. YoutubeFace$\_$sel is a video dataset containing 101,499 samples.
The key dataset statistics are presented in \cref{tab: multi-view datasets}.  

\subsubsection{Compared Methods}

We compare the proposed FSSMSC with
two scalable multi-view subspace clustering methods, including LMVSC \cite{kang2020large} and FPMVS \cite{wang2021fast}, and five state-of-the-art semi-supervised multi-view learning methods, including AMGL \cite{nie2016parameter}, MVAR \cite{tao2017scalable}, MLAN \cite{nie2017auto}, GPSNMF~\cite{liang2020semi}, and CONMF~\cite{liang2022co}.

\subsubsection{Evaluation Metrics} We evaluate the clustering performance with clustering accuracy (ACC) \cite{li2017robust}, Normalized Mutual Information (NMI) \cite{li2017robust}, Adjusted Rand Index (ARI) \cite{hubert1985comparing}, and running time in seconds. Since the proposed approach and the compared methods in the paper are implemented in MATLAB, we use the "tic" and "toc" functions to measure the elapsed time for each algorithm, with "tic" marking the start and "toc" marking the end of the execution of the algorithm. This elapsed time is reported as the algorithm's running time.

\subsubsection{Implementation Details} To ensure an equitable comparison, we acquired the released codes of the comparative methods and fine-tuned the parameters according to the corresponding papers to achieve optimal outcomes. 
In our approach, we consistently fix $\lambda$ and $\lambda_0$ as $100$, $m_Z$ as $1$, $C_q$ as $n$, $\epsilon$ and $\epsilon_q$ as $1\times 10^{-5}$, and $\eta_q$ and $\eta_z$ as $\frac{1}{\lambda}$. 
Moreover, we select $\lambda_M$ from $\{10^1,10^2,10^3,10^4,10^5\}$, $\lambda_Z$ from $\{0.01,0.02,0.05,0.07,0.1,0.2,0.5\}$, and $\beta$ from $\{10^{-5},10^{-4},10^{-3},10^{-2},10^{-1}\}$. The number of clusters is set as the number of data categories $k$. For LMVSC and FPMVS, the landmark number $m$ is explored from ${k,100,200,300,500}$, with the best clustering performance reported. For the proposed FSSMSC, $m=500$ is employed for most datasets, except for Caltech7 and Reuters, where $m=300$ is selected. For AMGL, MLAN, MVAR, GPSNMF, CONMF, and FSSMSC, labeled samples constitute $5\%$ of the total samples for HW, Caltech7, Caltech20, and Caltech101, and $1\%$ for the larger datasets, Reuters, NUS and YoutubeFace$\_$sel. Each experiment is repeated 20 times independently, and the average values and standard deviations are reported. All experiments are implemented in Matlab R2021a and run on a platform with a 2.10GHz Intel Xeon CPU and 128GB RAM.

\subsection{Experimental Results and Discussion}

\cref{tab: multi-view results} presents the clustering performance of different methods on the seven datasets. 
Based on the results depicted in Table \ref{tab: multi-view results}, the following observations can be made:
\begin{itemize}
\item Compared to the scalable multi-view subspace clustering methods, FPMVS and LMVSC, the proposed approach demonstrates a significant enhancement in clustering performance while utilizing a limited amount of supervisory information. For instance, FSSMSC exhibits improvements of $18.3\%$, $9.8\%$, and $17.5\%$ over datasets Reuters, NUS, and YoutubeFace$\_$sel, respectively, for the ACC metric. Furthermore, FSSMSC notably outperforms FPMVS and LMVSC regarding running time.

\item In comparison to the semi-supervised multi-view learning methods (MVAR, MLAN, AMGL, GPSNMF, and CONMF), the proposed FSSMSC demonstrates comparable or superior performance with remarkably enhanced computational efficiency, particularly on large datasets such as YoutubeFace$\_$sel, NUS, and Reuters.
Specifically, 
FSSMSC demonstrates improvements of $5.8\%$, $3.2\%$, and $3.0\%$ in ARI metric on datasets YoutubeFace$\_$sel, Reuters, and NUS, respectively, when contrasted with the five semi-supervised methods. 
Furthermore, FSSMSC exhibits computational efficiency advantages, running nearly 25 times faster than AMGL on NUS, 15 times faster than MVAR on NUS, 10 times faster than GPSNMF on Reuters, 20 times faster than MLAN on Caltech101, and 45 times faster than CONMF on Reuters.

 \item When conducting experiments on the largest dataset, YoutubeFace$\_$sel, the semi-supervised methods MVAR, MLAN, AMGL, and CONMF faced out-of-memory issues. The proposed FSSMSC demonstrates improvements of $2.4\%$ 
 and $5.8\%$ over GPSNMF in terms of the ACC and ARI metrics, respectively, achieving this enhancement with significantly reduced running time.
\end{itemize}

These experiments verify the scalability, effectiveness, and efficiency of the proposed method.

\begin{figure}[!ht]
\centering
\includegraphics[width=1.0\textwidth]{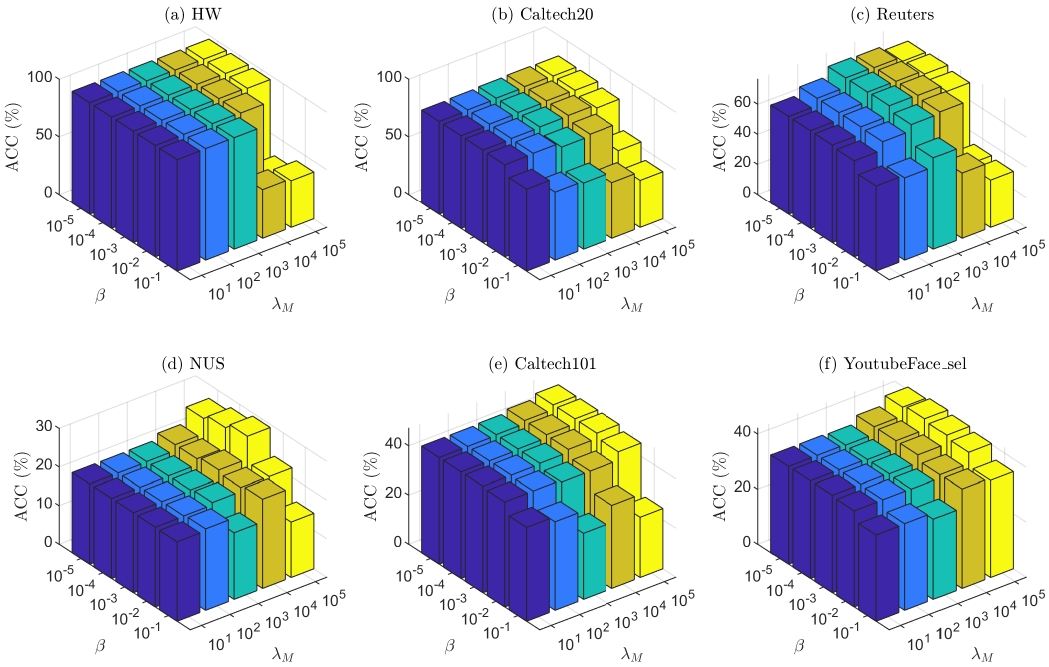}
\caption{Clustering accuracy with different value of $\lambda_M$ and $\beta$ on six datasets.}
\label{fig: lambdaMbeta sensitivity analysis on all datasets}
\end{figure}

\begin{figure}
\centering
\includegraphics[width=1.0\textwidth]{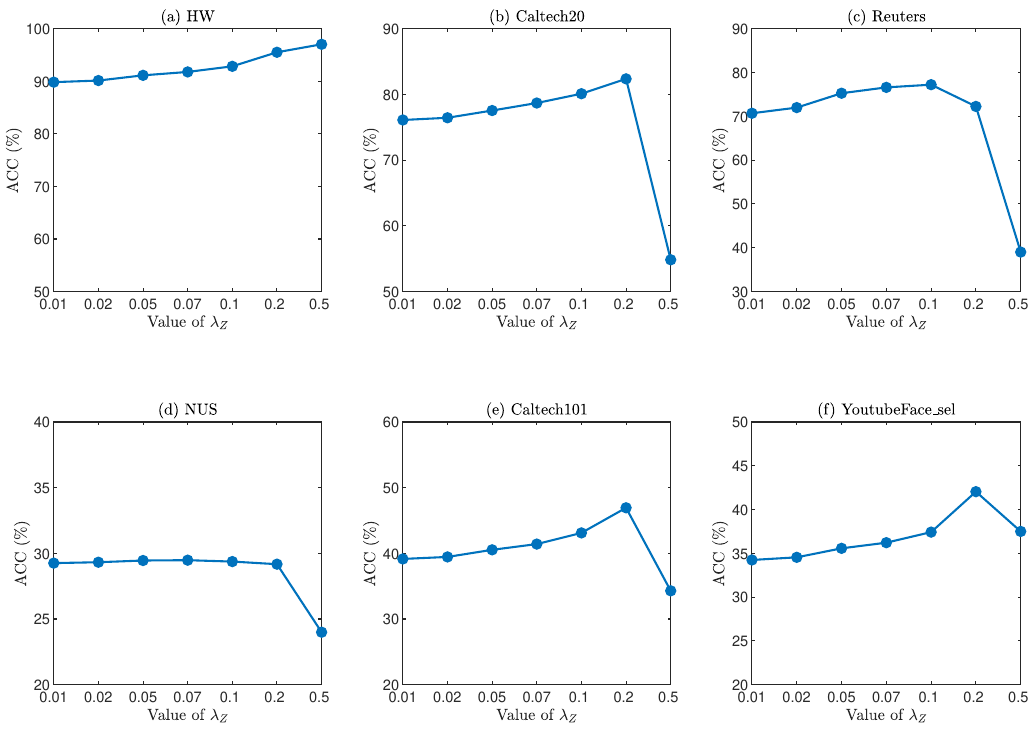}
\caption{Clustering accuracy with different value of $\lambda_Z$ on six datasets.}
\label{fig: lambdaZ sensitivity analysis on all datasets}
\end{figure}

\begin{figure}[!ht]
\centering
\includegraphics[width=1.0\textwidth]{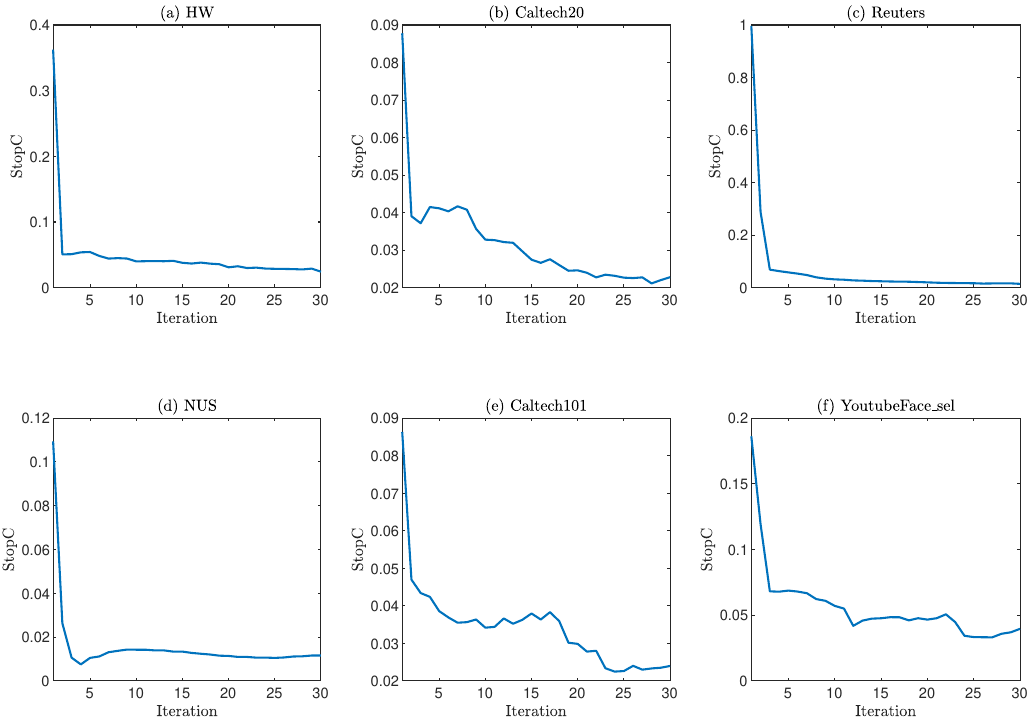}
\caption{The numerical convergence curves for six datasets, where the stopping criteria at iteration $j$ is defined as: $\text{StopC}=\max(\|\mathbf{B}^{j+1}-\mathbf{B}^{j}\|_\infty,\|\mathbf{Z}^{j+1}-\mathbf{Z}^{j}\|_\infty)$.}
\label{fig: convergence curves for all datasets}
\end{figure}

\subsection{Sensitivity and Convergence Analysis}

In \cref{fig: lambdaMbeta sensitivity analysis on all datasets} and \cref{fig: lambdaZ sensitivity analysis on all datasets}, we conduct a sensitivity analysis for the three key parameters in FSSMSC, namely $\lambda_Z$, $\lambda_M$, and $\beta$, on six datasets. The following observations can be made from \cref{fig: lambdaMbeta sensitivity analysis on all datasets} and \cref{fig: lambdaZ sensitivity analysis on all datasets}:
(1) The parameter $\lambda_Z$, governing the sparsity level of $\mathbf{Z}$, significantly influences the clustering performance. Notably, the clustering accuracy exhibits more stable changes when the value of $\lambda_Z$ is relatively small;
(2) Among the three relatively large datasets (Caltech101, NUS, and YoutubeFace$\_$sel), higher values of $\lambda_M$ such as $1 \times 10^5$ and smaller values of $\beta$ such as $1 \times 10^{-4}$ and $1 \times 10^{-3}$ are advantageous.

Additionally, numerical convergence curves for six datasets are presented in \cref{fig: convergence curves for all datasets}. It can be observed that the stopping criterion decreases rapidly in the initial five iterations and continues to decrease throughout the iterations. In our experiments, we set the maximum iteration number to 30.

\subsection{Comparison of Semi-Supervised Methods under Different Labeled Ratios}

We conduct experiments to assess the performance of our proposed method under different ratios of labeled samples. The results are presented in \cref{fig: ACC on all datasets with different labeled ratios}, \cref{fig: NMI on all datasets with different labeled ratios}, \cref{fig: ARI on all datasets with different labeled ratios}, and \cref{fig: caltech7 with different labeled ratios}. The results are not included when an out-of-memory issue occurs.
These results show that our proposed FSSMSC performs comparably to or better than semi-supervised competitors (MVAR, MLAN, AMGL, GPSNMF, and CONMF) across different labeled sample ratios.  Additionally, as presented in \cref{tab: multi-view results}, FSSMSC demonstrates notably reduced running times on the large datasets—NUS, Reuters, and YoutubeFace$\_$sel. These results further validate the effectiveness of our approach.

\begin{figure}[!t]
\centering
\includegraphics[width=1.0\textwidth]{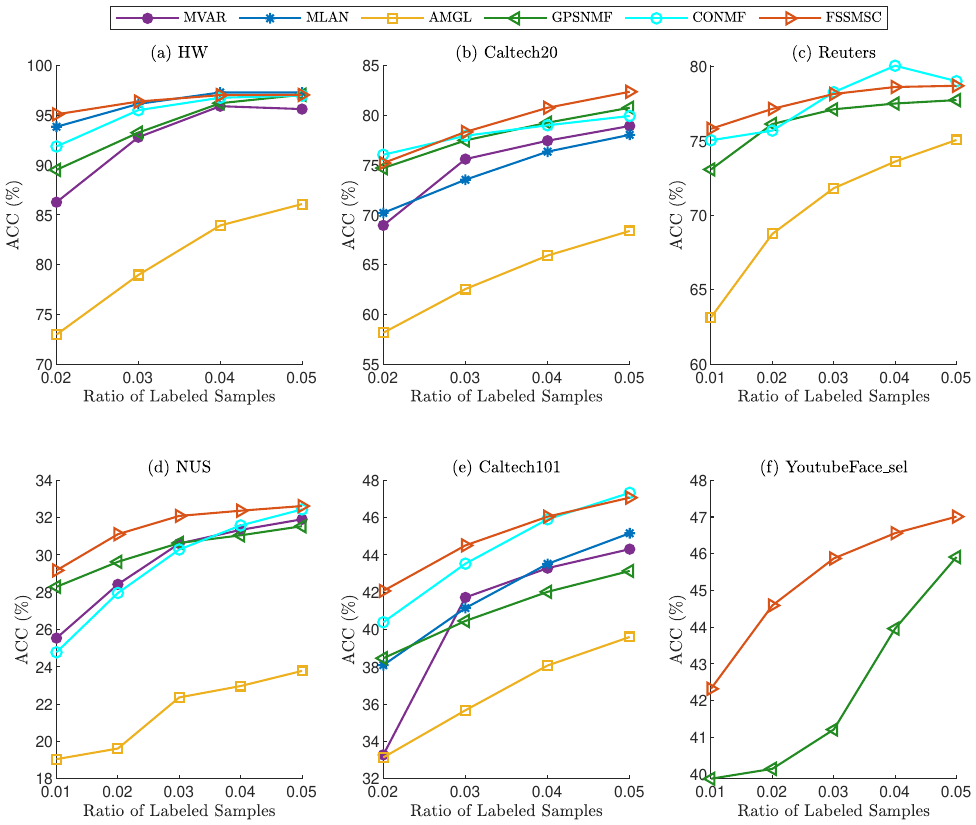}
\caption{The evaluation metric ACC with different ratio of labeled samples on six datasets.}
\label{fig: ACC on all datasets with different labeled ratios}
\end{figure}

\begin{figure}[!t]
\centering
\includegraphics[width=1.0\textwidth]{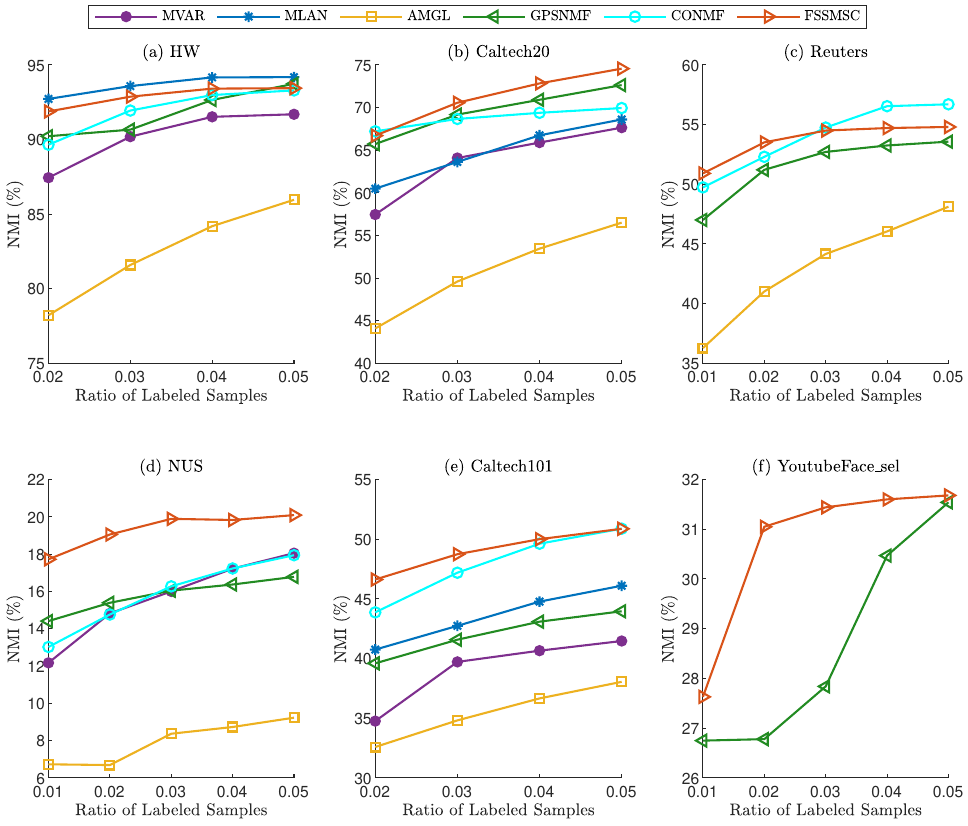}
\caption{The evaluation metric NMI with different ratio of labeled samples on six datasets.}
\label{fig: NMI on all datasets with different labeled ratios}
\end{figure}

\begin{figure}[!t]
\centering
\includegraphics[width=1.0\textwidth]{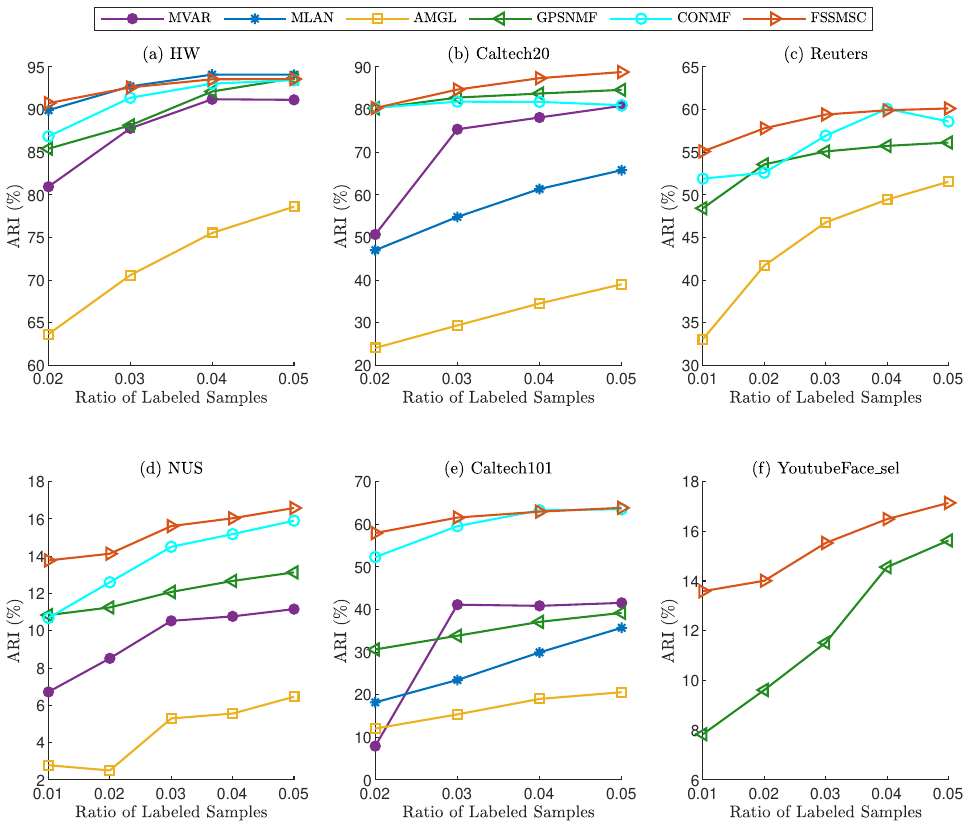}
\caption{The evaluation metric ARI with different ratio of labeled samples on six datasets.}
\label{fig: ARI on all datasets with different labeled ratios}
\end{figure}

\begin{figure}[!t]
\centering
\includegraphics[width=1.0\textwidth]{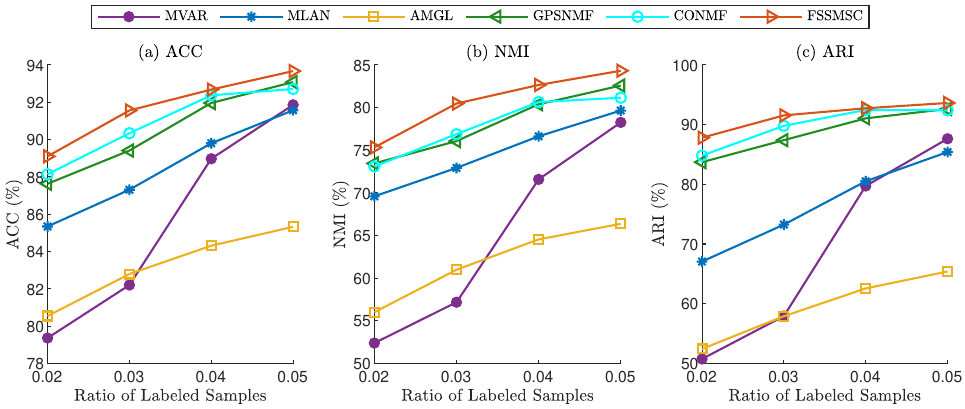}
\caption{The evaluation metrics with different ratio of labeled samples on the Caltech7 datasets.}
\label{fig: caltech7 with different labeled ratios}
\end{figure}

\subsection{Ablation Study}

We conduct an ablation study for each component of our optimization model \eqref{op: FSSMSC}. The results are presented in \cref{tab: ablation study complete}. These ablation experiments include:
\begin{itemize}
\item W/O $\|\mathbf{Z}\|_1$: Omitting the $\ell_1$ norm of $\mathbf{Z}$ in $\varphi(\mathbf{Z})$, resulting in $\varphi(\mathbf{Z}) = \frac{1}{2}\sum_{v=1}^V\|\mathbf{X}^{(v)}-\mathbf{U}^{(v)}\mathbf{Z}\|^2$. Since $\frac{1}{2}\sum_{v=1}^V\|\mathbf{X}^{(v)}-\mathbf{U}^{(v)}\mathbf{Z}\|^2$ is a necessary component to utilize the data features $\mathbf{X}^{(v)},v=1,\ldots,V$, we preserve this part in $\varphi(\mathbf{Z})$.
\item W/O $\phi_\mathcal{M}$: Excluding the component $\phi_\mathcal{M}(\mathbf{A},\mathbf{Z})$. 
\item W/O $\phi_\mathcal{C}$: Omitting the component $\phi_\mathcal{C}(\mathbf{A},\mathbf{Z})$. 
\item W/O $\phi_s$: Excluding the component $\phi_s(\mathbf{A},\mathbf{Z})$.
\end{itemize}
The results in \cref{tab: ablation study complete} demonstrate that the full FSSMSC model consistently achieves superior clustering performance in most cases compared to versions with individual components removed. This confirms the importance and effectiveness of each component in our proposed model.

\begin{table}[!ht]
\centering
\footnotesize
\caption{The clustering metrics ACC and NMI on different datasets. The best results are highlighted in boldface.
}
\label{tab: ablation study complete}
\begin{tabular}{@{}lcccccc@{}} 
\toprule
Dataset &  Reuters  & NUS & YoutubeFace$\_$sel & Caltech101 & Caltech20 & HW \\
\midrule
\multicolumn{7}{c}{\textbf{ACC($\%$)}} \\
\midrule
W/O $\|\mathbf{Z}\|_1$ & 68.1 $\pm$ 2.3 & \textbf{29.2} $\pm$ 1.2 & 34.5 $\pm$ 0.6 &   
        38.5 $\pm$ 0.6 & 75.8 $\pm$ 1.5 & 89.5 $\pm$ 1.9 \\
W/O $\phi_\mathcal{M}$ & 64.4 $\pm$ 2.4 & 20.8 $\pm$ 0.9 & 35.9 $\pm$ 0.8 &            
         43.4 $\pm$ 0.6 & 80.5 $\pm$ 1.5 & 96.9 $\pm$ 0.6  \\
W/O $\phi_\mathcal{C}$ & 69.5 $\pm$ 1.9 & 17.1 $\pm$ 1.3 & 24.7 $\pm$ 0.9 &            
          44.8 $\pm$ 0.7 & 82.2 $\pm$ 1.7 & \textbf{97.1} $\pm$ 0.5  \\
W/O $\phi_s$ & 58.0 $\pm$ 12.7 & 21.9 $\pm$ 4.5 & 30.2 $\pm$ 1.5 &                      
          42.9 $\pm$ 1.2 & 80.5 $\pm$ 4.2 & 86.8 $\pm$ 6.3 \\
FSSMSC &  \textbf{75.8} $\pm$ 2.4 & \textbf{29.2} $\pm$ 1.6 & \textbf{42.3} $\pm$ 0.6 & 
        \textbf{47.1} $\pm$ 0.8 & \textbf{82.4} $\pm$ 1.8 & \textbf{97.1} $\pm$ 0.5  \\
\midrule 
\multicolumn{7}{c}{\textbf{NMI($\%$)}} \\
\midrule
W/O $\|\mathbf{Z}\|_1$ & 41.3 $\pm$ 2.4 & 17.2 $\pm$ 1.0 & 21.2 $\pm$ 0.7 &            
           42.8 $\pm$ 0.6 & 64.7 $\pm$ 1.4 & 82.5 $\pm$ 1.5  \\
W/O $\phi_\mathcal{M}$ & 37.0 $\pm$ 2.6 & 11.0 $\pm$ 0.5 & 21.1 $\pm$ 0.6 &            
           47.3 $\pm$ 0.5 & 72.3 $\pm$ 1.4 & 93.2 $\pm$ 1.2  \\
W/O $\phi_\mathcal{C}$ & 44.2 $\pm$ 2.0 & 7.6 $\pm$ 0.9 & 9.7 $\pm$ 0.5 &              
           47.8 $\pm$ 0.7 & \textbf{75.1} $\pm$ 1.3 & \textbf{93.6} $\pm$ 0.9 \\
W/O $\phi_s$ & 28.1 $\pm$ 15.0 & 13.6 $\pm$ 4.5 & 16.9 $\pm$ 1.3 &                     
           46.8 $\pm$ 1.3 & 70.6 $\pm$ 6.4 & 77.8 $\pm$ 7.8  \\
FSSMSC &  \textbf{50.9} $\pm$ 2.7 & \textbf{17.7} $\pm$ 1.0 & \textbf{27.6} $\pm$ 0.6 & 
          \textbf{50.9} $\pm$ 0.6 & 74.6 $\pm$ 1.5 & 93.4 $\pm$ 1.0 \\
\bottomrule
\end{tabular}
\end{table}

Moreover, we perform an ablation study on the parameter $\beta$. FSSMSC$^*$ indicates the case where $\beta$ is fixed at 0 in \cref{alg: FSSMSC}, while FSSMSC denotes the usage of a tuned non-zero $\beta$. 
The principal distinction between FSSMSC$^*$ and FSSMSC lies in the $\mathbf{Z}$ update. In FSSMSC$^*$, $\mathbf{Z}$ is updated without utilizing the landmarks' representations $\mathbf{A}$. Conversely, in FSSMSC, $\mathbf{Z}$ is updated using both data features and landmarks' representations. 
Therefore, anchor graph learning in FSSMSC$^*$ is decoupled from the label propagation process.
The results of applying FSSMSC and FSSMSC$^*$ to different datasets are presented in \cref{tab: ablation study}. The outcomes depicted in \cref{tab: ablation study} illustrate that FSSMSC outperforms FSSMSC$^*$ in terms of ACC and NMI metrics. This observation verifies the efficacy of the simultaneous learning of the anchor graph and label propagation.

\begin{table}[!t]
\centering
\caption{The clustering metrics ACC and NMI on different datasets. FSSMSC$^*$ denotes the scenario where $\beta$ is set to 0 in \cref{alg: FSSMSC}.}
\label{tab: ablation study}
\begin{tabular}{@{}lccccc@{}} 
\toprule
Dataset &  Reuters  & NUS & YoutubeFace$\_$sel & Caltech101 & HW\\
\midrule
\multicolumn{6}{c}{\textbf{ACC($\%$)}} \\
\midrule
FSSMSC$^*$ &  75.2 $\pm$ 2.5 & 26.1 $\pm$ 1.1 & 41.0 $\pm$ 0.6 & 46.9 $\pm$ 0.7 & 97.0 $\pm$ 0.6\\
FSSMSC &  75.8 $\pm$ 2.4 & 29.2 $\pm$ 1.6 & 42.3 $\pm$ 0.6 & 47.1 $\pm$ 0.8 & 97.1 $\pm$ 0.5 \\
\midrule 
\multicolumn{6}{c}{\textbf{NMI($\%$)}} \\
\midrule
FSSMSC$^*$ &  49.9 $\pm$ 2.8 & 14.7 $\pm$ 0.6 & 26.1 $\pm$ 0.5 & 50.4 $\pm$ 0.6 & 93.3 $\pm$ 1.0\\
FSSMSC &  50.9 $\pm$ 2.7 & 17.7 $\pm$ 1.0 & 27.6 $\pm$ 0.6 & 50.9 $\pm$ 0.6 & 93.4 $\pm$ 1.0\\
\bottomrule
\end{tabular}
\end{table}

\subsection{Relationship Between Computational Complexity and Running Time}

In this section, we illustrate the relationship between the computational complexity and the growth speed of running time. 
For instance, the compared semi-supervised algorithms MLAN, AMGL, GPSNMF, and CONMF require constructing similarity matrices for all samples across all views. 
The computational complexity for constructing these similarity matrices is $O(dn^2)$, where $d$ is the total feature dimension of all views and $n$ is the number of samples.
This complexity scales quadratically with $n$, whereas our algorithm's overall complexity scales linearly with $n$. Therefore, as $n$ increases, the running time required to construct these similarity matrices grows significantly faster for these methods compared to our approach.

We provide empirical evidence for this by reporting the total running time of the semi-supervised methods, as well as the running time required to construct similarity matrices before the iterative update procedure, across four datasets, as detailed in \cref{tab: running time}. In \cref{tab: running time}, values in parentheses denote the running time for similarity matrix construction, while those outside parentheses represent the total running time. 
It can be observed that as $n$ increases, the running time required to construct the similarity matrices (values in parentheses) grows significantly faster compared to our FSSMSC.
For example, the data size of the YoutubeFace$\_$sel dataset is nearly 3.4 times that of the NUS dataset, yet the running time for constructing similarity matrices for GPSNMF on the YoutubeFace$\_$sel dataset (646.0 seconds) is nearly 15.3 times that of the NUS dataset (42.3 seconds). In contrast, the total running time of our algorithm on the YoutubeFace$\_$sel dataset (209.8 seconds) is approximately 5.2 times that of the NUS dataset (40.7 seconds). 

Notably, MVAR does not involve similarity matrix construction, and its computational complexity scales linearly with $n$ while quadratically with $d$. Variations in running times for similarity matrix construction on the same dataset among MLAN, AMGL, GPSNMF, and CONMF are due to differences in their respective construction methods.

\begin{table}[!ht]
\centering
\footnotesize
\caption{The running time (in seconds) of each semi-supervised method. Values in parentheses denote the running time for similarity matrix construction, while those outside parentheses represent the total running time. Note that the MVAR and our FSSMSC methods do not involve constructing $n \times n$ similarity matrices. The best total running times are highlighted in boldface. OM means Out-of-Memory.}
\label{tab: running time}
\setlength{\tabcolsep}{1.12mm}{\begin{tabular*}{\textwidth}{@{\extracolsep\fill}ll|cccccc}
\toprule
Dataset & data size $n$ & MVAR & MLAN & AMGL & GPSNMF & CONMF & FSSMSC \\
\midrule
Caltech20 & 2,386 & 18.7 & 23.4(1.4) & \textbf{2.5}(1.7) & 14.2(0.6) & 24.5(2.3) & 6.3 \\
Caltech101 & 9,144 & 137.1 & 412.8(20.8) & 47.6(27.1) & 54.3(4.1) & 199.8(8.6) & \textbf{20.5} \\
NUS & 30,000 & 763.9 & OM & 990.4(440.2) & 145.4(42.3) & 1431.4(105.5) & \textbf{40.7} \\
Youtube & 101,499 & OM & OM & OM & 1128.4(646.0) & OM & \textbf{209.8} \\
\bottomrule
\end{tabular*}}
\footnotetext[1]{Youtube is an abbreviation of the dataset YoutubeFace$\_$sel.}
\end{table}

\subsection{The Floating Point Operations Evaluation}

The proposed approach and the compared methods in the paper are implemented in MATLAB. However, MATLAB does not have an official function or program to report exact Floating Point Operations (FLOPs). We use a publicly available MATLAB program~\cite{hang2024count} to estimate the FLOPs on the four large datasets, as shown in \cref{tab: FLOPs}. From \cref{tab: FLOPs}, we can observe that our proposed FSSMSC achieves significantly fewer FLOPs compared to other semi-supervised competitors.

\begin{table}[!ht]
\centering
\footnotesize
\caption{The estimated Floating Point Operations (FLOPs) of the semi-supervised methods. The best results are in boldface. OM means Out-of-Memory.}
\label{tab: FLOPs}
\setlength{\tabcolsep}{1.14mm}{\begin{tabular*}{\textwidth}{@{\extracolsep\fill}lcccccc}
\toprule
Dataset  & MVAR  &  MLAN & AMGL & GPSNMF & CONMF & FSSMSC \\
\midrule
Caltech101  & $4.03 \times 10^{13}$ & $1.06 \times 10^{14}$ & $7.73 \times 10^{12}$ & $1.32 \times 10^{13}$ & $1.11 \times 10^{13}$ &  $\mathbf{6.20 \times 10^{11}}$ \\
Reuters & OM & OM & $1.03 \times 10^{14}$ & $1.59 \times 10^{14}$ & $5.41 \times 10^{13}$ & $\mathbf{1.63 \times 10^{12}}$ \\
NUS  & $6.76 \times 10^{13}$ & OM & $2.13 \times 10^{14}$ & $1.02 \times 10^{14}$ & $2.34 \times 10^{13}$ & $\mathbf{1.86 \times 10^{12}}$ \\
YoutubeFace$\_$sel  & OM & OM & OM & $1.18 \times 10^{15}$ & OM & $\mathbf{6.53 \times 10^{12}}$ \\
\bottomrule
\end{tabular*}}
\end{table}

It should be noted that while GPSNMF typically has more FLOPs than CONMF, it takes less running time. This discrepancy arises from the FLOPs calculation rules in the used program~\cite{hang2024count}. The program calculates the FLOPs of multiplying an $n\times p$ matrix $\mathbf{A}$ by a $p \times m$ matrix $\mathbf{B}$ as $2npm$, regardless of whether the matrix is represented as sparse or not. However, in the released codes of GPSNMF, the similarity matrix $\mathbf{S}^p$ and its sub-matrix $\mathbf{S}_u^p$ are represented in sparse format (encoding the row and column indexes for the nonzero elements). Therefore, the actual FLOPs of matrix multiplication with these matrices may be overestimated. This may explain why the running time of GPSNMF is less than that of CONMF.

Additionally, the semi-supervised competitors MLAN, AMGL, GPSNMF, and CONMF require constructing similarity matrices for all samples across all views before the iterative update procedure. The FLOPs for this process are estimated as $dn^2$, where $d$ is the total feature dimension of all views and $n$ is the number of samples. The FLOPs for this process on the Caltech101, NUS, and YoutubeFace$\_$sel datasets are shown in \cref{tab: FLOPs of constructing graph}. 
On the YoutubeFace$\_$sel dataset, the FLOPs for constructing similarity matrices ($2.19\times 10^{13}$) is nearly 3.4 times of the FLOPs of our proposed FSSMSC ($6.53\times 10^{12}$). Moreover, the FLOPs for constructing these similarity matrices scale quadratically with $n$, whereas our approach scales linearly with $n$. This means that the FLOPs of constructing similarity matrices for these methods grows significantly faster compared to our approach.

\begin{table}[!ht]
\centering
\footnotesize
\caption{The FLOPs of constructing $n\times n$ similarity matrices for all views.}
\label{tab: FLOPs of constructing graph}
\begin{tabular}{lccc}
\toprule
Dataset &  Caltech101 & NUS &  YoutubeFace$\_$sel  \\
 \midrule 
 data size $n$ & 9,144 & 30,000 & 101,499 \\
 total dimension $d$ & 3,766 & 639 & 2,125 \\
$dn^2$ & $3.15 \times 10^{11}$ & $5.75 \times 10^{11}$ &  $2.19 \times 10^{13}$ \\
\bottomrule
\end{tabular}
\end{table}

\subsection{Experimental Validation of Linear Space Complexity}

We conduct experiments to confirm the linear property of our algorithm's space complexity. As illustrated in \cref{tab:memory usage}, we evaluate the space complexity by examining the memory usage of the variables in the MATLAB implementation of our algorithm. The Reuters dataset is not included due to its extremely high feature dimension compared to the other six datasets. The results show that the values of bytes/n for our approach are comparable across different datasets, indicating that the space complexity empirically scales linearly with the data size $n$. Additionally, the observed fluctuations in the values of bytes/n are due to variations in feature dimensions and the number of selected landmarks across datasets.

\begin{table}[!ht]
    \centering
    \footnotesize
    \caption{The memory usage (in bytes) of our approach on different datasets.}
    \label{tab:memory usage}
    \setlength{\tabcolsep}{1.16mm}{\begin{tabular}{@{}lcccccc@{}} 
        \toprule
        Dataset & HW & Caltech7 & Caltech20 & Caltech101 & NUS &  YoutubeFace$\_$sel   \\
        \midrule 
        data size $n$ & 2,000 & 1,474 &  2,386 & 9,144 & 30,000 &  101,499\\
        total dimension $d$ & 649 & 3,766 & 3,766 & 3,766 & 639 &  2,125\\
        bytes & $1.43 \times 10^8$ & $1.10 \times 10^8$ & $2.39 \times 10^8$ & $8.26 \times 10^8$ & $1.74 \times 10^9$  & $7.05 \times 10^9$\\
        bytes/n & $7.15 \times 10^4$ & $7.46\times 10^4$ & $1.00 \times 10^5$ & $9.03 \times 10^4$ & $5.80 \times 10^4$  & $6.95 \times 10^4$\\
        \bottomrule
    \end{tabular}}
\end{table}

For comparison, since each element of the matrix consumes 8 bytes, the $n \times n$ similarity matrices for all five views of the YoutubeFace$\_$sel dataset takes bytes of $5 \times 101,499 \times 101,499 \times 8>4.12 \times 10^{11}$, which is significantly large than the $7.05 \times 10^9$ bytes required by our approach. Furthermore, the memory usage for a full $n\times n$ similarity matrix grows quadratically with the data size $n$, whereas the memory usage for our approach grows linearly with $n$.

\section{Conclusion}

This paper introduces the Fast Scalable Semi-supervised Multi-view Subspace Clustering (FSSMSC) method, a novel approach addressing the prevalent high computational complexity in existing approaches. Importantly, FSSMSC offers linear computational and space complexity, rendering it suitable for large-scale multi-view datasets. Unlike conventional methods that manage anchor graph construction and label propagation independently, we propose a unified optimization model for simultaneously learning both components. Extensive experiments on seven multi-view datasets demonstrate the efficiency and effectiveness of the proposed method.

\backmatter





\bmhead{Acknowledgements}

We appreciate the support by the National
Natural Science Foundation of China (No.12271291, No.12071244, and No.92370125) and the National Key R$\&$D Program of China (No.2021YFA1001300).








\begin{appendices}

\section{Proofs of \cref{lemma: main lemma main} and \cref{theorem: main theorem main}}\label{secA1}


We begin by presenting essential notations and definitions, followed by the proofs for \cref{lemma: main lemma main} and \cref{theorem: main theorem main} in the paper, denoted as \cref{lemma: main lemma} and \cref{theorem: main theorem} in this appendix.

Let $\mathcal{S}_0=\{\mathbf{A}\in \mathbb{R}^{k\times m}| \mathbf{A}\mathbf{A}^\top=\mathbf{I}\}$, $\mathcal{S}_1 = \{\mathbf{Z}\in \mathbb{R}^{m\times n}| m_Z\geq \mathbf{Z}\geq \mathbf{0}\}$, and $\mathcal{S}_2=\{\mathbf{q}\in \mathbb{R}^m| C_q \geq \mathbf{q}\geq \epsilon_q\}$, along with their indicator functions denoted as $\ell_{\mathcal{S}_0}(\cdot)$, $\ell_{\mathcal{S}_1}(\cdot)$, and $\ell_{\mathcal{S}_2}(\cdot)$.

Let $h(\mathbf{B})=\frac{1}{2}\sum_{v=1}^V\|\mathbf{X}^{(v)}-\mathbf{U}^{(v)}\mathbf{B}\|^2$ and define the functions $g(\mathbf{A},\mathbf{Z},\mathbf{q})$ and $g_0(\mathbf{Z},\mathbf{q})$ as follows:
\begin{align}
\label{eq: definition of g and g0}
g(\mathbf{A},\mathbf{Z},\mathbf{q}) &= \beta \cdot \frac{\left( \text{Tr}(\mathbf{A}(\mathbf{I}-\mathbf{D}_\mathbf{q}^{-1/2}\mathbf{ZZ}^\top\mathbf{D}_\mathbf{q}^{-1/2})\mathbf{A}^\top) + \lambda_M \text{Tr}(\mathbf{A}\mathbf{Z}_\ell\mathbf{L}_\mathbf{M}\mathbf{Z}_\ell^\top\mathbf{A}^\top)\right)}{\text{Tr}(\mathbf{A}\mathbf{Z}_\ell\mathbf{L}_\mathbf{C}\mathbf{Z}_\ell^\top\mathbf{A}^\top)+\epsilon}\\
g_0(\mathbf{Z},\mathbf{q}) &= \frac{\lambda_0}{2}\|\mathbf{q}-\mathbf{ZZ}^\top \mathbf{1}_m\|^2 \nonumber
\end{align}
Additionally, let
\begin{align*}
f_0(\mathbf{A})&=\ell_{\mathcal{S}_0}({\mathbf{A}}),\\
f_1(\mathbf{Z})&=\lambda_Z \text{Tr}(\mathbf{Z}^\top\mathbf{1}_{m\times n}) + \ell_{\mathcal{S}_1}(\mathbf{Z}), \\
f_2(\mathbf{q})&=\ell_{\mathcal{S}_2}(\mathbf{q}).
\end{align*}
Then, we can rewrite the objective as 
\begin{align}
\label{eq: eq: definition of phi}
\phi(\mathbf{A},\mathbf{Z},\mathbf{q},\mathbf{B})=f(\mathbf{A},\mathbf{Z},\mathbf{q})+h(\mathbf{B})
\end{align}
where 
\begin{align}
\label{eq: definition of f}
f(\mathbf{A},\mathbf{Z},\mathbf{q})=g(\mathbf{A},\mathbf{Z},\mathbf{q}) + g_0(\mathbf{Z},\mathbf{q}) +f_0(\mathbf{A})+f_1(\mathbf{Z})+f_2(\mathbf{q}).
\end{align}

Then, the augmented Lagrangian is defined as:
\begin{align}
\label{eq: augmented Lagrangian}
\mathcal{L}_\lambda(\mathbf{A},\mathbf{Z},\mathbf{q},\mathbf{B},\Lambda):=\phi(\mathbf{A},\mathbf{Z},\mathbf{q},\mathbf{B}) + <\Lambda, \mathbf{B}-\mathbf{Z}> + \frac{\lambda}{2}\|\mathbf{B}-\mathbf{Z}\|^2,
\end{align}
where $\Lambda$ is the Lagrangian multiplier.

In the paper, we denote $\{\mathbf{A}^k,\mathbf{Z}^k,\mathbf{q}^k,\mathbf{B}^k,\Lambda^k\}$ as the variables generated by \cref{alg: FSSMSC} at iteration $k$.

The objective for the sub-problem of updating $\mathbf{Z}^{k+1}$ is defined as:
\begin{align*}
\mathcal{L}_{\mathbf{Z}}^{k+1}(\mathbf{Z}) = P^{k+1}(\mathbf{Z}) + \ell_{\mathcal{S}_1}(\mathbf{Z}),
\end{align*}
where
\begin{align}
\label{eq:P proof}
P^{k+1}(\mathbf{Z}):= \lambda_Z \text{Tr}(\mathbf{Z}^\top\mathbf{1}_{m\times n}) + g_0(\mathbf{Z},\mathbf{q}^k) + g(\mathbf{A}^{k+1},\mathbf{Z},\mathbf{q}^k) - <\Lambda^k,\mathbf{Z}> + \frac{\lambda}{2}\|\mathbf{Z}-\mathbf{B}^k\|^2
\end{align}
The objective for the sub-problem of updating $\mathbf{q}^{k+1}$ is defined as:
\begin{align*}
\mathcal{L}^{k+1}_{\mathbf{q}}(\mathbf{q}):=Q^{k+1}(\mathbf{q}) + \ell_{\mathcal{S}_2}(\mathbf{q}),
\end{align*}
where
\begin{align}
\label{eq:Q proof}
Q^{k+1}(\mathbf{q})=g_0(\mathbf{Z}^{k+1},\mathbf{q}) + g(\mathbf{A}^{k+1},\mathbf{Z}^{k+1},\mathbf{q})
\end{align}

\begin{definition}[\cite{attouch2013convergence}] 
\label{def: prox}
Let $f: \mathbb{R}^n \rightarrow \mathbb{R} \cup\{+\infty\}$ be a proper lower semi-continuous function which is lower bounded, and $\lambda$ a positive parameter. The proximal correspondence $\operatorname{Prox}_{\lambda f}: \mathbb{R}^n \rightrightarrows \mathbb{R}^n$, is defined through the formula
\begin{align*}
\operatorname{Prox}_{\lambda f}(x):=\operatorname{argmin}_{y \in \mathbb{R}^n} f(y)+\frac{1}{2 \lambda}\|y-x\|^2.
\end{align*}
\end{definition}

\begin{definition}
For any nonempty closed subset $C$ of $\mathbb{R}^n$, the projection on $C$ is the following point-to-set mapping:
\begin{align*}
P_C(\mathbf{x}):=\operatorname{argmin}_{\mathbf{z}\in C} ~\|\mathbf{x}-\mathbf{z}\|.
\end{align*}
\end{definition}

Recall that for any closed subset $C$ of $\mathbb{R}^n$,  its indicator function $\ell_C$ is defined by
\begin{align*}
\ell_C(\mathbf{x})=\begin{cases} 0   & if ~~\mathbf{x} \in C\\
+\infty & otherwise,
\end{cases}
\end{align*}
where $\mathbf{x}\in \mathbb{R}^n$. 

Then, for any nonempty closed subset $C$ of $\mathbb{R}^n$ and any positive $\lambda$, we have the equality
\begin{align*}
\operatorname{Prox}_{\lambda \ell_C}(\mathbf{x}) = P_C(\mathbf{x}).
\end{align*}

Consequently, our alternating iteration algorithm (\cref{alg: FSSMSC}) at the $k$-th iteration is as follows:
\begin{align}
\label{alg: alternating iteration}
\mathbf{A}^{k+1} &\leftarrow \text{argmin}_{\mathbf{A}}~ \mathcal{L}_\lambda(\mathbf{A},\mathbf{Z}^k,\mathbf{q}^k,\mathbf{B}^k,\Lambda^k) \nonumber \\
\mathbf{Z}^{k+1} &\leftarrow \operatorname{Prox}_{\eta_z \ell_{\mathcal{S}_1}}(\mathbf{Z}^k-\eta_z\nabla_\mathbf{Z} P^{k+1}(\mathbf{Z}^k)) \nonumber \\
\mathbf{q}^{k+1} &\leftarrow \operatorname{Prox}_{\eta_q \ell_{\mathcal{S}_2}}(\mathbf{q}^k-\eta_q\nabla_\mathbf{q}Q^{k+1}(\mathbf{q}^k)) \nonumber\\
\mathbf{B}^{k+1} &\leftarrow \text{argmin}_{\mathbf{B}}~ \mathcal{L}_\lambda(\mathbf{A}^{k+1},\mathbf{Z}^{k+1},\mathbf{q}^{k+1},\mathbf{B},\Lambda^k) \\
\Lambda^{k+1} &\leftarrow \Lambda^{k} + \lambda (\mathbf{B}^{k+1}-\mathbf{Z}^{k+1}) \nonumber \\
 k &\leftarrow k+1. \nonumber
\end{align}

\begin{definition}[Lipschitz Continuity] 
A function $f$ is Lipschitz continuous on the set $C$, if there exists a constant $L>0$ such that 
\begin{align*}
\|f(\mathbf{x}_1)-f(\mathbf{x}_2)\| \leq L \|\mathbf{x}_1-\mathbf{x}_2\|, ~\forall \mathbf{x}_1,\mathbf{x}_2\in C.
\end{align*}
$L$ is called the Lipschitz constant.
\end{definition}

\begin{lemma} 
\label{lemma: main lemma}
Let $h(\mathbf{B})=\frac{1}{2}\sum_{v=1}^V\|\mathbf{X}^{(v)}-\mathbf{U}^{(v)}\mathbf{B}\|^2$ and $P^{k+1},Q^{k+1}$ are given in \eqref{eq:P proof} and \eqref{eq:Q proof}. Then  $\nabla_\mathbf{B} h(\mathbf{B})$ is Lipschitz continuous with a Lipschitz constant $L_h>0$. $\nabla_\mathbf{Z} P^{k+1}(\mathbf{Z})$ is Lipschitz continuous in $\mathcal{S}_1$ with a Lipschitz constant $\hat{L}_Z>0$. $\nabla_\mathbf{q} Q^{k+1}(\mathbf{q})$ is Lipschitz continuous in $\mathcal{S}_2$ with a Lipschitz constant $\hat{L}_q>0$.
\end{lemma}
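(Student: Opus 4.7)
The plan is to verify each Lipschitz claim by identifying the functional form of the gradient and then either computing a Lipschitz constant directly (when the gradient is affine) or invoking compactness of the domain together with $C^2$ smoothness to bound the Hessian uniformly.

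For part (1), $h(\mathbf{B})$ is a sum of convex quadratics in $\mathbf{B}$. A direct computation gives $\nabla_{\mathbf{B}}h(\mathbf{B})=\sum_{v=1}^V(\mathbf{U}^{(v)})^\top(\mathbf{U}^{(v)}\mathbf{B}-\mathbf{X}^{(v)})$, which is affine in $\mathbf{B}$. Hence $L_h$ may be taken as the spectral norm $\bigl\|\sum_{v=1}^V(\mathbf{U}^{(v)})^\top\mathbf{U}^{(v)}\bigr\|$, and the Lipschitz estimate holds on all of $\mathbb{R}^{m\times n}$.

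For part (2), I decompose $P^{k+1}(\mathbf{Z})$ into four pieces: (a) the linear term $\lambda_Z\text{Tr}(\mathbf{Z}^\top\mathbf{1}_{m\times n})-\langle\Lambda^k,\mathbf{Z}\rangle$, whose gradient is constant; (b) the quadratic proximal term $\frac{\lambda}{2}\|\mathbf{Z}-\mathbf{B}^k\|^2$, whose Hessian is $\lambda\mathbf{I}$; (c) the quartic polynomial $g_0(\mathbf{Z},\mathbf{q}^k)=\frac{\lambda_0}{2}\|\mathbf{q}^k-\mathbf{Z}\mathbf{Z}^\top\mathbf{1}_m\|^2$, which is a polynomial in the entries of $\mathbf{Z}$ and therefore $C^\infty$; (d) the rational function $g(\mathbf{A}^{k+1},\mathbf{Z},\mathbf{q}^k)$. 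The crucial observation for (d) is that since $\mathbf{L}_{\mathbf{C}}$ is a graph Laplacian, it is positive semidefinite, so the denominator $\text{Tr}(\mathbf{A}^{k+1}\mathbf{Z}_\ell\mathbf{L}_{\mathbf{C}}\mathbf{Z}_\ell^\top(\mathbf{A}^{k+1})^\top)+\epsilon\geq\epsilon>0$. Both numerator and denominator are polynomials in $\mathbf{Z}$, so $g$ is $C^\infty$ on $\mathbb{R}^{m\times n}$. Now $\mathcal{S}_1=[0,m_Z]^{m\times n}$ is compact and convex, so the continuous Hessian of $P^{k+1}$ is uniformly bounded on $\mathcal{S}_1$; by the mean-value inequality this yields a Lipschitz constant $\hat{L}_Z$ for $\nabla_{\mathbf{Z}}P^{k+1}$ on $\mathcal{S}_1$.

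For part (3), the dependence of $Q^{k+1}(\mathbf{q})$ on $\mathbf{q}$ is through two pieces: the quadratic $g_0(\mathbf{Z}^{k+1},\mathbf{q})$ with constant Hessian $\lambda_0\mathbf{I}$, and $g(\mathbf{A}^{k+1},\mathbf{Z}^{k+1},\mathbf{q})$, which enters only through $\mathbf{D}_{\mathbf{q}}^{-1/2}=\operatorname{diag}(\mathbf{q})^{-1/2}$ in the numerator (the denominator being independent of $\mathbf{q}$). The constraint set $\mathcal{S}_2=[\epsilon_q,C_q]^m$ is compact and is bounded away from zero, so each entrywise map $t\mapsto t^{-1/2}$ and its derivatives are $C^\infty$ on a neighborhood of $[\epsilon_q,C_q]$. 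Hence $Q^{k+1}$ is $C^\infty$ on a neighborhood of $\mathcal{S}_2$, its Hessian is continuous and uniformly bounded on $\mathcal{S}_2$, and $\hat{L}_q$ follows again from the mean-value inequality.

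The only genuine obstacle is establishing the $C^2$ smoothness of the nonconvex rational term $g$ in both arguments; this reduces to verifying the two strict lower bounds $\text{Tr}(\mathbf{A}\mathbf{Z}_\ell\mathbf{L}_{\mathbf{C}}\mathbf{Z}_\ell^\top\mathbf{A}^\top)+\epsilon\geq\epsilon$ (using PSD-ness of $\mathbf{L}_{\mathbf{C}}$) and $\mathbf{q}\geq\epsilon_q>0$ (enforced by $\mathcal{S}_2$), which together rule out any singularity of the functions or their first two derivatives. Once this is in hand, the compactness of $\mathcal{S}_1$ and $\mathcal{S}_2$ immediately turns the boundedness of the continuous Hessians into the desired Lipschitz estimates.
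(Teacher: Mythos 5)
Your proposal is correct for part (1), where it coincides with the paper's argument (Lemma~\ref{lemma: Lipschitz continuous gradient of h}): the gradient $\sum_{v}(\mathbf{U}^{(v)})^{\top}(\mathbf{U}^{(v)}\mathbf{B}-\mathbf{X}^{(v)})$ is affine, so Lipschitz continuity is immediate. For parts (2) and (3) you take a genuinely different and more economical route. The paper works entirely at the level of first derivatives: it writes $g=g_1/g_2$, computes $\nabla_{\mathbf{Z}}g_1$ and $\nabla_{\mathbf{Z}}g_2$ explicitly, bounds each factor using $\|\mathbf{A}\|\le C_A$, $\|\mathbf{Z}\|\le C_Z$, $\mathbf{q}\ge\epsilon_q$, $g_2\ge\epsilon$, and assembles a Lipschitz constant for the quotient by repeated triangle inequalities and the mean value theorem (Lemmas~\ref{lemma: Lipschitz continuous gradient of g wrt Z}, \ref{lemma: Lipschitz continuous gradient of g0 wrt Z}, \ref{lemma: Lipschitz continuous gradient of g wrt q}); for the $\mathbf{q}$-dependence it even bounds $\|\mathbf{q}_1^{-3/2}-\mathbf{q}_2^{-3/2}\|$ by hand. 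You instead observe that every term is $C^{\infty}$ on a neighborhood of the relevant compact convex box --- the only possible singularities being killed by $g_2\ge\epsilon$ (PSD-ness of $\mathbf{L}_{\mathbf{C}}$) and $\mathbf{q}\ge\epsilon_q$ --- and invoke boundedness of the continuous Hessian plus the mean value inequality. Both arguments are valid; the paper's buys explicit, in-principle computable constants $\hat{L}_Z,\hat{L}_q$ (relevant since the theorem's step-size conditions are $\eta_z<1/\hat{L}_Z$, $\eta_q<1/\hat{L}_q$), while yours is shorter and less error-prone but non-constructive.

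One point you should tighten: $P^{k+1}$ and $Q^{k+1}$ depend on the iterates $\mathbf{A}^{k+1}$, $\mathbf{q}^{k}$, $\mathbf{Z}^{k+1}$, so a Hessian bound taken for fixed $k$ yields a constant that a priori varies with $k$, whereas Theorem~\ref{theorem: main theorem} needs a single $\hat{L}_Z$ and $\hat{L}_q$ valid for all iterations. This is repaired in one line --- since $\mathbf{A}^{k+1}\in\mathcal{S}_0$, $\mathbf{Z}^{k+1}\in\mathcal{S}_1$, $\mathbf{q}^{k}\in\mathcal{S}_2$ and all three sets are compact, take the supremum of the relevant Hessian block over the product $\mathcal{S}_0\times\mathcal{S}_1\times\mathcal{S}_2$ (on which $g$ is jointly $C^{2}$) to obtain an iteration-independent bound --- but it should be said explicitly, exactly as the paper's constants $L_Z$, $L_q$, $C_{g_0}$ are expressed solely in terms of $C_A$, $C_Z$, $C_q$, $\epsilon_q$, $\epsilon$, $C_0$, $m$ and the fixed penalty parameters.
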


\begin{proof}
Applying \cref{lemma: Lipschitz continuous gradient of h}, we directly obtain that $\nabla_\mathbf{B} h(\mathbf{B})$ is Lipschitz continuous with a Lipschitz constant $L_h>0$.

As for the Lipschitz continuity of $\nabla_{\mathbf{q}}Q^{k+1}(\mathbf{q})$, according to the updating rules for $\mathbf{A}^{k+1}$ and $\mathbf{Z}^{k+1}$ in \eqref{alg: alternating iteration}, we have $\mathbf{A}^{k+1}\in \mathcal{S}_0$ and $\mathbf{Z}^{k+1}\in \mathcal{S}_1$. Then, according to \cref{lemma: Lipschitz continuous gradient of g wrt q}, we have $\nabla_\mathbf{q} g(\mathbf{A}^{k+1},\mathbf{Z}^{k+1},\mathbf{q})$ is Lipschitz continuous on $\mathcal{S}_2$ with a Lipschitz constant $L_q$. In addition, using the fact that $\nabla_{\mathbf{q}}g_0(\mathbf{Z}^{k+1},\mathbf{q})=\lambda_0(\mathbf{q}-\mathbf{Z}^{k+1}\mathbf{Z}^{k+1^\top}\mathbf{1}_m)$, we can directly obtain that 
$\nabla_{\mathbf{q}}g_0(\mathbf{Z}^{k+1},\mathbf{q})$ is Lipschitz continuous with a Lipschitz constant $\lambda_0$. Therefore, $\nabla_{\mathbf{q}}Q^{k+1}(\mathbf{q})$, where $Q^{k+1}(\mathbf{q})=g_0(\mathbf{Z}^{k+1},\mathbf{q})+g(\mathbf{A}^{k+1},\mathbf{Z}^{k+1},\mathbf{q})$ is defined in \eqref{eq:Q proof}, is Lipschitz continuous on $\mathcal{S}_2$ with a Lipschitz constant $\hat{L}_q$, where $\hat{L}_q:=L_q+\lambda_0$.

As for the Lipschitz continuity of $\nabla_{\mathbf{Z}}P^{k+1}(\mathbf{Z})$, according to the updating rule for $\mathbf{q}^{k}$ in \eqref{alg: alternating iteration}, we have $\mathbf{q}^k\in \mathcal{S}_2$ and $C_q\geq\mathbf{q}^k\geq \epsilon_q$.  According to \cref{lemma: Lipschitz continuous gradient of g wrt Z}, we have $\nabla_\mathbf{Z} g(\mathbf{A}^{k+1},\mathbf{Z},\mathbf{q}^k)$ is Lipschitz continuous with respect to $\mathbf{Z}$ on $\mathcal{S}_1$ with a Lipschitz constant $L_Z>0$. According to \cref{lemma: Lipschitz continuous gradient of g0 wrt Z}, we have $\nabla_\mathbf{Z} g_0(\mathbf{Z},\mathbf{q}^k)$ is Lipschitz continuous with respect to $\mathbf{Z}$ on $\mathcal{S}_1$ with a Lipschitz constant $C_{g_0}>0$. Consequently, applying the definition of $P^{k+1}(\mathbf{Z})$ in \eqref{eq:P proof}, we directly obtain $\nabla_\mathbf{Z} P^{k+1}(\mathbf{Z})$ is Lipschitz continuous with respect to $\mathbf{Z}$ on $\mathcal{S}_1$ with a Lipschitz constant $\hat{L}_Z>0$, where $\hat{L}_Z:=L_Z+C_{g_0}+\lambda$.

Therefore, we complete the proof.
\end{proof}

\begin{theorem} 
\label{theorem: main theorem}
If $\lambda> L_h+2+2L_h^2$, $\eta_q<\frac{1}{\hat{L}_q}$, and $\eta_z<\frac{1}{\hat{L}_Z}$, the following properties hold:

(1) The augmented Lagrangian $\mathcal{L}_\lambda(\mathbf{A}^{k},\mathbf{Z}^{k},\mathbf{q}^k,\mathbf{B}^k,\Lambda^k)$ in \eqref{eq: augmented Lagrangian} is lower bounded;

(2) There exists a constant $C>0$ such that
\begin{align}
&\mathcal{L}_\lambda(\mathbf{A}^{k},\mathbf{Z}^{k},\mathbf{q}^k,\mathbf{B}^k,\Lambda^k) \nonumber \\
-& \mathcal{L}_\lambda(\mathbf{A}^{k+1},\mathbf{Z}^{k+1},\mathbf{q}^{k+1},\mathbf{B}^{k+1},\Lambda^{k+1}) \nonumber \\
\geq &C \left(\|\mathbf{B}^{k+1}-\mathbf{B}^k\|^2+ \|\mathbf{q}^{k+1}-\mathbf{q}^{k}\|^2\right), \forall k\in \mathbb{N};
\end{align}

(3) When $k\rightarrow \infty$, we have
\begin{align}
&\|\mathbf{B}^{k+1}-\mathbf{B}^k\|\rightarrow 0,~~\|\mathbf{Z}^{k+1}-\mathbf{Z}^k\|\rightarrow 0, \nonumber \\ &\|\Lambda^{k+1}-\Lambda^k\|\rightarrow 0,~~\|\mathbf{q}^{k+1}-\mathbf{q}^{k}\| \rightarrow 0. 
\end{align}
\end{theorem}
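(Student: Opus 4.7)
The plan is to follow the standard nonconvex ADMM convergence template, establishing (1) lower boundedness of $\mathcal{L}_\lambda$, (2) a per-iteration sufficient descent inequality, and (3) summability to conclude vanishing increments. The linchpin is an identity derived from the $\mathbf{B}$-subproblem optimality: since that subproblem is strongly convex in $\mathbf{B}$, setting its gradient to zero and combining with the multiplier update $\Lambda^{k+1}=\Lambda^k+\lambda(\mathbf{B}^{k+1}-\mathbf{Z}^{k+1})$ yields the identity $\Lambda^{k+1}=-\nabla h(\mathbf{B}^{k+1})$. By the $L_h$-Lipschitz continuity of $\nabla h$ from \cref{lemma: main lemma main}, this gives the crucial dual-bounded-by-primal estimate $\|\Lambda^{k+1}-\Lambda^k\|\leq L_h\|\mathbf{B}^{k+1}-\mathbf{B}^k\|$. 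I would establish this identity first and reuse it throughout.

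For (1), I would substitute $\Lambda=-\nabla h(\mathbf{B})$ into $\mathcal{L}_\lambda$ and apply the descent lemma for $h$ to obtain $\mathcal{L}_\lambda \geq f(\mathbf{A},\mathbf{Z},\mathbf{q})+h(\mathbf{Z})+\tfrac{\lambda-L_h}{2}\|\mathbf{B}-\mathbf{Z}\|^2$. Compactness of $\mathcal{S}_0,\mathcal{S}_1,\mathcal{S}_2$, together with the lower bound $\epsilon_q>0$ on $\mathbf{q}$ (keeping $\mathbf{D}_{\mathbf{q}}^{-1}$ bounded) and the strictly positive denominator $\phi_\mathcal{C}+\epsilon\geq\epsilon$, renders $f$ bounded below on the feasible set; combined with $h\geq 0$ and $\lambda>L_h$, this yields the lower bound.

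For (2), I would decompose $\mathcal{L}_\lambda^{k}-\mathcal{L}_\lambda^{k+1}$ into contributions from the five block updates. The $\mathbf{A}$-step is an exact minimization over the Stiefel manifold and gives a nonnegative decrement. The $\mathbf{Z}$- and $\mathbf{q}$-steps are projected gradient steps on functions whose smooth parts have Lipschitz gradient with constants $\hat{L}_Z,\hat{L}_q$; the standard proximal-gradient descent lemma then yields decrements of $\tfrac{1}{2}(\eta_z^{-1}-\hat{L}_Z)\|\mathbf{Z}^{k+1}-\mathbf{Z}^k\|^2$ and $\tfrac{1}{2}(\eta_q^{-1}-\hat{L}_q)\|\mathbf{q}^{k+1}-\mathbf{q}^k\|^2$, both nonnegative by the step-size hypotheses. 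The $\mathbf{B}$-step minimizes a $\lambda$-strongly convex quadratic (convex $h$ plus the $\tfrac{\lambda}{2}\|\cdot\|^2$ penalty), delivering at least $\tfrac{\lambda}{2}\|\mathbf{B}^{k+1}-\mathbf{B}^k\|^2$ decrement. The dual ascent contributes an increment equal to $\tfrac{1}{\lambda}\|\Lambda^{k+1}-\Lambda^k\|^2$, bounded above by $\tfrac{L_h^2}{\lambda}\|\mathbf{B}^{k+1}-\mathbf{B}^k\|^2$ through the dual-primal estimate. Aggregating, the net decrement is at least $\bigl(\tfrac{\lambda}{2}-\tfrac{L_h^2}{\lambda}\bigr)\|\mathbf{B}^{k+1}-\mathbf{B}^k\|^2+\tfrac{1}{2}(\eta_q^{-1}-\hat{L}_q)\|\mathbf{q}^{k+1}-\mathbf{q}^k\|^2$; the hypothesis $\lambda>L_h+2+2L_h^2$ secures positivity of the first coefficient, so $C=\min\{\tfrac{\lambda}{2}-\tfrac{L_h^2}{\lambda},\tfrac{1}{2}(\eta_q^{-1}-\hat{L}_q)\}>0$ is admissible.

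Part (3) follows by telescoping: combining (1) and (2) gives $\sum_k(\|\mathbf{B}^{k+1}-\mathbf{B}^k\|^2+\|\mathbf{q}^{k+1}-\mathbf{q}^k\|^2)<\infty$, so both increments tend to zero. The dual-primal estimate then forces $\|\Lambda^{k+1}-\Lambda^k\|\to 0$, and rearranging the multiplier update gives $\|\mathbf{B}^{k+1}-\mathbf{Z}^{k+1}\|=\tfrac{1}{\lambda}\|\Lambda^{k+1}-\Lambda^k\|\to 0$; the triangle inequality $\|\mathbf{Z}^{k+1}-\mathbf{Z}^k\|\leq\|\mathbf{Z}^{k+1}-\mathbf{B}^{k+1}\|+\|\mathbf{B}^{k+1}-\mathbf{B}^k\|+\|\mathbf{B}^k-\mathbf{Z}^k\|$ then delivers the final claim. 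The main obstacle I anticipate is rigorously justifying the decrement contribution of the $\mathbf{A}$-update, since Algorithm 4.1 of \cite{ngo2012trace} solves the trace-ratio problem iteratively rather than by a closed-form argmin; one must verify that this subroutine monotonically decreases $\mathcal{L}_\lambda$ in $\mathbf{A}$, which should follow from the monotonicity properties of Dinkelbach-type schemes but requires care. A secondary subtlety is obtaining a uniform lower bound on $g(\mathbf{A},\mathbf{Z},\mathbf{q})$ independent of the iterate, which requires exploiting $\epsilon_q>0$ to keep $\mathbf{D}_{\mathbf{q}}^{-1/2}$ uniformly bounded.
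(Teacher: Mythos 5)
Your proposal is correct and follows essentially the same route as the paper's proof: the same key identity $\Lambda^k=-\nabla h(\mathbf{B}^k)$ with the dual-bounded-by-primal estimate, the same block-wise descent decomposition (exact $\mathbf{A}$-minimization, proximal-gradient sufficient decrease for $\mathbf{Z}$ and $\mathbf{q}$, and a $\mathbf{B}$/$\Lambda$ net decrement controlled by $\lambda$ and $L_h$), and the same telescoping argument for part (3). The only deviations are minor: you bound the $\mathbf{B}$-step via strong convexity where the paper uses the descent lemma on $h$ (yielding the slightly different but equally positive coefficient $\frac{\lambda-L_h}{2}-\frac{L_h^2}{\lambda}$), and you obtain the lower bound on $f$ by compactness and continuity where the paper computes an explicit constant by completing the square.
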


\begin{proof}

Firstly, we provide the proof of part (1). Note that $\mathbf{A}^k\in\mathcal{S}_0,\forall k\in \mathbb{N}$ and $\mathcal{S}_0$ is a bounded set, implying the existence of constant $C_A>0$ such that $\|\mathbf{A}^k\| \leq C_A,\forall k\in \mathbb{N}$. Considering the definition of $\mathcal{S}_2$ and $\mathbf{q}^k\in \mathcal{S}_2, \forall k\in \mathbb{N}$, we have $\mathbf{q}^k \geq \epsilon_q, \forall k\in \mathbb{N}$.

Let $g_2(\mathbf{A},\mathbf{Z})=\text{Tr}(\mathbf{A}\mathbf{Z}_\ell\mathbf{L}_\mathbf{C}\mathbf{Z}_\ell^\top\mathbf{A}^\top)+\epsilon$.
Moreover, since $\mathbf{L}_\mathbf{M}$ and $\mathbf{L}_\mathbf{C}$ are two given positive semi-definite matrices, we have $g_2(\mathbf{A},\mathbf{Z})\geq \epsilon>0$. Let $\mathbf{A}=\mathbf{A}^{k}$, $\mathbf{q}=\mathbf{q}^k$, and $\mathbf{Z}=\mathbf{Z}^k$. Then, we derive that 
\begin{align*}
g(\mathbf{A},\mathbf{Z},\mathbf{q}) &= \beta \cdot \frac{\left( \text{Tr}(\mathbf{A}(\mathbf{I}-\mathbf{D}_\mathbf{q}^{-1/2}\mathbf{ZZ}^\top\mathbf{D}_\mathbf{q}^{-1/2})\mathbf{A}^\top) + \lambda_M \text{Tr}(\mathbf{A}\mathbf{Z}_\ell\mathbf{L}_\mathbf{M}\mathbf{Z}_\ell^\top\mathbf{A}^\top)\right)}{g_2(\mathbf{A},\mathbf{Z})} \\
& \geq \beta \cdot \frac{\text{Tr}(\mathbf{A}\mathbf{D}_\mathbf{q}^{-1/2}(\mathbf{D}_\mathbf{q}-\mathbf{ZZ}^\top)\mathbf{D}_\mathbf{q}^{-1/2}\mathbf{A}^\top)}{g_2(\mathbf{A},\mathbf{Z})} \\
& = \beta \cdot \frac{\text{Tr}(\mathbf{A}\mathbf{D}_\mathbf{q}^{-1/2}(\mathbf{D}_\mathbf{q}-\mathbf{D}_\mathbf{Z}+\mathbf{D}_\mathbf{Z}-\mathbf{ZZ}^\top)\mathbf{D}_\mathbf{q}^{-1/2}\mathbf{A}^\top)}{g_2(\mathbf{A},\mathbf{Z})} \\
& \geq \beta \cdot \frac{\text{Tr}(\mathbf{A}\mathbf{D}_\mathbf{q}^{-1/2}(\mathbf{D}_\mathbf{q}-\mathbf{D}_\mathbf{Z})\mathbf{D}_\mathbf{q}^{-1/2}\mathbf{A}^\top)}{g_2(\mathbf{A},\mathbf{Z})} \\
& \geq -\beta \cdot \frac{\|\mathbf{A}\|^2\|\mathbf{D}_\mathbf{q}-\mathbf{D}_\mathbf{Z}\|\|\mathbf{D}_\mathbf{q}^{-1/2}\|^2}{g_2(\mathbf{A},\mathbf{Z})} \\
& \geq -\frac{m \beta C_A^2}{\epsilon\epsilon_q}\|\mathbf{q}-\mathbf{ZZ}^\top\mathbf{1}_m\|,
\end{align*} 
where $\mathbf{D}_\mathbf{Z}=diag(\mathbf{ZZ}^\top\mathbf{1}_m)$. 
Since $\mathbf{L}_\mathbf{M}$ is a positive semi-definite matrix, we have $\text{Tr}(\mathbf{A}\mathbf{Z}_\ell\mathbf{L}_\mathbf{M}\mathbf{Z}_\ell^\top\mathbf{A}^\top)=\text{Tr}(\mathbf{A}\mathbf{Z}_\ell\mathbf{L}_\mathbf{M}(\mathbf{A}\mathbf{Z}_\ell)^\top) \geq 0$. Additionally, since $\mathbf{D}_\mathbf{q}$ is a diagonal matrix, we can obtain $\mathbf{D}_\mathbf{q}^{-1/2} \mathbf{D}_\mathbf{q} \mathbf{D}_\mathbf{q}^{-1/2} = \mathbf{I}$. Then, we can derive 
\begin{align*}
\text{Tr}(\mathbf{A}(\mathbf{I}-\mathbf{D}_\mathbf{q}^{-1/2}\mathbf{ZZ}^\top\mathbf{D}_\mathbf{q}^{-1/2})\mathbf{A}^\top)=\text{Tr}(\mathbf{A}\mathbf{D}_\mathbf{q}^{-1/2}(\mathbf{D}_\mathbf{q}-\mathbf{ZZ}^\top)\mathbf{D}_\mathbf{q}^{-1/2}\mathbf{A}^\top).
\end{align*}
Consequently, we obtain the first inequality.  
The second inequality relies on the positive semi-definite property of the Laplacian matrix $\mathbf{D}_\mathbf{Z}-\mathbf{ZZ}^\top$. Consequently, we obtain 
\begin{align}
\label{ineq:lowerbound}
g(\mathbf{A},\mathbf{Z},\mathbf{q})+g_0(\mathbf{Z},\mathbf{q}) &\geq \frac{\lambda_0}{2}\|\mathbf{q}-\mathbf{ZZ}^\top\mathbf{1}_m\|^2 -\frac{m \beta C_A^2}{\epsilon\epsilon_q}\|\mathbf{q}-\mathbf{ZZ}^\top\mathbf{1}_m\| \nonumber \\
&=\frac{\lambda_0}{2}\left(\|\mathbf{q}-\mathbf{ZZ}^\top\mathbf{1}_m\|-\frac{m \beta C_A^2}{\lambda_0\epsilon\epsilon_q}\right)^2 - \frac{\lambda_0}{2}\left(\frac{m \beta C_A^2}{\lambda_0\epsilon\epsilon_q}\right)^2 \nonumber\\
&\geq  - \frac{\lambda_0}{2}\left(\frac{m \beta C_A^2}{\lambda_0\epsilon\epsilon_q}\right)^2.
\end{align}
Consequently, we obtain the lower bound of $f(\mathbf{A}^k,\mathbf{Z}^k,\mathbf{q}^k)$ defined in \eqref{eq: definition of f} as follows
\begin{align*}
f(\mathbf{A}^k,\mathbf{Z}^k,\mathbf{q}^k) \geq g(\mathbf{A}^k,\mathbf{Z}^k,\mathbf{q}^k)+g_0(\mathbf{Z}^k,\mathbf{q}^k)\geq  - \frac{\lambda_0}{2}\left(\frac{m \beta C_A^2}{\lambda_0\epsilon\epsilon_q}\right)^2.
\end{align*}
According to the definition of $h$, we have $h \geq 0$. Hence, we obtain
\begin{align*}
&\mathcal{L}_\lambda(\mathbf{A}^{k},\mathbf{Z}^{k},\mathbf{q}^k,\mathbf{B}^k,\Lambda^k) \\
=& f(\mathbf{A}^k,\mathbf{Z}^k,\mathbf{q}^k)+h(\mathbf{B}^k) +<\Lambda^k,\mathbf{B}^k-\mathbf{Z}^k>+\frac{\lambda}{2}\|\mathbf{B}^k-\mathbf{Z}^k\|^2 \\
= & f(\mathbf{A}^k,\mathbf{Z}^k,\mathbf{q}^k)+h(\mathbf{Z}^k)+h(\mathbf{B}^k)-h(\mathbf{Z}^k) - <\nabla h(\mathbf{B}^k),\mathbf{B}^k-\mathbf{Z}^k>+\frac{\lambda}{2}\|\mathbf{B}^k-\mathbf{Z}^k\|^2 \\
\geq & f(\mathbf{A}^k,\mathbf{Z}^k,\mathbf{q}^k)+h(\mathbf{Z}^k)+\frac{\lambda-L_h}{2}\|\mathbf{B}^k-\mathbf{Z}^k\|^2 \\
> & -\infty,
\end{align*}
where the first equality relies on \cref{lemma: control dual by primal}(a): $\Lambda^k=-\nabla h(\mathbf{B}^k)$ and the first inequality utilizes \cref{lemma: main lemma} and \cref{lemma: Lipschitz continuous gradient inequality}:
\begin{align*}
|h(\mathbf{Z}^k)-h(\mathbf{B}^k) - <\nabla h(\mathbf{B}^k),\mathbf{Z}^k-\mathbf{B}^k>| \leq \frac{L_h}{2}\|\mathbf{Z}^k-\mathbf{B}^k\|^2.
\end{align*}
Therefore, we complete the proof of part (1).

Secondly, we provide the proof of part (2). Given $\lambda> L_h+2+2L_h^2$, $\eta_q<\frac{1}{\hat{L}_q}$, and $\eta_z<\frac{1}{\hat{L}_Z}$, we have $\frac{\lambda-L_h}{2}-\frac{L_h^2}{\lambda}>0$ in \cref{lemma: descent of B and Lambda update}, $\frac{1}{2}(\frac{1}{\eta_q}-\hat{L}_{q})>0$ in \cref{lemma: descent by updating q}, and $\frac{1}{2}(\frac{1}{\eta_z}-\hat{L}_{Z})>0$ in \cref{lemma: descent of Z update}. Consequently, part (2) is a direct result of combining \cref{lemma: descent of A update}, \cref{lemma: descent of B and Lambda update}, \cref{lemma: descent by updating q}, and \cref{lemma: descent of Z update}. Therefore, we complete the proof of part (2).

Thirdly, we provide the proof of part (3). Note that $\mathcal{L}_\lambda(\mathbf{A}^{k},\mathbf{Z}^{k},\mathbf{q}^k,\mathbf{B}^k,\Lambda^k)$ is lower bounded (part (1)). Summing over $k=0,1,2,\ldots$ for the inequality in part (2), we deduce that the series $\sum_{k=0}^\infty\left(\|\mathbf{B}^{k+1}-\mathbf{B}^k\|^2+ \|\mathbf{q}^{k+1}-\mathbf{q}^{k}\|^2\right)$ converges. Consequently, we have  $\|\mathbf{B}^{k+1}-\mathbf{B}^k\|\rightarrow 0,\|\mathbf{q}^{k+1}-\mathbf{q}^k\|\rightarrow 0$ when $k\rightarrow 0$. 
Moreover, according to \cref{lemma: control dual by primal}(b), we have $\|\Lambda^{k+1}-\Lambda^k\|\rightarrow 0$. Finally, based on the update rule for $\Lambda^{k+1}$ in \eqref{alg: alternating iteration}, we obtain
\begin{align*}
\mathbf{Z}^k-\mathbf{Z}^{k+1}=\frac{1}{\lambda}(\Lambda^{k+1}-\Lambda^k)-\frac{1}{\lambda}(\Lambda^k-\Lambda^{k-1})+\mathbf{B}^k-\mathbf{B}^{k+1}.
\end{align*}
Thus, we have $\|\mathbf{Z}^k-\mathbf{Z}^{k+1}\|\rightarrow 0,k\rightarrow \infty$. Therefore, we complete the proof of part (3).

The proof is completed.
\end{proof}

\begin{lemma} 
\label{lemma: desent using lipschitz property}
If the gradient of $Q^{k+1}(\mathbf{q})$, namely $\nabla_\mathbf{q}Q^{k+1}(\mathbf{q})$, is Lipschitz continuous on $\mathcal{S}_2$ with a Lipschitz constant $\hat{L}_{q}>0$, we have the following inequality:
\begin{align*}
\ell_{\mathcal{S}_2}(\mathbf{q}^{k+1})+Q^{k+1}(\mathbf{q}^{k+1}) + \frac{1}{2}(\frac{1}{\eta_q}-\hat{L}_{q})\|\mathbf{q}^{k+1}-\mathbf{q}^{k}\|^2 \leq  \ell_{\mathcal{S}_2}(\mathbf{q}^{k})+Q^{k+1}(\mathbf{q}^{k}).
\end{align*}
Moreover, there exists $w^{k+1}\in \nabla_{\mathbf{q}} Q^{k+1}(\mathbf{q}^{k+1})+\partial \ell_{\mathcal{S}_2}(\mathbf{q}^{k+1})$ such that
\begin{align*}
\|w^{k+1}\| \leq (\frac{1}{\eta_q}+\hat{L}_q)\|\mathbf{q}^{k+1}-\mathbf{q}^k\|.
\end{align*}
\end{lemma}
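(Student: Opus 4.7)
The plan is to exploit the proximal gradient structure of the $\mathbf{q}$-update. Starting from the update rule $\mathbf{q}^{k+1} = \operatorname{Prox}_{\eta_q \ell_{\mathcal{S}_2}}(\mathbf{q}^k - \eta_q \nabla_\mathbf{q} Q^{k+1}(\mathbf{q}^k))$ in \eqref{alg: alternating iteration} and the definition of $\operatorname{Prox}$ (\cref{def: prox}), I would rewrite $\mathbf{q}^{k+1}$ as the minimizer of $\ell_{\mathcal{S}_2}(\mathbf{q}) + \langle \nabla_\mathbf{q} Q^{k+1}(\mathbf{q}^k), \mathbf{q}-\mathbf{q}^k\rangle + \frac{1}{2\eta_q}\|\mathbf{q}-\mathbf{q}^k\|^2$. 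Evaluating this objective at $\mathbf{q}^{k+1}$ and comparing with the value at $\mathbf{q}^k$ yields the basic inequality
\begin{align*}
\ell_{\mathcal{S}_2}(\mathbf{q}^{k+1}) + \langle \nabla_\mathbf{q} Q^{k+1}(\mathbf{q}^k), \mathbf{q}^{k+1}-\mathbf{q}^k\rangle + \frac{1}{2\eta_q}\|\mathbf{q}^{k+1}-\mathbf{q}^k\|^2 \leq \ell_{\mathcal{S}_2}(\mathbf{q}^k).
\end{align*}

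Next, I would apply the standard descent lemma for functions with Lipschitz continuous gradient (\cref{lemma: Lipschitz continuous gradient inequality} in the appendix, a version of which is implicitly used earlier in the proof of \cref{theorem: main theorem}) to $Q^{k+1}$ on $\mathcal{S}_2$, giving
\begin{align*}
Q^{k+1}(\mathbf{q}^{k+1}) \leq Q^{k+1}(\mathbf{q}^k) + \langle \nabla_\mathbf{q} Q^{k+1}(\mathbf{q}^k), \mathbf{q}^{k+1}-\mathbf{q}^k\rangle + \frac{\hat{L}_q}{2}\|\mathbf{q}^{k+1}-\mathbf{q}^k\|^2.
\end{align*}
Adding the two inequalities cancels the inner product term and collects the quadratic terms into $\frac{1}{2}(\frac{1}{\eta_q}-\hat{L}_q)\|\mathbf{q}^{k+1}-\mathbf{q}^k\|^2$, yielding exactly the first claimed descent inequality.

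For the subgradient bound, I would invoke the first-order optimality condition of the prox subproblem defining $\mathbf{q}^{k+1}$: there exists some $v \in \partial \ell_{\mathcal{S}_2}(\mathbf{q}^{k+1})$ with $v + \nabla_\mathbf{q} Q^{k+1}(\mathbf{q}^k) + \frac{1}{\eta_q}(\mathbf{q}^{k+1}-\mathbf{q}^k) = 0$. Setting $w^{k+1} := v + \nabla_\mathbf{q} Q^{k+1}(\mathbf{q}^{k+1})$ places $w^{k+1}$ in $\nabla_\mathbf{q} Q^{k+1}(\mathbf{q}^{k+1}) + \partial \ell_{\mathcal{S}_2}(\mathbf{q}^{k+1})$. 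Substituting the optimality identity gives $w^{k+1} = \nabla_\mathbf{q} Q^{k+1}(\mathbf{q}^{k+1}) - \nabla_\mathbf{q} Q^{k+1}(\mathbf{q}^k) - \frac{1}{\eta_q}(\mathbf{q}^{k+1}-\mathbf{q}^k)$, and the triangle inequality together with the Lipschitz hypothesis on $\nabla_\mathbf{q} Q^{k+1}$ (valid since both $\mathbf{q}^k,\mathbf{q}^{k+1}\in\mathcal{S}_2$ by construction) yields $\|w^{k+1}\| \leq (\hat{L}_q + \tfrac{1}{\eta_q})\|\mathbf{q}^{k+1}-\mathbf{q}^k\|$.

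Both parts are essentially textbook proximal-gradient calculations, so I do not anticipate a substantive obstacle. The only minor subtlety is ensuring that the descent lemma for $Q^{k+1}$ is applied along the segment $[\mathbf{q}^k,\mathbf{q}^{k+1}]$ that stays in $\mathcal{S}_2$; since $\mathcal{S}_2$ is a box (hence convex) and the prox of $\ell_{\mathcal{S}_2}$ is the projection onto $\mathcal{S}_2$, both iterates lie in $\mathcal{S}_2$ and the segment stays there, so the Lipschitz-on-$\mathcal{S}_2$ hypothesis suffices.
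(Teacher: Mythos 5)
Your proposal is correct and follows essentially the same route as the paper's proof: the prox-minimality inequality compared at $\mathbf{q}^{k+1}$ versus $\mathbf{q}^k$, combined with the descent lemma (\cref{lemma: Lipschitz continuous gradient inequality}) for $Q^{k+1}$, and then the first-order optimality condition of the prox subproblem plus the triangle inequality and Lipschitz continuity for the subgradient bound. No substantive differences.
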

\begin{proof}
 According to the update rule of $\mathbf{q}^{k+1}$ in \eqref{alg: alternating iteration} and \cref{def: prox}, we have the following inequality:
\begin{align*}
&\ell_{\mathcal{S}_2}(\mathbf{q}^{k+1}) + \frac{1}{2\eta_q}\|\mathbf{q}^{k+1}-(\mathbf{q}^k-\eta_q\nabla_q Q^{k+1}(\mathbf{q}^k))\|^2 \\
\leq &\ell_{\mathcal{S}_2}(\mathbf{q}^{k}) + \frac{1}{2\eta_q}\|\mathbf{q}^{k}-(\mathbf{q}^k-\eta_q\nabla_q Q^{k+1}(\mathbf{q}^k))\|^2.
\end{align*}
Eliminating the same terms in both sides of this inequality, we get:
\begin{align}
\label{eq: descent by proximal}
\ell_{\mathcal{S}_2}(\mathbf{q}^{k+1}) + \frac{1}{2\eta_q}\|\mathbf{q}^{k+1}-\mathbf{q}^k\|^2 + <\mathbf{q}^{k+1}-\mathbf{q}^k,\nabla_q Q^{k+1}(\mathbf{q}^k)> \leq \ell_{\mathcal{S}_2}(\mathbf{q}^{k}). 
\end{align}
Hence, we have:
\begin{align*}
&\ell_{\mathcal{S}_2}(\mathbf{q}^{k+1})+Q^{k+1}(\mathbf{q}^{k+1}) + \frac{1}{2}(\frac{1}{\eta_q}-\hat{L}_{q})\|\mathbf{q}^{k+1}-\mathbf{q}^{k}\|^2 \\
\leq &\ell_{\mathcal{S}_2}(\mathbf{q}^{k+1}) +\frac{1}{2\eta_q}\|\mathbf{q}^{k+1}-\mathbf{q}^{k}\|^2+ Q^{k+1}(\mathbf{q}^{k}) + <\nabla_\mathbf{q}Q^{k+1}(\mathbf{q}^{k}),\mathbf{q}^{k+1}-\mathbf{q}^k>\\
\leq & \ell_{\mathcal{S}_2}(\mathbf{q}^{k})+Q^{k+1}(\mathbf{q}^{k}),
\end{align*}
where the first inequality relies on the Lipschitz continuity of $\nabla_\mathbf{q} Q^{k+1}(\mathbf{q})$ and \cref{lemma: Lipschitz continuous gradient inequality}, and the second inequality relies on \eqref{eq: descent by proximal}. 
The first part is thereby proven.

According to the update rule of $\mathbf{q}^{k+1}$ in \eqref{alg: alternating iteration}, \cref{def: prox}, and the first-order optimal conditions of $\mathbf{q}^{k+1}$, it is known that there exists $v^{k+1}\in \partial \ell_{\mathcal{S}_2}(\mathbf{q}^{k+1})$ such that
\begin{align*}
v^{k+1}+\frac{1}{\eta_q}(\mathbf{q}^{k+1}-\mathbf{q}^k+\eta_q\nabla_q Q^{k+1}(\mathbf{q}^k))=0.
\end{align*}
Let $w^{k+1}=v^{k+1}+\nabla_q Q^{k+1}(\mathbf{q}^{k+1})$, then
\begin{align*}
\|w^{k+1}\| &\leq \|v^{k+1}+\nabla_q Q^{k+1}(\mathbf{q}^{k})\| + \|\nabla_q Q^{k+1}(\mathbf{q}^{k+1})-\nabla_q Q^{k+1}(\mathbf{q}^{k})\| \\
&\leq (\frac{1}{\eta_q}+\hat{L}_q)\|\mathbf{q}^{k+1}-\mathbf{q}^k\|.
\end{align*}
The latter part is thereby proven.
\end{proof}

\begin{lemma}
\label{lemma: Lipschitz continuous gradient of g wrt Z}
Fixing any $\mathbf{A}\in \mathcal{S}_0$ and any $\mathbf{q}\in \mathcal{S}_2$, $\nabla_\mathbf{Z} g(\mathbf{A},\mathbf{Z},\mathbf{q})$ is Lipschitz continuous with respect to $\mathbf{Z}$ on $\mathcal{S}_1$ with a Lipschitz constant $L_Z$.
\end{lemma}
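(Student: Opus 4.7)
The plan is to exploit the fact that $g(\mathbf{A},\mathbf{Z},\mathbf{q})$ is a rational function of $\mathbf{Z}$ whose denominator is bounded away from zero on $\mathcal{S}_1$, together with the compactness of $\mathcal{S}_1$, to produce a uniform bound on the Hessian of $g$ with respect to $\mathbf{Z}$. Concretely, write $g = \beta N(\mathbf{Z})/D(\mathbf{Z})$ where
\begin{align*}
N(\mathbf{Z}) &= \operatorname{Tr}\bigl(\mathbf{A}(\mathbf{I}-\mathbf{D}_\mathbf{q}^{-1/2}\mathbf{Z}\mathbf{Z}^\top\mathbf{D}_\mathbf{q}^{-1/2})\mathbf{A}^\top\bigr) + \lambda_M\operatorname{Tr}\bigl(\mathbf{A}\mathbf{Z}_\ell\mathbf{L}_\mathbf{M}\mathbf{Z}_\ell^\top\mathbf{A}^\top\bigr), \\
D(\mathbf{Z}) &= \operatorname{Tr}\bigl(\mathbf{A}\mathbf{Z}_\ell\mathbf{L}_\mathbf{C}\mathbf{Z}_\ell^\top\mathbf{A}^\top\bigr) + \epsilon.
\end{align*}
With $\mathbf{A}$ and $\mathbf{q}$ fixed, both $N$ and $D$ are quadratic polynomials in the entries of $\mathbf{Z}$, so they are $C^\infty$ and their gradients and Hessians are explicit.

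First I would establish the uniform lower bound $D(\mathbf{Z}) \geq \epsilon > 0$ on $\mathcal{S}_1$. Since $\mathbf{L}_\mathbf{C}$ is a graph Laplacian, it is positive semi-definite, so $\operatorname{Tr}(\mathbf{A}\mathbf{Z}_\ell\mathbf{L}_\mathbf{C}\mathbf{Z}_\ell^\top\mathbf{A}^\top) = \operatorname{Tr}((\mathbf{A}\mathbf{Z}_\ell)\mathbf{L}_\mathbf{C}(\mathbf{A}\mathbf{Z}_\ell)^\top)\geq 0$, giving $D(\mathbf{Z})\geq \epsilon$. This prevents any singularity of $N/D$ and ensures $g$, $\nabla_\mathbf{Z} g$, and $\nabla_\mathbf{Z}^2 g$ are all continuous on $\mathcal{S}_1$. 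Next, compute $\nabla_\mathbf{Z} g$ by the quotient rule:
\begin{align*}
\nabla_\mathbf{Z} g(\mathbf{A},\mathbf{Z},\mathbf{q}) = \beta\,\frac{\nabla_\mathbf{Z} N(\mathbf{Z})\, D(\mathbf{Z}) - N(\mathbf{Z})\, \nabla_\mathbf{Z} D(\mathbf{Z})}{D(\mathbf{Z})^2},
\end{align*}
where $\nabla_\mathbf{Z} N$ and $\nabla_\mathbf{Z} D$ are affine in $\mathbf{Z}$ (for instance, $\nabla_\mathbf{Z} D(\mathbf{Z}) = 2\,[\mathbf{A}^\top\mathbf{A}\mathbf{Z}_\ell\mathbf{L}_\mathbf{C},\, \mathbf{0}_{m\times(n-n_\ell)}]$, and analogously for $N$ using $\mathbf{D}_\mathbf{q}^{-1/2}$, whose entries are bounded by $\epsilon_q^{-1/2}$).

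The final step is to invoke compactness: $\mathcal{S}_1 = \{\mathbf{Z} : 0 \leq \mathbf{Z} \leq m_Z\}$ is closed and bounded in $\mathbb{R}^{m\times n}$. Since $\nabla_\mathbf{Z}^2 g$ is continuous on $\mathcal{S}_1$ (the denominator never vanishes), it attains a maximum operator norm $L_Z < \infty$ on $\mathcal{S}_1$. A standard convexity argument (the line segment between any two points $\mathbf{Z}_1,\mathbf{Z}_2\in\mathcal{S}_1$ lies in $\mathcal{S}_1$ because $\mathcal{S}_1$ is convex) then yields
\begin{align*}
\|\nabla_\mathbf{Z} g(\mathbf{A},\mathbf{Z}_1,\mathbf{q}) - \nabla_\mathbf{Z} g(\mathbf{A},\mathbf{Z}_2,\mathbf{q})\| \leq L_Z\, \|\mathbf{Z}_1 - \mathbf{Z}_2\|,
\end{align*}
which is the desired Lipschitz continuity.

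The main obstacle, and the only place where care is needed, is not the abstract argument but rather keeping track of the explicit bounds so that $L_Z$ is shown to depend only on the fixed quantities $m_Z, \epsilon, \epsilon_q, C_A, \|\mathbf{L}_\mathbf{M}\|, \|\mathbf{L}_\mathbf{C}\|, \lambda_M, \beta$ (and the dimensions), independent of $\mathbf{Z}\in\mathcal{S}_1$. If one prefers to avoid computing the Hessian directly, an alternative is to bound each term in the quotient-rule expression for $\nabla_\mathbf{Z} g(\mathbf{Z}_1)-\nabla_\mathbf{Z} g(\mathbf{Z}_2)$ by adding and subtracting cross terms and using $D(\mathbf{Z})\geq \epsilon$ to control the denominators, together with the affine dependence of $\nabla N, \nabla D$ on $\mathbf{Z}$ and boundedness of $N, D, \nabla N, \nabla D$ on $\mathcal{S}_1$. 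Either route gives the conclusion.
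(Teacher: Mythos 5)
Your proposal is correct, and your primary route differs from the paper's. The paper proceeds entirely by explicit first-order estimates: it writes $\nabla_\mathbf{Z} g = \nabla_\mathbf{Z} g_1/g_2 - g_1\nabla_\mathbf{Z} g_2/g_2^2$ with the same numerator/denominator split you use, verifies that $\nabla_\mathbf{Z} g_1$ and $\nabla_\mathbf{Z} g_2$ are affine in $\mathbf{Z}$ (hence Lipschitz with explicit constants $L_{g_1}, L_{g_2}$), bounds $|g_1|, |g_2|, \|\nabla_\mathbf{Z} g_1\|, \|\nabla_\mathbf{Z} g_2\|$ uniformly on the bounded sets, and then controls each difference by adding and subtracting cross terms, applying the Mean Value Theorem to $g_1$ and $g_2$ themselves, and using $g_2 \geq \epsilon$ for the denominators --- exactly the ``alternative'' you sketch in your last paragraph. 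This yields an explicit constant
$L_Z = \frac{L_{g_1}}{\epsilon}+\frac{2C_Z^2L_{g_1}L_{g_2}+M_{g_1}L_{g_2}}{\epsilon^2}+\frac{2C_Z^2L^2_{g_2}M_{g_1}M_{g_2}}{\epsilon^4}$
in terms of $m, \beta, \lambda_M, \epsilon, \epsilon_q, C_A, C_Z$ and $\|\mathbf{L}_\mathbf{M}\|, \|\mathbf{L}_\mathbf{C}\|$. Your main argument instead observes that $g$ is a ratio of quadratics whose denominator is bounded below by $\epsilon$ everywhere (by positive semi-definiteness of $\mathbf{L}_\mathbf{C}$), so $\nabla_\mathbf{Z}^2 g$ is continuous on the compact convex box $\mathcal{S}_1$ and attains a finite supremum, which serves as the Lipschitz constant via the mean value inequality along segments. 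This is shorter and cleaner, and it suffices for the lemma as stated, since only the existence of $L_Z$ is asserted and only existence is used downstream (the theorem's hypothesis $\eta_z < 1/\hat{L}_Z$ does not require a closed form). What the paper's route buys is a fully quantitative constant without ever forming the Hessian; what yours buys is brevity, at the cost of leaving $L_Z$ non-explicit unless one carries out the Hessian computation you defer.
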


\begin{proof}
Note that $\mathcal{S}_0$ and $\mathcal{S}_1$ are bounded sets, implying the existence of constants $C_A>0$ and $C_Z>0$ such that $\|\mathbf{A}\| \leq C_A,\forall \mathbf{A}\in \mathcal{S}_0$ and $\|\mathbf{Z}\|\leq C_Z, \forall \mathbf{Z}\in \mathcal{S}_1$. Considering the definition of $\mathcal{S}_2$, we have $\mathbf{q}\geq \epsilon_q>0, \forall \mathbf{q}\in \mathcal{S}_2$. Additionally, let $\|\mathbf{L}_\mathbf{M}\|,\|\mathbf{L}_\mathbf{C}\|\leq C_0 $ since $\mathbf{L}_\mathbf{M}$ and $\mathbf{L}_\mathbf{C}$ are given matrices. 

Define 
\begin{align*}
g_1(\mathbf{A},\mathbf{Z},\mathbf{q}):=\beta \cdot \left( \text{Tr}(\mathbf{A}(\mathbf{I}-\mathbf{D}_\mathbf{q}^{-1/2}\mathbf{ZZ}^\top\mathbf{D}_\mathbf{q}^{-1/2})\mathbf{A}^\top) + \lambda_M \text{Tr}(\mathbf{A}\mathbf{Z}_\ell\mathbf{L}_\mathbf{M}\mathbf{Z}_\ell^\top\mathbf{A}^\top)\right)
\end{align*}
and 
\begin{align*}
g_2(\mathbf{A},\mathbf{Z}):={\text{Tr}(\mathbf{A}\mathbf{Z}_\ell\mathbf{L}_\mathbf{C}\mathbf{Z}_\ell^\top\mathbf{A}^\top)}+\epsilon.
\end{align*}
Then, we obtain
\begin{align*}
g(\mathbf{A},\mathbf{Z},\mathbf{q})=\frac{g_1(\mathbf{A},\mathbf{Z},\mathbf{q})}{g_2(\mathbf{A},\mathbf{Z})}.
\end{align*}

since $\mathbf{L}_\mathbf{C}$ is a given positive semi-definite matrix, we have $g_2(\mathbf{A},\mathbf{Z})\geq \epsilon>0$.
Deriving the gradients with respect to $\mathbf{Z}$, we obtain
\begin{align*}
\nabla_\mathbf{Z}g_1(\mathbf{A},\mathbf{Z},\mathbf{q})&=2\beta \cdot \left(-\mathbf{D}_\mathbf{q}^{-1/2}\mathbf{A}^\top\mathbf{A}\mathbf{D}_\mathbf{q}^{-1/2}\mathbf{Z}+\lambda_M[\mathbf{A}^\top\mathbf{A}\mathbf{Z}_\ell\mathbf{L}_\mathbf{M},\mathbf{0}]\right)\\
\nabla_\mathbf{Z}g_2(\mathbf{A},\mathbf{Z})&=2[\mathbf{A}^\top\mathbf{A}\mathbf{Z}_\ell\mathbf{L}_\mathbf{C},\mathbf{0}].
\end{align*}
This leads to the Lipschitz continuity of $\nabla_\mathbf{Z}g_1(\mathbf{A},\mathbf{Z},\mathbf{q})$:
\begin{align*}
&\|\nabla_\mathbf{Z}g_1(\mathbf{A},\mathbf{Z}_1,\mathbf{q})-\nabla_\mathbf{Z}g_1(\mathbf{A},\mathbf{Z}_2,\mathbf{q})\| \\
&\leq 2\beta \cdot \|\left(-\mathbf{D}_\mathbf{q}^{-1/2}\mathbf{A}^\top\mathbf{A}\mathbf{D}_\mathbf{q}^{-1/2}\right)\left(\mathbf{Z}_1-\mathbf{Z}_2\right)+\lambda_M[\mathbf{A}^\top\mathbf{A}\left(\mathbf{Z}_{1_\ell}-\mathbf{Z}_{2_\ell}\right)\mathbf{L}_\mathbf{M},\mathbf{0}]\| \\
& \leq 2\beta \cdot \left(\frac{mC_A^2}{\epsilon_q}+\lambda_MC_A^2C_0\right)\|\mathbf{Z_1}-\mathbf{Z}_2\| \\
& = L_{g_1}\|\mathbf{Z_1}-\mathbf{Z}_2\|,
\end{align*}
and the Lipschitz continuity of $\nabla_\mathbf{Z}g_2(\mathbf{A},\mathbf{Z})$
\begin{align*}
\|\nabla_\mathbf{Z}g_2(\mathbf{A},\mathbf{Z}_1)-\nabla_\mathbf{Z}g_2(\mathbf{A},\mathbf{Z}_2)\| &\leq \|2[\mathbf{A}^\top\mathbf{A}\left(\mathbf{Z}_{1_\ell}-\mathbf{Z}_{2_\ell}\right)\mathbf{L}_\mathbf{C},\mathbf{0}]\|
 \\
& \leq 2C_A^2C_0\|\mathbf{Z}_1-\mathbf{Z}_2\| \\
&= L_{g_2}\|\mathbf{Z}_1-\mathbf{Z}_2\|,
\end{align*}
where $L_{g_1}:=2\beta \cdot \left(\frac{mC_A^2}{\epsilon_q}+\lambda_MC_A^2C_0\right)$ and $L_{g_2}:=2C_A^2C_0$. 

Similarly, from the boundedness of $\mathbf{A}$, $\mathbf{Z}$, and $\mathbf{q}$, we have:
\begin{align}
\label{ineq: boundedness of gradients}
&\|\nabla_\mathbf{Z} {g_1}(\mathbf{A},\mathbf{Z},\mathbf{q})\| \leq C_ZL_{g_1}, \nonumber\\
&\|\nabla_\mathbf{Z} {g_2}(\mathbf{A},\mathbf{Z})\| \leq C_ZL_{g_2}, \nonumber\\
&\|g_1(\mathbf{A},\mathbf{Z},\mathbf{q})\| \leq \beta \cdot \left((\sqrt{m}+\frac{mC_Z^2}{\epsilon_q})C_A^2+\lambda_M C_0C_A^2C_Z^2\right)=:M_{g_1},\nonumber\\
&\|g_2(\mathbf{A},\mathbf{Z})\| \leq C_0C_A^2C_Z^2+\epsilon=:M_{g_2}.
\end{align}
As $g(\mathbf{A},\mathbf{Z},\mathbf{q})=\frac{g_1(\mathbf{A},\mathbf{Z},\mathbf{q})}{g_2(\mathbf{A},\mathbf{Z})}$, the gradient is given by:
\begin{align}
\label{eq: gradient of g}
\nabla_Z g(\mathbf{A},\mathbf{Z},\mathbf{q}) = \frac{\nabla_Z g_1(\mathbf{A},\mathbf{Z},\mathbf{q})}{g_2(\mathbf{A},\mathbf{Z})} -  \frac{g_1(\mathbf{A},\mathbf{Z},\mathbf{q}) \cdot \nabla_Z g_2(\mathbf{A},\mathbf{Z})}{g_2(\mathbf{A},\mathbf{Z})\cdot g_2(\mathbf{A},\mathbf{Z})}.
\end{align}
We now derive the Lipschitz continuity of the first component in \eqref{eq: gradient of g} as follows:
\begin{align}
\label{ineq:g regularity1}
&\|\frac{\nabla_Z g_1(\mathbf{A},\mathbf{Z}_1,\mathbf{q})}{g_2(\mathbf{A},\mathbf{Z}_1)} - \frac{\nabla_Z g_1(\mathbf{A},\mathbf{Z}_2,\mathbf{q})}{g_2(\mathbf{A},\mathbf{Z}_2)}\| \nonumber\\
 \leq &\|\frac{\nabla_Z g_1(\mathbf{A},\mathbf{Z}_1,\mathbf{q})}{g_2(\mathbf{A},\mathbf{Z}_1)} -\frac{\nabla_Z g_1(\mathbf{A},\mathbf{Z}_2,\mathbf{q})}{g_2(\mathbf{A},\mathbf{Z}_1)}\| +\|\frac{\nabla_Z g_1(\mathbf{A},\mathbf{Z}_2,\mathbf{q})}{g_2(\mathbf{A},\mathbf{Z}_1)} - \frac{\nabla_Z g_1(\mathbf{A},\mathbf{Z}_2,\mathbf{q})}{g_2(\mathbf{A},\mathbf{Z}_2)} \| \nonumber\\
 \leq &\frac{\|\nabla_\mathbf{Z}g_1(\mathbf{A},\mathbf{Z}_1,\mathbf{q})-\nabla_\mathbf{Z}g_1(\mathbf{A},\mathbf{Z}_2,\mathbf{q})\|}{\|g_2(\mathbf{A},\mathbf{Z}_1)\|} \nonumber\\
&+ \frac{\|\nabla g_1(\mathbf{A},\mathbf{Z}_2,\mathbf{q})\|}{\|g_2(\mathbf{A},\mathbf{Z}_1)g_2(\mathbf{A},\mathbf{Z}_2)\|} \cdot \|g_2(\mathbf{A},\mathbf{Z}_1)-g_2(\mathbf{A},\mathbf{Z}_2)\| \nonumber\\
\leq &\frac{L_{g_1}}{\epsilon}\|\mathbf{Z}_1-\mathbf{Z}_2\| + \frac{C_ZL_{g_1}}{\epsilon^2} \|(\nabla_\mathbf{Z} g_2(\mathbf{A},\theta \mathbf{Z}_1+(1-\theta)\mathbf{Z}_2),\mathbf{Z}_1-\mathbf{Z}_2)\| \nonumber\\
\leq &\frac{L_{g_1}}{\epsilon}\|\mathbf{Z}_1-\mathbf{Z}_2\| + \frac{C_Z^2L_{g_1}L_{g_2}}{\epsilon^2}\|\mathbf{Z}_1-\mathbf{Z}_2\|,
\end{align}
where the third inequality relies on the Mean Value Theorem and $\theta \in (0,1)$.

Subsequently, we derive the Lipschitz continuity of the second component in \eqref{eq: gradient of g}. 
First, using the triangle inequality, Lipschitz continuity of $\nabla_\mathbf{Z} g_2(\mathbf{A},\mathbf{Z})$, and the inequalities in \eqref{ineq: boundedness of gradients}, we obtain
\begin{align}
\label{ineq: ineq:g regularity2 part 1}
&\|g_1(\mathbf{A},\mathbf{Z}_1,\mathbf{q})\cdot \nabla_\mathbf{Z} g_2(\mathbf{A},\mathbf{Z}_1)-g_1(\mathbf{A},\mathbf{Z}_2,\mathbf{q})\cdot \nabla_\mathbf{Z} g_2(\mathbf{A},\mathbf{Z}_2)\| \nonumber \\
 \leq &\|g_1(\mathbf{A},\mathbf{Z}_1,\mathbf{q})\cdot \nabla_\mathbf{Z} g_2(\mathbf{A},\mathbf{Z}_1)-g_1(\mathbf{A},\mathbf{Z}_1,\mathbf{q})\cdot \nabla_\mathbf{Z} g_2(\mathbf{A},\mathbf{Z}_2)\| \nonumber\\
&+ \|g_1(\mathbf{A},\mathbf{Z}_1,\mathbf{q})\cdot \nabla_\mathbf{Z} g_2(\mathbf{A},\mathbf{Z}_2)-g_1(\mathbf{A},\mathbf{Z}_2,\mathbf{q})\cdot \nabla_\mathbf{Z} g_2(\mathbf{A},\mathbf{Z}_2)\| \nonumber\\
 \leq &M_{g_1}L_{g_2}\|\mathbf{Z}_1-\mathbf{Z}_2\| + C_ZL_{g_2}\|(\nabla g_1(\mathbf{A},\theta\mathbf{Z}_1+(1-\theta)\mathbf{Z}_2,\mathbf{q}),\mathbf{Z}_1-\mathbf{Z}_2)\| \nonumber\\
 \leq &M_{g_1}L_{g_2}\|\mathbf{Z}_1-\mathbf{Z}_2\| + C_Z^2L_{g_1}L_{g_2}\|\mathbf{Z}_1-\mathbf{Z}_2\|
\end{align}
and 
\begin{align}
\label{ineq: ineq:g regularity2 part 2}
&\|g_2(\mathbf{A},\mathbf{Z}_1) \cdot g_2(\mathbf{A},\mathbf{Z}_1) - g_2(\mathbf{A},\mathbf{Z}_2) \cdot g_2(\mathbf{A},\mathbf{Z}_2)\| \nonumber\\
& \leq \|(2g_2(\mathbf{A},\theta \mathbf{Z}_1+(1-\theta)\mathbf{Z}_2)\nabla g_2(\mathbf{A},\theta \mathbf{Z}_1+(1-\theta)\mathbf{Z}_2),\mathbf{Z}_1-\mathbf{Z}_2)\| \nonumber\\
& \leq 2C_ZL_{g_2}M_{g_2}\|\mathbf{Z}_1-\mathbf{Z}_2\|.
\end{align}
Consequently, combining \eqref{ineq: ineq:g regularity2 part 1} and \eqref{ineq: ineq:g regularity2 part 2}, we obtain Lipschitz continuity of the second component in \eqref{eq: gradient of g} as follows:
\begin{align}
\label{ineq:g regularity2}
&\|\frac{g_1(\mathbf{A},\mathbf{Z}_1,\mathbf{q}) \cdot \nabla g_2(\mathbf{A},\mathbf{Z}_1)}{g_2(\mathbf{A},\mathbf{Z}_1)\cdot g_2(\mathbf{A},\mathbf{Z}_1)} - \frac{g_1(\mathbf{A},\mathbf{Z}_2,\mathbf{q}) \cdot \nabla g_2(\mathbf{A},\mathbf{Z}_2)}{g_2(\mathbf{A},\mathbf{Z}_2)\cdot g_2(\mathbf{A},\mathbf{Z}_2)}\| \nonumber\\
\leq &\|\frac{g_1(\mathbf{A},\mathbf{Z}_1,\mathbf{q}) \cdot \nabla g_2(\mathbf{A},\mathbf{Z}_1)}{g_2(\mathbf{A},\mathbf{Z}_1)\cdot g_2(\mathbf{A},\mathbf{Z}_1)} - \frac{g_1(\mathbf{A},\mathbf{Z}_2,\mathbf{q}) \cdot \nabla g_2(\mathbf{A},\mathbf{Z}_2)}{g_2(\mathbf{A},\mathbf{Z}_1)\cdot g_2(\mathbf{A},\mathbf{Z}_1)}\| \nonumber\\
&+ \|\frac{g_1(\mathbf{A},\mathbf{Z}_2,\mathbf{q}) \cdot \nabla g_2(\mathbf{A},\mathbf{Z}_2)}{g_2(\mathbf{A},\mathbf{Z}_1)\cdot g_2(\mathbf{A},\mathbf{Z}_1)} - \frac{g_1(\mathbf{A},\mathbf{Z}_2,\mathbf{q}) \cdot \nabla g_2(\mathbf{A},\mathbf{Z}_2)}{g_2(\mathbf{A},\mathbf{Z}_2)\cdot g_2(\mathbf{A},\mathbf{Z}_2)}\| \nonumber\\
\leq &\frac{M_{g_1}L_{g_2}+C_Z^2L_{g_1}L_{g_2}}{\epsilon^2} \|\mathbf{Z}_1-\mathbf{Z}_2\| + C_ZL_{g_2}M_{g_1}\|\frac{g_2(\mathbf{A},\mathbf{Z}_1)^2-g_2(\mathbf{A},\mathbf{Z}_2)^2}{g_2(\mathbf{A},\mathbf{Z}_1)^2g_2(\mathbf{A},\mathbf{Z}_2)^2}\| \nonumber\\
\leq &\frac{M_{g_1}L_{g_2}+C_Z^2L_{g_1}L_{g_2}}{\epsilon^2} \|\mathbf{Z}_1-\mathbf{Z}_2\| + \frac{2C_Z^2L^2_{g_2}M_{g_1}M_{g_2}}{\epsilon^4}\|\mathbf{Z}_1-\mathbf{Z}_2\|
\end{align}
Combine \eqref{ineq:g regularity1} and \eqref{ineq:g regularity2}, we obtain the Lipschitz continuity of $\nabla_{\mathbf{Z}} g(\mathbf{A},\mathbf{Z},\mathbf{q})$:
\begin{align*}
&\|\nabla_\mathbf{Z} g(\mathbf{A},\mathbf{Z}_1,\mathbf{q})-\nabla_\mathbf{Z} g(\mathbf{A},\mathbf{Z}_2,\mathbf{q})\| \\
\leq &\left(\frac{L_{g_1}}{\epsilon}+\frac{2C_Z^2L_{g_1}L_{g_2}+M_{g_1}L_{g_2}}{\epsilon^2}+\frac{2C_Z^2L^2_{g_2}M_{g_1}M_{g_2}}{\epsilon^4}\right)\|\mathbf{Z}_1-\mathbf{Z}_2\|,
\end{align*}
which completes the proof.
\end{proof}

\begin{lemma} 
\label{lemma: Lipschitz continuous gradient of g0 wrt Z}
For any $\mathbf{q}\in \mathcal{S}_2$, $\nabla_\mathbf{Z} g_0(\mathbf{Z},\mathbf{q})$ is Lipschitz continuous with respect to $\mathbf{Z}$ on $\mathcal{S}_1$ with a Lipschitz constant $C_{g_0}>0$.
\end{lemma}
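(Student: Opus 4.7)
The plan is to differentiate $g_0$ explicitly, observe that $\nabla_{\mathbf{Z}} g_0(\mathbf{Z},\mathbf{q})$ is a polynomial of total degree three in $\mathbf{Z}$, and then exploit the boundedness of $\mathcal{S}_1$ (which forces $0 \leq \mathbf{Z} \leq m_Z$ entrywise) and of $\mathcal{S}_2$ (which forces $\|\mathbf{q}\| \leq \sqrt{m}\,C_q$) to bound the difference of gradients by a constant times $\|\mathbf{Z}_1 - \mathbf{Z}_2\|$.

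First, I would reuse the gradient already derived in the paper,
\begin{align*}
\nabla_{\mathbf{Z}} g_0(\mathbf{Z},\mathbf{q}) = \lambda_0\bigl((\mathbf{Z}\mathbf{Z}^\top\mathbf{1}_m - \mathbf{q})\mathbf{1}_m^\top \mathbf{Z} + \mathbf{1}_m(\mathbf{Z}\mathbf{Z}^\top\mathbf{1}_m - \mathbf{q})^\top \mathbf{Z}\bigr),
\end{align*}
and then fix a constant $C_Z > 0$ with $\|\mathbf{Z}\| \leq C_Z$ for every $\mathbf{Z} \in \mathcal{S}_1$, which exists because $\mathcal{S}_1$ is a compact box in $\mathbb{R}^{m\times n}$. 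By symmetry, it suffices to bound the first summand $T(\mathbf{Z}) := (\mathbf{Z}\mathbf{Z}^\top\mathbf{1}_m - \mathbf{q})\mathbf{1}_m^\top \mathbf{Z}$; the second summand is handled identically.

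Next I would split the difference $T(\mathbf{Z}_1) - T(\mathbf{Z}_2)$ by adding and subtracting intermediate terms, obtaining
\begin{align*}
T(\mathbf{Z}_1) - T(\mathbf{Z}_2) &= \bigl(\mathbf{Z}_1\mathbf{Z}_1^\top - \mathbf{Z}_2\mathbf{Z}_2^\top\bigr)\mathbf{1}_m \mathbf{1}_m^\top \mathbf{Z}_1 \\
&\quad + (\mathbf{Z}_2\mathbf{Z}_2^\top\mathbf{1}_m - \mathbf{q}) \mathbf{1}_m^\top (\mathbf{Z}_1 - \mathbf{Z}_2).
\end{align*}
The first piece is then handled with the standard identity
\begin{align*}
\mathbf{Z}_1\mathbf{Z}_1^\top - \mathbf{Z}_2\mathbf{Z}_2^\top = (\mathbf{Z}_1 - \mathbf{Z}_2)\mathbf{Z}_1^\top + \mathbf{Z}_2(\mathbf{Z}_1 - \mathbf{Z}_2)^\top,
\end{align*}
which bounds its norm by $(C_Z + C_Z)\|\mathbf{Z}_1 - \mathbf{Z}_2\|$ after applying $\|\mathbf{1}_m\mathbf{1}_m^\top\mathbf{Z}_1\| \leq m\,C_Z$. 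The second piece is bounded by $(C_Z^3 + \sqrt{m}\,C_q)\,m\,\|\mathbf{Z}_1 - \mathbf{Z}_2\|$ using $\|\mathbf{Z}_2\mathbf{Z}_2^\top\mathbf{1}_m\| \leq \sqrt{m}\,C_Z^2$ and $\|\mathbf{q}\| \leq \sqrt{m}\,C_q$. Collecting the constants yields $C_{g_0} > 0$ depending only on $\lambda_0$, $m$, $C_Z$, and $C_q$.

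The calculation is routine; the only mild subtlety is the cubic dependence on $\mathbf{Z}$, which requires the telescoping above to linearize the difference correctly. Since this is a polynomial map on a bounded set, no further regularity arguments (as were needed for $g$ in \cref{lemma: Lipschitz continuous gradient of g wrt Z}, where a denominator had to be kept away from zero) are required, so this lemma is strictly easier than the previous one.
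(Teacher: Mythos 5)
Your proposal is correct and follows essentially the same route as the paper's proof: both start from the same explicit gradient formula, exploit the boundedness of $\mathcal{S}_1$ and $\mathcal{S}_2$, and linearize the cubic difference by adding and subtracting the mixed term $(\mathbf{Z}_2\mathbf{Z}_2^\top\mathbf{1}_m-\mathbf{q})\mathbf{1}_m^\top\mathbf{Z}_1$ before applying the triangle inequality and submultiplicativity. The only blemishes are cosmetic bookkeeping slips in the constants (e.g., the $C_Z^3$ in your bound for the second piece should be $C_Z^2$), which do not affect the existence of a valid Lipschitz constant $C_{g_0}$.
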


\begin{proof}
Since $
g_0(\mathbf{Z},\mathbf{q}) := \frac{\lambda_0}{2}\|\mathbf{q}-\mathbf{ZZ}^\top\mathbf{1}_m\|^2$, we can obtain the gradient of $g_0$ with respect to $\mathbf{Z}$ as follows:
\begin{align*}
\nabla_\mathbf{Z} g_0(\mathbf{Z},\mathbf{q}) = \lambda_0\cdot \left(\mathbf{1}_m(\mathbf{ZZ}^\top\mathbf{1}_m-\mathbf{q})^\top+(\mathbf{ZZ}^\top\mathbf{1}_m-\mathbf{q})\mathbf{1}_m^\top\right)\mathbf{Z}. 
\end{align*}
Note that $\mathcal{S}_1$ is a bounded set, implying the existence of constant $C_Z>0$ such that $\|\mathbf{Z}\|\leq C_Z, \forall \mathbf{Z}\in \mathcal{S}_1$. Considering the definition of $\mathcal{S}_2$, we have $C_q \geq \mathbf{q}\geq \epsilon_q>0, \forall \mathbf{q}\in \mathcal{S}_2$. Then, using the triangle inequality and the boundedness of $\mathbf{Z}$ and $\mathbf{q}$, we obtain the following two inequalities for any $\mathbf{Z}_1,\mathbf{Z}_2\in \mathcal{S}_1$: 
\begin{align*}
&\|\mathbf{1}_m(\mathbf{Z}_1\mathbf{Z}_1^\top\mathbf{1}_m-\mathbf{q})^\top - \mathbf{1}_m(\mathbf{Z}_2\mathbf{Z}_2^\top\mathbf{1}_m-\mathbf{q})^\top\| \\
\leq &\|\mathbf{1}_m(\mathbf{Z}_1\mathbf{Z}_1^\top\mathbf{1}_m-\mathbf{q})^\top - \mathbf{1}_m(\mathbf{Z}_2\mathbf{Z}_1^\top\mathbf{1}_m-\mathbf{q})^\top\| \\
&+\|\mathbf{1}_m(\mathbf{Z}_2\mathbf{Z}_1^\top\mathbf{1}_m-\mathbf{q})^\top - \mathbf{1}_m(\mathbf{Z}_2\mathbf{Z}_2^\top\mathbf{1}_m-\mathbf{q})^\top\| \\
\leq & 2mC_Z\|\mathbf{Z}_1-\mathbf{Z}_2\|,
\end{align*}
and
\begin{align*}
&\|\mathbf{1}_m(\mathbf{Z}_1\mathbf{Z}_1^\top\mathbf{1}_m-\mathbf{q})^\top\mathbf{Z}_1 - \mathbf{1}_m(\mathbf{Z}_2\mathbf{Z}_2^\top\mathbf{1}_m-\mathbf{q})^\top\mathbf{Z}_2\| \\ \leq 
& \|\mathbf{1}_m(\mathbf{Z}_1\mathbf{Z}_1^\top\mathbf{1}_m-\mathbf{q})^\top\mathbf{Z}_1 - \mathbf{1}_m(\mathbf{Z}_2\mathbf{Z}_2^\top\mathbf{1}_m-\mathbf{q})^\top\mathbf{Z}_1\|\\ &+\|\mathbf{1}_m(\mathbf{Z}_2\mathbf{Z}_2^\top\mathbf{1}_m-\mathbf{q})^\top\mathbf{Z}_1 - \mathbf{1}_m(\mathbf{Z}_2\mathbf{Z}_2^\top\mathbf{1}_m-\mathbf{q})^\top\mathbf{Z}_2\| \\
\leq  &(2mC_Z^2+mC_Z^2+mC_q)\|\mathbf{Z}_1-\mathbf{Z}_2\|. 
\end{align*}
Combining the above two inequalities, we obtain
\begin{align*}
\|\nabla_\mathbf{Z} g_0(\mathbf{Z}_1,\mathbf{q})-\nabla_\mathbf{Z} g_0(\mathbf{Z}_2,\mathbf{q})\|\leq 2\lambda_0\cdot (2mC_Z^2+mC_Z^2+mC_q)\|\mathbf{Z}_1-\mathbf{Z}_2\|.
\end{align*}
Define $C_{g_0}$ as $C_{g_0}:=2\lambda_0\cdot (2mC_Z^2+mC_Z^2+mC_q)$, and we complete the proof.
\end{proof}

\begin{lemma}
\label{lemma: Lipschitz continuous gradient of g wrt q}
Fixing any $\mathbf{A}\in \mathcal{S}_0$ and any $\mathbf{Z}\in \mathcal{S}_1$,  $\nabla_\mathbf{q} g(\mathbf{A},\mathbf{Z},\mathbf{q})$ is Lipschitz continuous with respect to $\mathbf{q}$ on $\mathcal{S}_2$ with a Lipschitz constant $L_q$.
\end{lemma}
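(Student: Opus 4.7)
The plan is to mirror the structure of the analogous earlier lemma on $\nabla_\mathbf{Z} g$ but exploit the fact that the $\mathbf{q}$-dependence of $g$ is much simpler. First I would observe that $g_2(\mathbf{A},\mathbf{Z})=\mathrm{Tr}(\mathbf{A}\mathbf{Z}_\ell\mathbf{L}_\mathbf{C}\mathbf{Z}_\ell^\top\mathbf{A}^\top)+\epsilon$ and the $\lambda_M$-term in the numerator of $g$ are independent of $\mathbf{q}$, and that $g_2\geq \epsilon>0$. Thus only the piece $-\beta\,\mathrm{Tr}(\mathbf{A}\mathbf{D}_\mathbf{q}^{-1/2}\mathbf{Z}\mathbf{Z}^\top\mathbf{D}_\mathbf{q}^{-1/2}\mathbf{A}^\top)/g_2(\mathbf{A},\mathbf{Z})$ contributes to $\nabla_\mathbf{q} g$, and the prefactor $\beta/g_2(\mathbf{A},\mathbf{Z})$ is a constant in $\mathbf{q}$ bounded above by $\beta/\epsilon$.

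Next I would compute the gradient explicitly, reusing the derivation already performed in the paper for $\nabla_\mathbf{q} Q$:
\begin{align*}
\nabla_\mathbf{q} g(\mathbf{A},\mathbf{Z},\mathbf{q}) = \frac{\beta}{g_2(\mathbf{A},\mathbf{Z})}\,\mathrm{diag}\!\left(\mathbf{Z}\mathbf{Z}^\top\,\mathrm{diag}(\mathbf{q}^{-1/2})\,\mathbf{A}^\top\mathbf{A}\,\mathrm{diag}(\mathbf{q}^{-3/2})\right).
\end{align*}
On the compact set $\mathcal{S}_2=\{\mathbf{q}:\epsilon_q\leq \mathbf{q}\leq C_q\}$ the scalar maps $q\mapsto q^{-1/2}$ and $q\mapsto q^{-3/2}$ are $C^\infty$ with bounded derivatives, so the diagonal-matrix maps $\phi_1(\mathbf{q}):=\mathrm{diag}(\mathbf{q}^{-1/2})$ and $\phi_2(\mathbf{q}):=\mathrm{diag}(\mathbf{q}^{-3/2})$ are Lipschitz, with constants (for example) $L_1=\tfrac{1}{2}\epsilon_q^{-3/2}$ and $L_2=\tfrac{3}{2}\epsilon_q^{-5/2}$, and uniformly bounded in norm by $\epsilon_q^{-1/2}$ and $\epsilon_q^{-3/2}$ respectively. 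I would also use $\|\mathbf{A}\|\leq C_A$ from $\mathcal{S}_0$ and $\|\mathbf{Z}\|\leq C_Z$ from $\mathcal{S}_1$ (both bounded sets, as already noted in the earlier lemma).

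The final step is a standard add-subtract argument on the product $\phi_1(\mathbf{q})\mathbf{A}^\top\mathbf{A}\phi_2(\mathbf{q})$:
\begin{align*}
&\|\phi_1(\mathbf{q}_1)\mathbf{A}^\top\mathbf{A}\phi_2(\mathbf{q}_1)-\phi_1(\mathbf{q}_2)\mathbf{A}^\top\mathbf{A}\phi_2(\mathbf{q}_2)\| \\
\leq\;& \|\phi_1(\mathbf{q}_1)-\phi_1(\mathbf{q}_2)\|\,C_A^2\,\epsilon_q^{-3/2} + \epsilon_q^{-1/2}\,C_A^2\,\|\phi_2(\mathbf{q}_1)-\phi_2(\mathbf{q}_2)\| \\
\leq\;& C_A^2\bigl(\epsilon_q^{-3/2}L_1+\epsilon_q^{-1/2}L_2\bigr)\|\mathbf{q}_1-\mathbf{q}_2\|,
\end{align*}
after which multiplying by $\|\mathbf{Z}\mathbf{Z}^\top\|\leq C_Z^2$, using $\|\mathrm{diag}(\mathbf{M})\|\leq \|\mathbf{M}\|$, and absorbing the bounded scalar prefactor $\beta/g_2(\mathbf{A},\mathbf{Z})\leq \beta/\epsilon$ produces the desired bound with
\begin{align*}
L_q := \frac{\beta\,C_Z^2\,C_A^2}{\epsilon}\bigl(\epsilon_q^{-3/2}L_1+\epsilon_q^{-1/2}L_2\bigr).
\end{align*}

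The only real obstacle is the bookkeeping of the add-subtract split on the three-factor product, and making sure the strict lower bound $\mathbf{q}\geq \epsilon_q>0$ (which is exactly why the box constraint was added in the model) is used to keep $\phi_1,\phi_2$ Lipschitz; without it, $\mathbf{q}^{-1/2}$ and $\mathbf{q}^{-3/2}$ would blow up and no uniform constant would exist. The rest is routine and parallels the gradient-Lipschitz argument for the $\mathbf{Z}$-variable given earlier in the appendix.
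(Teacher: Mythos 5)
Your proposal is correct and follows essentially the same route as the paper: isolate the single $\mathbf{q}$-dependent term, write the explicit gradient $\frac{\beta}{g_2(\mathbf{A},\mathbf{Z})}\,\mathrm{diag}(\mathbf{Z}\mathbf{Z}^\top\mathrm{diag}(\mathbf{q}^{-1/2})\mathbf{A}^\top\mathbf{A}\,\mathrm{diag}(\mathbf{q}^{-3/2}))$, and run an add--subtract argument using the lower bound $\mathbf{q}\geq\epsilon_q$ to get Lipschitz control of $\mathbf{q}^{-1/2}$ and $\mathbf{q}^{-3/2}$. The only cosmetic difference is that you obtain the scalar Lipschitz constants via bounded derivatives on $[\epsilon_q,C_q]$, whereas the paper derives them by explicit algebraic manipulation (yielding a slightly looser constant involving $C_q$ and factors of $\sqrt{m}$); both are valid.
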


\begin{proof}
Note that $\mathcal{S}_0$ and $\mathcal{S}_1$ are bounded sets, implying the existence of constants $C_A>0$ and $C_Z>0$ such that $\|\mathbf{A}\| \leq C_A,\forall \mathbf{A}\in \mathcal{S}_0$ and $\|\mathbf{Z}\|\leq C_Z, \forall \mathbf{Z}\in \mathcal{S}_1$. Considering the definition of $\mathcal{S}_2$, we have $C_q \geq \mathbf{q}\geq \epsilon_q>0, \forall \mathbf{q}\in \mathcal{S}_2$. Additionally, let $\|\mathbf{L}_\mathbf{M}\|,\|\mathbf{L}_\mathbf{C}\|\leq C_0 $ since $\mathbf{L}_\mathbf{M}$ and $\mathbf{L}_\mathbf{C}$ are given matrices. 

Let $g_2(\mathbf{A},\mathbf{Z}):={\text{Tr}(\mathbf{A}\mathbf{Z}_\ell\mathbf{L}_\mathbf{C}\mathbf{Z}_\ell^\top\mathbf{A}^\top)}+\epsilon$. since $\mathbf{L}_\mathbf{C}$ is a given positive semi-definite matrix, we have $g_2(\mathbf{A},\mathbf{Z})\geq \epsilon>0$.
Then, the gradient with respect to $\mathbf{q}$ is given by:
\begin{align*}
\nabla_\mathbf{q}g(\mathbf{A},\mathbf{Z},\mathbf{q}) = \frac{\beta}{g_2(\mathbf{A},\mathbf{Z})}diag(\mathbf{ZZ}^\top diag(\mathbf{q}^{-1/2})\mathbf{A}^\top\mathbf{A}diag(\mathbf{q}^{-3/2}))
\end{align*}
Using the triangle inequality and the boundedness of $\mathbf{A}$, $\mathbf{Z}$, and $\mathbf{q}$, we have 
\begin{align}
\label{eq: q lipschitz differentiable}
&\|\nabla_\mathbf{q}g(\mathbf{A},\mathbf{Z},\mathbf{q}_1)-\nabla_\mathbf{q}g(\mathbf{A},\mathbf{Z},\mathbf{q}_2)\| \nonumber \\
=&\|\frac{\beta}{g_2(\mathbf{A},\mathbf{Z})}diag(\mathbf{ZZ}^\top diag(\mathbf{q}_1^{-1/2})\mathbf{A}^\top\mathbf{A}diag(\mathbf{q}_1^{-3/2})) \nonumber\\
&- \frac{\beta}{g_2(\mathbf{A},\mathbf{Z})}diag(\mathbf{ZZ}^\top diag(\mathbf{q}_2^{-1/2})\mathbf{A}^\top\mathbf{A}diag(\mathbf{q}_2^{-3/2}))\| \nonumber\\
\leq & \|\frac{\beta}{g_2(\mathbf{A},\mathbf{Z})}diag(\mathbf{ZZ}^\top diag(\mathbf{q}_1^{-1/2})\mathbf{A}^\top\mathbf{A}diag(\mathbf{q}_1^{-3/2})) \nonumber\\
&- \frac{\beta}{g_2(\mathbf{A},\mathbf{Z})}diag(\mathbf{ZZ}^\top diag(\mathbf{q}_1^{-1/2})\mathbf{A}^\top\mathbf{A}diag(\mathbf{q}_2^{-3/2}))\| \\
&+ \|\frac{\beta}{g_2(\mathbf{A},\mathbf{Z})}diag(\mathbf{ZZ}^\top diag(\mathbf{q}_1^{-1/2})\mathbf{A}^\top\mathbf{A}diag(\mathbf{q}_2^{-3/2})) \nonumber\\
&- \frac{\beta}{g_2(\mathbf{A},\mathbf{Z})}diag(\mathbf{ZZ}^\top diag(\mathbf{q}_2^{-1/2})\mathbf{A}^\top\mathbf{A}diag(\mathbf{q}_2^{-3/2}))\| \nonumber \\
\leq & \frac{\beta C_A^2C_Z^2}{\epsilon}\|\mathbf{q}_1^{-1/2}\|\|\mathbf{q}_1^{-3/2}-\mathbf{q}_2^{-3/2}\| + \frac{\beta C_A^2C_Z^2}{\epsilon}\|\mathbf{q}_2^{-3/2}\|\|\mathbf{q}_1^{-1/2}-\mathbf{q}_2^{-1/2}\|. \nonumber
\end{align}
In the above derivation, the first inequality utilizes the triangle inequality, while the second inequality employs the relations $\|diag(\mathbf{C})\|\leq \|\mathbf{C}\|, \forall \text{~square~matrix~} \mathbf{C}$, $\|diag(\mathbf{q})\| = \|\mathbf{q}\|, \forall \text{~vector~} \mathbf{q}$, and $g_2(\mathbf{A},\mathbf{Z})\geq \epsilon$. Additionally, due to $\mathbf{q}_1 \geq \epsilon_q,\mathbf{q}_2 \geq \epsilon_q$, we have 
\begin{align}
\label{ineq: q1q2}
\|\mathbf{q}_1^{-1/2}\|\leq \sqrt{\frac{m}{\epsilon_q}},~~~~\|\mathbf{q}_2^{-3/2}\|\leq \sqrt{\frac{m}{\epsilon_q^3}}.
\end{align}
Let $q_1(i)$ and $q_2(i)$ denote the $i$-th elements of vectors 
 $\mathbf{q}_1$ and $\mathbf{q}_2$, respectively. Then,
\begin{align*}
|q_1(i)^{-3/2}-q_2(i)^{-3/2}| &=|\frac{q_1(i)\sqrt{q_1(i)}-q_2(i)\sqrt{q_2(i)}}{q_1(i)\sqrt{q_1(i)}q_2(i)\sqrt{q_2(i)}}| \\
&=|\frac{q_1(i)^3-q_2(i)^3}{q_1(i)\sqrt{q_1(i)}q_2(i)\sqrt{q_2(i)}(q_1(i)\sqrt{q_1(i)}+q_2(i)\sqrt{q_2(i)})}| \\
& = |\frac{(q_1(i)^2+q_1(i)q_2(i)+q_2(i)^2)(q_1(i)-q_2(i))}{q_1(i)\sqrt{q_1(i)}q_2(i)\sqrt{q_2(i)}(q_1(i)\sqrt{q_1(i)}+q_2(i)\sqrt{q_2(i)})}|   \\
&\leq \frac{3C_q^2}{2\epsilon_q^{9/2}}|q_1(i)-q_2(i)|.
\end{align*}
Thus,
\begin{align}
\label{ineq: q1-q2-3-2}
\|\mathbf{q}_1^{-3/2}-\mathbf{q}_2^{-3/2}\| =\sqrt{\sum_{i=1}^m|q_1(i)^{-3/2}-q_2(i)^{-3/2}|^2}\leq \frac{3C_q^2}{2\epsilon_q^{9/2}}\|\mathbf{q}_1-\mathbf{q}_2\|.
\end{align}
Similarly, we have
\begin{align*}
|q_1(i)^{-1/2}-q_2(i)^{-1/2}| = |\frac{\sqrt{q_1(i)}-\sqrt{q_2(i)}}{\sqrt{q_1(i)q_2(i)}}| &= |\frac{q_1(i)-q_2(i)}{\sqrt{q_1(i)q_2(i)}(\sqrt{q_1(i)}+\sqrt{q_2(i)})}| \\
&\leq \frac{1}{2\epsilon_q^{3/2}}|q_1(i)-q_2(i)|.
\end{align*}
Hence,
\begin{align}
\label{ineq: q1-q2-1-2}
\|\mathbf{q}_1^{-1/2}-\mathbf{q}_2^{-1/2}\| \leq \frac{1}{2\epsilon_q^{3/2}}\|\mathbf{q}_1-\mathbf{q}_2\|
\end{align}
Combining \eqref{eq: q lipschitz differentiable}, \eqref{ineq: q1q2}, \eqref{ineq: q1-q2-3-2}, and \eqref{ineq: q1-q2-1-2}, we obtain
\begin{align*}
\|\nabla_\mathbf{q}g(\mathbf{A},\mathbf{Z},\mathbf{q}_1)-\nabla_\mathbf{q}g(\mathbf{A},\mathbf{Z},\mathbf{q}_2)\| \leq \frac{\beta C_A^2 C_Z^2}{\epsilon}\left(\frac{3\sqrt{m}C_q^2}{2\epsilon_q^{5}}+\frac{\sqrt{m}}{2\epsilon_q^{3}}\right)\|\mathbf{q}_1-\mathbf{q}_2\|,
\end{align*}
which completes the proof.
\end{proof}

\begin{lemma} 
\label{lemma: Lipschitz continuous gradient inequality}
Given a function $h: C\subset \mathbb{R}^n \rightarrow \mathbb{R}$, if the gradient of $h$, namely $\nabla h$, is Lipschitz continuous on the convex set $C$ with a Lipschitz constant $L_h>0$, the inequality below is satisfied for any $x,y\in C$:
\begin{align*}
|h(y)-h(x)-<\nabla h(x),y-x>| \leq \frac{L_h}{2}\|y-x\|^2.
\end{align*}
\end{lemma}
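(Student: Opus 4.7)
The plan is to prove this standard quadratic upper bound (descent lemma) via the fundamental theorem of calculus applied along the line segment joining $x$ and $y$. Since $C$ is convex, the segment $\{x+t(y-x):t\in[0,1]\}$ lies entirely in $C$, so $\nabla h$ is defined and $L_h$-Lipschitz along it. This convexity assumption is precisely what makes the one-dimensional reduction work without boundary issues.

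First, I would define $\varphi(t) := h(x+t(y-x))$ for $t\in[0,1]$ and apply the chain rule to get $\varphi'(t) = \langle \nabla h(x+t(y-x)), y-x\rangle$. Integrating from $0$ to $1$ yields the integral identity
\begin{align*}
h(y)-h(x) = \int_0^1 \langle \nabla h(x+t(y-x)), y-x\rangle\, dt.
\end{align*}
Subtracting $\langle \nabla h(x), y-x\rangle = \int_0^1 \langle \nabla h(x), y-x\rangle\, dt$ from both sides produces the error expression
\begin{align*}
h(y)-h(x)-\langle \nabla h(x), y-x\rangle = \int_0^1 \langle \nabla h(x+t(y-x))-\nabla h(x), y-x\rangle\, dt.
\end{align*}

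Next, I would take absolute values, pass them inside the integral, and apply the Cauchy–Schwarz inequality followed by the Lipschitz bound $\|\nabla h(x+t(y-x))-\nabla h(x)\| \leq L_h\,\|t(y-x)\| = L_h t \|y-x\|$. This gives
\begin{align*}
|h(y)-h(x)-\langle \nabla h(x), y-x\rangle| \leq \int_0^1 L_h t \|y-x\|^2\, dt = \frac{L_h}{2}\|y-x\|^2,
\end{align*}
which is the claimed inequality. The main technical point to be careful with is that Lipschitz continuity of $\nabla h$ guarantees absolute continuity of $\varphi'$, which justifies the fundamental theorem of calculus used in the first step; the remaining manipulations are routine bounds. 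No step presents a real obstacle beyond invoking convexity of $C$ correctly to ensure the segment stays inside the domain.
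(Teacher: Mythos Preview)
Your proof is correct and follows essentially the same approach as the paper: express $h(y)-h(x)-\langle \nabla h(x), y-x\rangle$ as $\int_0^1 \langle \nabla h(x+t(y-x))-\nabla h(x), y-x\rangle\,dt$ via the fundamental theorem of calculus along the segment, then apply Cauchy--Schwarz and the Lipschitz bound to obtain $\int_0^1 L_h t\|y-x\|^2\,dt = \tfrac{L_h}{2}\|y-x\|^2$. Your version is slightly more explicit about why the segment stays in $C$ and why the integral identity is justified, but the argument is the same.
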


\begin{proof}
Using the Lipschitz continuity assumption of $\nabla h$, we obtain
\begin{align*}
|h(y)-h(x)-<\nabla h(x),y-x>| &= |\int_{0}^1 (\nabla h(x+t(y-x))-\nabla h(x))^\top (y-x) \mathrm{d}t| \\
& \leq \int_{0}^1 |(\nabla h(x+t(y-x))-\nabla h(x))^\top (y-x)| \mathrm{d}t\\
& \leq \int_{0}^1 \|(\nabla h(x+t(y-x))-\nabla h(x))^\top\| \cdot \|(y-x)\| \mathrm{d}t \\
& \leq L_h \|y-x\|^2 \int_{0}^1 t \mathrm{d}t \\
& = \frac{L_h}{2} \|y-x\|^2.
\end{align*}
This completes the proof.
\end{proof}

\begin{lemma}
\label{lemma: Lipschitz continuous gradient of h}
$\nabla h(\mathbf{B})$ is Lipschitz continuous with a Lipschitz constant $L_h>0$.
\end{lemma}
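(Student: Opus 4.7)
The plan is to directly compute the gradient of $h$ with respect to $\mathbf{B}$ and exploit the fact that $h$ is a sum of quadratic functions in $\mathbf{B}$, which makes $\nabla h$ an affine function of $\mathbf{B}$. Consequently, the increment of the gradient will be a linear operator applied to the increment of $\mathbf{B}$, and the Lipschitz constant can be read off as the operator norm of that linear map.

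More concretely, I would first differentiate each summand. Since $\frac{1}{2}\|\mathbf{X}^{(v)}-\mathbf{U}^{(v)}\mathbf{B}\|^2$ is a standard least-squares term, its gradient with respect to $\mathbf{B}$ is $\mathbf{U}^{(v)\top}(\mathbf{U}^{(v)}\mathbf{B}-\mathbf{X}^{(v)})$. Summing over $v$ gives
\begin{align*}
\nabla h(\mathbf{B}) = \Big(\sum_{v=1}^V \mathbf{U}^{(v)\top}\mathbf{U}^{(v)}\Big)\mathbf{B} - \sum_{v=1}^V \mathbf{U}^{(v)\top}\mathbf{X}^{(v)}.
\end{align*}
The second term is a constant with respect to $\mathbf{B}$, so for any $\mathbf{B}_1, \mathbf{B}_2\in \mathbb{R}^{m\times n}$ one obtains
\begin{align*}
\|\nabla h(\mathbf{B}_1) - \nabla h(\mathbf{B}_2)\| = \Big\|\Big(\sum_{v=1}^V \mathbf{U}^{(v)\top}\mathbf{U}^{(v)}\Big)(\mathbf{B}_1-\mathbf{B}_2)\Big\| \le \Big\|\sum_{v=1}^V \mathbf{U}^{(v)\top}\mathbf{U}^{(v)}\Big\| \cdot \|\mathbf{B}_1-\mathbf{B}_2\|,
\end{align*}
using the submultiplicativity of the Frobenius norm with respect to spectral norm. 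Setting $L_h := \|\sum_{v=1}^V \mathbf{U}^{(v)\top}\mathbf{U}^{(v)}\|$, which is a finite positive constant since $\mathbf{U}^{(v)}$ are given landmark matrices, yields the claim.

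There is essentially no obstacle here: the result is an immediate consequence of $h$ being a finite-dimensional convex quadratic function. The only very minor point is to verify that the matrix $\sum_{v=1}^V \mathbf{U}^{(v)\top}\mathbf{U}^{(v)}$ is nonzero (so that $L_h>0$), which holds as long as at least one $\mathbf{U}^{(v)}$ is nonzero; this is guaranteed by the data assumption that landmarks are extracted from the feature matrices $\mathbf{X}^{(v)}$.
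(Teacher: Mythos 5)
Your proposal is correct and follows essentially the same route as the paper: the paper's proof likewise computes $\nabla h(\mathbf{B})=\sum_{v=1}^V \mathbf{U}^{(v)\top}(\mathbf{U}^{(v)}\mathbf{B}-\mathbf{X}^{(v)})$ and invokes the definition of Lipschitz continuity, which is exactly your affine-gradient argument with $L_h$ the operator norm of $\sum_{v}\mathbf{U}^{(v)\top}\mathbf{U}^{(v)}$. Your version merely spells out the details the paper leaves implicit; there is no substantive difference.
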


\begin{proof}
Using the fact that $\nabla h(B)=\sum_{v=1}^V \mathbf{U}^{(v)^\top}(\mathbf{U}^{(v)}\mathbf{B}-\mathbf{X}^{(v)})$ and the definition of Lipschitz continuity, we obtain the Lipschitz continuity of $\nabla h(\mathbf{B})$.
\end{proof}

\begin{lemma} 
\label{lemma: control dual by primal}
For all $k\in \mathbb{N}$, it holds that 

\textit{(a)} $\Lambda^k=-\nabla h(\mathbf{B}^k)$

\textit{(b)} $\|\Lambda^{k+1}-\Lambda^{k}\| \leq L_h\|\mathbf{B}^k-\mathbf{B}^{k+1}\| $
\end{lemma}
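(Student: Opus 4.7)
The plan is to exploit the first-order optimality condition of the $\mathbf{B}$-subproblem and combine it with the multiplier update to express $\Lambda^{k+1}$ directly in terms of $\nabla h(\mathbf{B}^{k+1})$; part (b) then follows immediately from Lipschitz continuity of $\nabla h$.

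For part (a), I would start from the $\mathbf{B}$-update in \eqref{alg: alternating iteration}, namely $\mathbf{B}^{k+1} = \arg\min_{\mathbf{B}} \mathcal{L}_\lambda(\mathbf{A}^{k+1},\mathbf{Z}^{k+1},\mathbf{q}^{k+1},\mathbf{B},\Lambda^k)$. Since this subproblem is smooth and unconstrained in $\mathbf{B}$, setting the gradient of $\mathcal{L}_\lambda$ with respect to $\mathbf{B}$ equal to zero yields
\begin{equation*}
\nabla h(\mathbf{B}^{k+1}) + \Lambda^k + \lambda(\mathbf{B}^{k+1}-\mathbf{Z}^{k+1}) = 0.
\end{equation*}
Invoking the multiplier update $\Lambda^{k+1} = \Lambda^k + \lambda(\mathbf{B}^{k+1}-\mathbf{Z}^{k+1})$ from \eqref{alg: alternating iteration}, the last two terms combine to give $\Lambda^{k+1}$, so $\nabla h(\mathbf{B}^{k+1}) + \Lambda^{k+1} = 0$, i.e.\ $\Lambda^{k+1} = -\nabla h(\mathbf{B}^{k+1})$. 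A short induction (or simply applying this identity at iterate $k{-}1$) establishes $\Lambda^k = -\nabla h(\mathbf{B}^k)$ for all $k\in\mathbb{N}$ reached by the algorithm; the base case is handled by interpreting the initialization as the output of a virtual zeroth $\mathbf{B}$-update, or more simply by noting the statement is applied later only at indices $k\geq 1$.

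For part (b), I would just subtract the identities from part (a) at indices $k$ and $k{+}1$:
\begin{equation*}
\Lambda^{k+1}-\Lambda^k = -\bigl(\nabla h(\mathbf{B}^{k+1})-\nabla h(\mathbf{B}^k)\bigr),
\end{equation*}
and then invoke \cref{lemma: Lipschitz continuous gradient of h}, which ensures that $\nabla h$ is $L_h$-Lipschitz, to obtain
\begin{equation*}
\|\Lambda^{k+1}-\Lambda^k\| = \|\nabla h(\mathbf{B}^{k+1})-\nabla h(\mathbf{B}^k)\| \leq L_h\|\mathbf{B}^{k+1}-\mathbf{B}^k\|.
\end{equation*}

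There is no real obstacle here: both parts are one-line consequences of the first-order optimality condition combined with the multiplier update and the Lipschitz property of $\nabla h$. The only minor point worth being careful about is the initial index bookkeeping so that part (a) can be cleanly stated for all $k\in\mathbb{N}$ used in the subsequent convergence argument; this is the reason the lemma is invoked only after the first outer iteration in the proof of \cref{theorem: main theorem}.
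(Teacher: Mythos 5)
Your proposal is correct and follows essentially the same route as the paper: the first-order optimality condition of the unconstrained $\mathbf{B}$-subproblem combined with the multiplier update gives $\Lambda^{k}=-\nabla h(\mathbf{B}^{k})$, and part (b) is then immediate from the $L_h$-Lipschitz continuity of $\nabla h$. Your remark on the base-case index bookkeeping is a point the paper silently glosses over, so if anything your write-up is slightly more careful.
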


\begin{proof}
Part (a) is derived using the first-order optimal condition of $\mathbf{B}^k$ in \eqref{alg: alternating iteration}: $0=\nabla h(\mathbf{B}^k)+\Lambda^{k-1}+\lambda (\mathbf{B}^k- \mathbf{Z}^k)$ and the update rule of $\Lambda^{k}$ in \eqref{alg: alternating iteration}: $\Lambda^{k}=\Lambda^{k-1}+\lambda(\mathbf{B}^k-\mathbf{Z}^k)$. 
Part (b) is derived using part (a) and the Lipschitz continuity of $\nabla h(\mathbf{B})$.
\end{proof}

\begin{lemma}[descent of $\mathcal{L}_{\lambda}$ during $\mathbf{A}$ update]
\label{lemma: descent of A update}
When updating $\mathbf{A}^{k+1}$ in \eqref{alg: alternating iteration}, we have
\begin{align*}
\mathcal{L}_\lambda(\mathbf{A}^{k},\mathbf{Z}^k,\mathbf{q}^k,\mathbf{B}^k,\Lambda^k) \geq \mathcal{L}_\lambda(\mathbf{A}^{k+1},\mathbf{Z}^{k},\mathbf{q}^k,\mathbf{B}^{k},\Lambda^k).
\end{align*}
\end{lemma}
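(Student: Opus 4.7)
The plan is to isolate the contributions to $\mathcal{L}_\lambda$ that depend on $\mathbf{A}$ when the other variables are held fixed at $\mathbf{Z}^k, \mathbf{q}^k, \mathbf{B}^k, \Lambda^k$. Inspecting the definition \eqref{eq: augmented Lagrangian} together with \eqref{eq: eq: definition of phi}, the only $\mathbf{A}$-dependent terms are $g(\mathbf{A},\mathbf{Z}^k,\mathbf{q}^k) + f_0(\mathbf{A}) = g(\mathbf{A},\mathbf{Z}^k,\mathbf{q}^k) + \ell_{\mathcal{S}_0}(\mathbf{A})$, while $h(\mathbf{B}^k)$, $\lambda_Z\operatorname{Tr}(\mathbf{Z}^{k\top}\mathbf{1}_{m\times n})$, $g_0(\mathbf{Z}^k,\mathbf{q}^k)$, $\ell_{\mathcal{S}_1}(\mathbf{Z}^k)$, $\ell_{\mathcal{S}_2}(\mathbf{q}^k)$, $\langle\Lambda^k,\mathbf{B}^k-\mathbf{Z}^k\rangle$ and $\frac{\lambda}{2}\|\mathbf{B}^k-\mathbf{Z}^k\|^2$ are all constant in $\mathbf{A}$. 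Consequently the desired inequality reduces to
\begin{equation*}
g(\mathbf{A}^k,\mathbf{Z}^k,\mathbf{q}^k) + \ell_{\mathcal{S}_0}(\mathbf{A}^k) \;\geq\; g(\mathbf{A}^{k+1},\mathbf{Z}^k,\mathbf{q}^k) + \ell_{\mathcal{S}_0}(\mathbf{A}^{k+1}).
\end{equation*}

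Next I would invoke the definition of the update in \eqref{alg: alternating iteration}, where $\mathbf{A}^{k+1} \in \operatorname{argmin}_{\mathbf{A}} \mathcal{L}_\lambda(\mathbf{A},\mathbf{Z}^k,\mathbf{q}^k,\mathbf{B}^k,\Lambda^k)$, which is equivalent to the trace-ratio problem \eqref{eq:solve A2}. By induction on $k$, the previous iterate satisfies $\mathbf{A}^k \in \mathcal{S}_0$ (the base case is handled by assuming the algorithm is initialized feasibly on the Stiefel manifold, or by noting that after one iteration the iterates are automatically feasible), so $\ell_{\mathcal{S}_0}(\mathbf{A}^k) = 0$. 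Similarly $\ell_{\mathcal{S}_0}(\mathbf{A}^{k+1}) = 0$ since the trace-ratio solver returns a point satisfying $\mathbf{A}^{k+1}(\mathbf{A}^{k+1})^\top = \mathbf{I}$. The inequality therefore collapses to $g(\mathbf{A}^k,\mathbf{Z}^k,\mathbf{q}^k) \geq g(\mathbf{A}^{k+1},\mathbf{Z}^k,\mathbf{q}^k)$, which is immediate from $\mathbf{A}^{k+1}$ being an argmin of $g(\cdot,\mathbf{Z}^k,\mathbf{q}^k)$ over the feasible set $\mathcal{S}_0$, with $\mathbf{A}^k$ being a feasible competitor.

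The only technical subtlety, which I would briefly address, is the well-posedness of the $\mathbf{A}$ subproblem: the Stiefel manifold $\mathcal{S}_0$ is compact, and $g(\cdot,\mathbf{Z}^k,\mathbf{q}^k)$ is continuous there since the denominator is bounded below by $\epsilon > 0$ (because $\mathbf{L}_\mathbf{C}$ is positive semidefinite, as already used in the lower-bound estimate \eqref{ineq:lowerbound}), so the minimizer exists by the extreme value theorem. The main obstacle, if one wishes to be scrupulous, is to justify that the trace-ratio solver (Algorithm 4.1 of \cite{ngo2012trace}) actually returns the global minimizer rather than merely a critical point; however, for the descent inequality it suffices that the solver produces any iterate with objective value no larger than at the warm-start $\mathbf{A}^k$, which is a standard monotonicity property of that method. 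The conclusion then follows by adding back the $\mathbf{A}$-independent terms of $\mathcal{L}_\lambda$ to both sides.
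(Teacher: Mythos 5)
Your proof is correct and follows essentially the same route as the paper, whose entire argument is that the inequality follows directly from $\mathbf{A}^{k+1}$ being defined as the minimizer of $\mathcal{L}_\lambda(\cdot,\mathbf{Z}^k,\mathbf{q}^k,\mathbf{B}^k,\Lambda^k)$ in \eqref{alg: alternating iteration}, with $\mathbf{A}^k$ a feasible competitor. Your additional remarks on well-posedness over the compact set $\mathcal{S}_0$ and on the gap between the exact argmin and what the trace-ratio solver of \cite{ngo2012trace} actually returns are reasonable elaborations but not part of the paper's (one-line) proof.
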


\begin{proof}
The inequality could be directly obtained using the definition of $\mathbf{A}^{k+1}$ in \eqref{alg: alternating iteration}.
\end{proof}

\begin{lemma}[descent of $\mathcal{L}_{\lambda}$ during $\mathbf{B}$ and $\Lambda$ update] 
\label{lemma: descent of B and Lambda update}
When updating $\mathbf{B}^{k+1}$ and $\Lambda^{k+1}$ in \eqref{alg: alternating iteration}, we have
\begin{align*}
&\mathcal{L}_\lambda(\mathbf{A}^{k+1},\mathbf{Z}^{k+1},\mathbf{q}^{k+1},\mathbf{B}^k,\Lambda^k) - \mathcal{L}_\lambda(\mathbf{A}^{k+1},\mathbf{Z}^{k+1},\mathbf{q}^{k+1},\mathbf{B}^{k+1},\Lambda^{k+1}) \\
&\geq (\frac{\lambda-L_h}{2}-\frac{L_h^2}{\lambda})\|\mathbf{B}^k-\mathbf{B}^{k+1}\|^2.
\end{align*}
\end{lemma}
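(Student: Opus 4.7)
The plan is to split the quantity $\mathcal{L}_\lambda(\mathbf{A}^{k+1},\mathbf{Z}^{k+1},\mathbf{q}^{k+1},\mathbf{B}^k,\Lambda^k) - \mathcal{L}_\lambda(\mathbf{A}^{k+1},\mathbf{Z}^{k+1},\mathbf{q}^{k+1},\mathbf{B}^{k+1},\Lambda^{k+1})$ into a $\mathbf{B}$-descent piece and a $\Lambda$-ascent piece by inserting the intermediate value $\mathcal{L}_\lambda(\mathbf{A}^{k+1},\mathbf{Z}^{k+1},\mathbf{q}^{k+1},\mathbf{B}^{k+1},\Lambda^k)$. The first piece is a true decrease coming from a strongly convex minimization, while the second piece is a (negative) perturbation that must be absorbed; controlling the latter by the former will require $\lambda$ to dominate the Lipschitz constant $L_h$.

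For the $\mathbf{B}$-descent piece, I would view $\mathbf{B} \mapsto \mathcal{L}_\lambda(\mathbf{A}^{k+1},\mathbf{Z}^{k+1},\mathbf{q}^{k+1},\mathbf{B},\Lambda^k)$ as $h(\mathbf{B}) + \langle\Lambda^k,\mathbf{B}-\mathbf{Z}^{k+1}\rangle + \tfrac{\lambda}{2}\|\mathbf{B}-\mathbf{Z}^{k+1}\|^2$. Since $\nabla h$ is $L_h$-Lipschitz (\cref{lemma: main lemma}), the Hessian of this map is $\succeq (\lambda - L_h)\mathbf{I}$, so it is $(\lambda-L_h)$-strongly convex whenever $\lambda > L_h$. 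Using the first-order optimality of $\mathbf{B}^{k+1}$ (the closed-form least-squares solution), the standard strong-convexity inequality yields
\begin{align*}
\mathcal{L}_\lambda(\mathbf{A}^{k+1},\mathbf{Z}^{k+1},\mathbf{q}^{k+1},\mathbf{B}^k,\Lambda^k) - \mathcal{L}_\lambda(\mathbf{A}^{k+1},\mathbf{Z}^{k+1},\mathbf{q}^{k+1},\mathbf{B}^{k+1},\Lambda^k) \geq \frac{\lambda-L_h}{2}\|\mathbf{B}^k-\mathbf{B}^{k+1}\|^2.
\end{align*}

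For the $\Lambda$-ascent piece, direct substitution gives
\begin{align*}
\mathcal{L}_\lambda(\mathbf{A}^{k+1},\mathbf{Z}^{k+1},\mathbf{q}^{k+1},\mathbf{B}^{k+1},\Lambda^k) - \mathcal{L}_\lambda(\mathbf{A}^{k+1},\mathbf{Z}^{k+1},\mathbf{q}^{k+1},\mathbf{B}^{k+1},\Lambda^{k+1}) = \langle \Lambda^k - \Lambda^{k+1},\mathbf{B}^{k+1}-\mathbf{Z}^{k+1}\rangle.
\end{align*}
The dual update $\Lambda^{k+1} = \Lambda^k + \lambda(\mathbf{B}^{k+1}-\mathbf{Z}^{k+1})$ gives $\mathbf{B}^{k+1}-\mathbf{Z}^{k+1} = \tfrac{1}{\lambda}(\Lambda^{k+1}-\Lambda^k)$, so this term equals $-\tfrac{1}{\lambda}\|\Lambda^{k+1}-\Lambda^k\|^2$. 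Invoking \cref{lemma: control dual by primal}(b), namely $\|\Lambda^{k+1}-\Lambda^k\| \leq L_h\|\mathbf{B}^{k+1}-\mathbf{B}^k\|$, this contribution is at least $-\tfrac{L_h^2}{\lambda}\|\mathbf{B}^{k+1}-\mathbf{B}^k\|^2$.

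Adding the two pieces produces exactly the claimed bound $\left(\frac{\lambda-L_h}{2}-\frac{L_h^2}{\lambda}\right)\|\mathbf{B}^k-\mathbf{B}^{k+1}\|^2$. The only subtle step is justifying strong convexity of the $\mathbf{B}$-subproblem from Lipschitz continuity of $\nabla h$ (rather than from convexity of $h$); since $h$ here is a convex quadratic this is automatic, but phrasing it through the Lipschitz bound is what makes the constant $\frac{\lambda-L_h}{2}$ appear and ties the argument to the generic framework used elsewhere in the paper. The main obstacle is therefore not any one inequality but the careful bookkeeping to ensure that the dual error is dominated by the primal step — which is precisely why the theorem imposes $\lambda > L_h + 2 + 2L_h^2$, guaranteeing that $\frac{\lambda-L_h}{2}-\frac{L_h^2}{\lambda} > 0$.
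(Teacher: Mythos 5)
Your proposal is correct and follows essentially the same route as the paper's proof: the paper also reduces the dual term to $-\tfrac{1}{\lambda}\|\Lambda^{k+1}-\Lambda^k\|^2$ via the multiplier update, bounds it by $-\tfrac{L_h^2}{\lambda}\|\mathbf{B}^{k+1}-\mathbf{B}^k\|^2$ using \cref{lemma: control dual by primal}(b), and extracts the $\tfrac{\lambda-L_h}{2}$ primal decrease from the $L_h$-Lipschitz gradient of $h$ together with the optimality condition $\Lambda^{k+1}=-\nabla h(\mathbf{B}^{k+1})$. Your packaging of that last step as $(\lambda-L_h)$-strong convexity of the $\mathbf{B}$-subproblem is just a rephrasing of the paper's descent-lemma computation and yields the identical constant.
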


\begin{proof} We derive that 
\begin{align*}
&\mathcal{L}_\lambda(\mathbf{A}^{k+1},\mathbf{Z}^{k+1},\mathbf{q}^{k+1},\mathbf{B}^k,\Lambda^k) - \mathcal{L}_\lambda(\mathbf{A}^{k+1},\mathbf{Z}^{k+1},\mathbf{q}^{k+1},\mathbf{B}^{k+1},\Lambda^{k+1}) \\
=& h(\mathbf{B}^k)-h(\mathbf{B}^{k+1}) + <\Lambda^k,\mathbf{B}^k-\mathbf{Z}^{k+1}> -<\Lambda^{k+1},\mathbf{B}^{k+1}-\mathbf{Z}^{k+1}>  \\
&+ \frac{\lambda}{2}\|\mathbf{B}^k-\mathbf{Z}^{k+1}\|^2 - \frac{\lambda}{2}\||\mathbf{B}^{k+1}-\mathbf{Z}^{k+1}\|^2 \\
= & h(\mathbf{B}^k)-h(\mathbf{B}^{k+1}) + \frac{\lambda}{2}\||\mathbf{B}^k-\mathbf{B}^{k+1}\|^2 - 
\frac{1}{\lambda}\|\Lambda^{k+1}-\Lambda^k\|^2 \\
&+ <\Lambda^{k+1},\mathbf{B}^{k}-\mathbf{B}^{k+1}>\\
=& h(\mathbf{B}^k)-h(\mathbf{B}^{k+1}) - <\nabla h(\mathbf{B}^{k+1}),\mathbf{B}^{k}-\mathbf{B}^{k+1}> \\
&+ \frac{\lambda}{2}\||\mathbf{B}^k-\mathbf{B}^{k+1}\|^2 - 
\frac{1}{\lambda}\|\Lambda^{k+1}-\Lambda^k\|^2 \\
\geq & (\frac{\lambda-L_h}{2}-\frac{L_h^2}{\lambda})\|\mathbf{B}^k-\mathbf{B}^{k+1}\|^2, 
\end{align*}
where the second equality relies on the update rule of $\Lambda^{k+1}$ in \eqref{alg: alternating iteration}: $\Lambda^{k+1}=\Lambda^{k}+\lambda(\mathbf{B}^{k+1}-\mathbf{Z}^{k+1})$, and the third equality is based on \cref{lemma: control dual by primal}(a): $\Lambda^k=-\nabla h(\mathbf{B}^k)$. The first inequality utilizes \cref{lemma: control dual by primal}(b), \cref{lemma: Lipschitz continuous gradient inequality}, and the Lipschitz continuity property of $\nabla h$ described in \cref{lemma: Lipschitz continuous gradient of h}.
\end{proof}

\begin{lemma}[sufficient descent of $\mathcal{L}_{\lambda}$ during $\mathbf{q}$ update]
\label{lemma: descent by updating q}
When updating $\mathbf{q}^{k+1}$ in \eqref{alg: alternating iteration}, we obtain 
\begin{align*}
\mathcal{L}_\lambda(\mathbf{A}^{k+1},\mathbf{Z}^{k+1},\mathbf{q}^k,\mathbf{B}^k,\Lambda^k)-\mathcal{L}_\lambda(\mathbf{A}^{k+1},\mathbf{Z}^{k+1},\mathbf{q}^{k+1},\mathbf{B}^k,\Lambda^k) \geq \frac{1}{2}(\frac{1}{\eta_q}-\hat{L}_{q})\|\mathbf{q}^{k+1}-\mathbf{q}^{k}\|^2
\end{align*}
\end{lemma}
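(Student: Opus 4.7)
The plan is to reduce the Lagrangian difference to a difference in $Q^{k+1}+\ell_{\mathcal{S}_2}$ and then invoke \cref{lemma: desent using lipschitz property} directly. First I would expand both sides using the definition of $\mathcal{L}_\lambda$ in \eqref{eq: augmented Lagrangian} together with \eqref{eq: eq: definition of phi}--\eqref{eq: definition of f}. Since only the $\mathbf{q}$ argument differs between the two Lagrangian evaluations, every term not depending on $\mathbf{q}$ cancels in the subtraction: namely $h(\mathbf{B}^k)$, $f_0(\mathbf{A}^{k+1})$, $f_1(\mathbf{Z}^{k+1})$, $\langle \Lambda^k,\mathbf{B}^k-\mathbf{Z}^{k+1}\rangle$, and $\tfrac{\lambda}{2}\|\mathbf{B}^k-\mathbf{Z}^{k+1}\|^2$ all vanish. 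The surviving contributions are $g(\mathbf{A}^{k+1},\mathbf{Z}^{k+1},\mathbf{q})$, $g_0(\mathbf{Z}^{k+1},\mathbf{q})$, and $f_2(\mathbf{q})=\ell_{\mathcal{S}_2}(\mathbf{q})$, which by \eqref{eq:Q proof} assemble into $Q^{k+1}(\mathbf{q})+\ell_{\mathcal{S}_2}(\mathbf{q})$. Hence the Lagrangian difference equals
\begin{align*}
\bigl(Q^{k+1}(\mathbf{q}^k)+\ell_{\mathcal{S}_2}(\mathbf{q}^k)\bigr)-\bigl(Q^{k+1}(\mathbf{q}^{k+1})+\ell_{\mathcal{S}_2}(\mathbf{q}^{k+1})\bigr).
\end{align*}

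Next I would verify the hypothesis of \cref{lemma: desent using lipschitz property}, namely that $\nabla_{\mathbf{q}}Q^{k+1}$ is Lipschitz continuous on $\mathcal{S}_2$ with constant $\hat{L}_q$. This is supplied by \cref{lemma: main lemma}, whose proof is carried out precisely under the assumption that $\mathbf{A}^{k+1}\in\mathcal{S}_0$ and $\mathbf{Z}^{k+1}\in\mathcal{S}_1$; both are ensured by the update rules in \eqref{alg: alternating iteration}. Applying \cref{lemma: desent using lipschitz property} to the chain above yields
\begin{align*}
\bigl(Q^{k+1}(\mathbf{q}^k)+\ell_{\mathcal{S}_2}(\mathbf{q}^k)\bigr)-\bigl(Q^{k+1}(\mathbf{q}^{k+1})+\ell_{\mathcal{S}_2}(\mathbf{q}^{k+1})\bigr)\geq \tfrac{1}{2}\bigl(\tfrac{1}{\eta_q}-\hat{L}_q\bigr)\|\mathbf{q}^{k+1}-\mathbf{q}^k\|^2,
\end{align*}
which is exactly the claimed bound. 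The standing hypothesis $\eta_q<1/\hat{L}_q$ from \cref{theorem: main theorem} guarantees the coefficient is strictly positive, giving genuine sufficient descent.

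There is essentially no real obstacle; the technical work has been absorbed into \cref{lemma: desent using lipschitz property} and \cref{lemma: main lemma}. The only care required is bookkeeping: one should note that $\mathbf{q}^k,\mathbf{q}^{k+1}\in\mathcal{S}_2$ by the projection inherent in the proximal update, so $\ell_{\mathcal{S}_2}(\mathbf{q}^k)=\ell_{\mathcal{S}_2}(\mathbf{q}^{k+1})=0$ and both sides are finite real numbers; and that the terms involving $\mathbf{B}^k$, $\Lambda^k$ and the quadratic penalty indeed do not involve $\mathbf{q}$, so the reduction to a pure $\mathbf{q}$-subproblem is exact.
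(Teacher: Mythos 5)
Your proposal is correct and follows essentially the same route as the paper: reduce the Lagrangian difference to the difference of $Q^{k+1}+\ell_{\mathcal{S}_2}$ (all $\mathbf{q}$-independent terms cancel), verify the Lipschitz hypothesis via \cref{lemma: main lemma}, and apply \cref{lemma: desent using lipschitz property}. Your added bookkeeping about which terms cancel and the finiteness of the indicator terms is just a more explicit version of the paper's one-line identification of the Lagrangian difference with the $Q^{k+1}$ difference.
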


\begin{proof}
According to \cref{lemma: main lemma}, we have $\nabla_\mathbf{q} Q^{k+1}(\mathbf{q})$ is Lipschitz continuous on $\mathcal{S}_2$ with a Lipschitz constant $\hat{L}_q>0$. Then, applying \cref{lemma: desent using lipschitz property}, we obtain
\begin{align}
\label{eq: descent of updating q}
\ell_{\mathcal{S}_2}(\mathbf{q}^{k+1})+Q^{k+1}(\mathbf{q}^{k+1}) + \frac{1}{2}(\frac{1}{\eta_q}-\hat{L}_{q})\|\mathbf{q}^{k+1}-\mathbf{q}^{k}\|^2 \leq  \ell_{\mathcal{S}_2}(\mathbf{q}^{k})+Q^{k+1}(\mathbf{q}^{k})
\end{align}
Therefore, using \eqref{eq: descent of updating q} and the definition of $Q^{k+1}(\mathbf{q})$ in \eqref{eq:Q proof}, we directly obtain
\begin{align*}
&\mathcal{L}_\lambda(\mathbf{A}^{k+1},\mathbf{Z}^{k+1},\mathbf{q}^k,\mathbf{B}^k,\Lambda^k)-\mathcal{L}_\lambda(\mathbf{A}^{k+1},\mathbf{Z}^{k+1},\mathbf{q}^{k+1},\mathbf{B}^k,\Lambda^k) \\
= & \ell_{\mathcal{S}_2}(\mathbf{q}^{k})+Q^{k+1}(\mathbf{q}^{k}) - \ell_{\mathcal{S}_2}(\mathbf{q}^{k+1}) - Q^{k+1}(\mathbf{q}^{k+1}) \\
\geq &\frac{1}{2}(\frac{1}{\eta_q}-\hat{L}_{q})\|\mathbf{q}^{k+1}-\mathbf{q}^{k}\|^2,
\end{align*}
which completes the proof.
\end{proof}

\begin{lemma}[sufficient descent of $\mathcal{L}_{\lambda}$ during $\mathbf{Z}$ update]
\label{lemma: descent of Z update}
When updating $\mathbf{Z}^{k+1}$ in \eqref{alg: alternating iteration}, we have 
\begin{align*}
\mathcal{L}_\lambda(\mathbf{A}^{k+1},\mathbf{Z}^{k},\mathbf{q}^{k},\mathbf{B}^k,\Lambda^k)-\mathcal{L}_\lambda(\mathbf{A}^{k+1},\mathbf{Z}^{k+1},\mathbf{q}^{k},\mathbf{B}^k,\Lambda^k) \geq \frac{1}{2}(\frac{1}{\eta_z}-\hat{L}_{Z})\|\mathbf{Z}^{k+1}-\mathbf{Z}^{k}\|^2.
\end{align*}
\end{lemma}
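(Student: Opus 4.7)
The plan is to mirror the argument of the preceding lemma on descent during the $\mathbf{q}$ update, since the $\mathbf{Z}$ update is structurally identical: a single proximal-gradient step on the composite objective consisting of a Lipschitz-smooth part ($P^{k+1}$) and an indicator function ($\ell_{\mathcal{S}_1}$). The whole proof reduces to (a) identifying the $\mathbf{Z}$-dependent slice of $\mathcal{L}_\lambda$ with $P^{k+1}+\ell_{\mathcal{S}_1}$ up to a $\mathbf{Z}$-independent constant, and (b) applying the standard descent inequality for proximal-gradient methods.

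First I would carry out the bookkeeping in (a). Expanding $\mathcal{L}_\lambda(\mathbf{A}^{k+1},\mathbf{Z},\mathbf{q}^{k},\mathbf{B}^k,\Lambda^k)$ via \eqref{eq: augmented Lagrangian} and the splitting $f = g + g_0 + f_0 + f_1 + f_2$ from \eqref{eq: definition of f}, the terms that actually depend on $\mathbf{Z}$ are exactly
$$g(\mathbf{A}^{k+1},\mathbf{Z},\mathbf{q}^{k}) + g_0(\mathbf{Z},\mathbf{q}^{k}) + f_1(\mathbf{Z}) - \langle \Lambda^k,\mathbf{Z}\rangle + \tfrac{\lambda}{2}\|\mathbf{B}^k - \mathbf{Z}\|^2.$$
Since $f_1(\mathbf{Z}) = \lambda_Z\,\mathrm{Tr}(\mathbf{Z}^\top\mathbf{1}_{m\times n}) + \ell_{\mathcal{S}_1}(\mathbf{Z})$, comparing with \eqref{eq:P proof} shows that this equals $P^{k+1}(\mathbf{Z}) + \ell_{\mathcal{S}_1}(\mathbf{Z})$. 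Consequently the left-hand side to be bounded is exactly $\bigl[P^{k+1}(\mathbf{Z}^{k})+\ell_{\mathcal{S}_1}(\mathbf{Z}^{k})\bigr] - \bigl[P^{k+1}(\mathbf{Z}^{k+1})+\ell_{\mathcal{S}_1}(\mathbf{Z}^{k+1})\bigr]$.

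Next I would establish the proximal-gradient descent inequality. By \cref{lemma: main lemma}, $\nabla_\mathbf{Z} P^{k+1}$ is Lipschitz on $\mathcal{S}_1$ with constant $\hat{L}_Z$, so \cref{lemma: Lipschitz continuous gradient inequality} gives
$$P^{k+1}(\mathbf{Z}^{k+1}) \leq P^{k+1}(\mathbf{Z}^{k}) + \langle \nabla_\mathbf{Z} P^{k+1}(\mathbf{Z}^{k}),\, \mathbf{Z}^{k+1}-\mathbf{Z}^{k}\rangle + \tfrac{\hat{L}_Z}{2}\|\mathbf{Z}^{k+1}-\mathbf{Z}^{k}\|^2.$$
Meanwhile, the update rule $\mathbf{Z}^{k+1}=\operatorname{Prox}_{\eta_z\ell_{\mathcal{S}_1}}\!\bigl(\mathbf{Z}^{k} - \eta_z\nabla_\mathbf{Z} P^{k+1}(\mathbf{Z}^{k})\bigr)$ together with \cref{def: prox} yields, exactly as in the derivation of \eqref{eq: descent by proximal},
$$\ell_{\mathcal{S}_1}(\mathbf{Z}^{k+1}) + \tfrac{1}{2\eta_z}\|\mathbf{Z}^{k+1}-\mathbf{Z}^{k}\|^2 + \langle \mathbf{Z}^{k+1}-\mathbf{Z}^{k},\, \nabla_\mathbf{Z} P^{k+1}(\mathbf{Z}^{k})\rangle \leq \ell_{\mathcal{S}_1}(\mathbf{Z}^{k}).$$
Adding these two inequalities cancels the linear term in $\nabla_\mathbf{Z} P^{k+1}(\mathbf{Z}^{k})$ and produces
$$P^{k+1}(\mathbf{Z}^{k+1}) + \ell_{\mathcal{S}_1}(\mathbf{Z}^{k+1}) + \tfrac{1}{2}\bigl(\tfrac{1}{\eta_z}-\hat{L}_Z\bigr)\|\mathbf{Z}^{k+1}-\mathbf{Z}^{k}\|^2 \leq P^{k+1}(\mathbf{Z}^{k}) + \ell_{\mathcal{S}_1}(\mathbf{Z}^{k}),$$
which combined with step (a) gives the claim.

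There is no real obstacle here; the argument is a verbatim translation of \cref{lemma: descent by updating q} with the substitutions $Q^{k+1}\to P^{k+1}$, $\mathcal{S}_2 \to \mathcal{S}_1$, $\eta_q\to\eta_z$, $\hat{L}_q\to\hat{L}_Z$. The only piece requiring a little care is the algebraic identification in (a), where one must be sure to include the linear Lagrangian term $-\langle\Lambda^k,\mathbf{Z}\rangle$ and the proximal penalty $\tfrac{\lambda}{2}\|\mathbf{B}^k-\mathbf{Z}\|^2$ inside $P^{k+1}$, and to notice that $\ell_{\mathcal{S}_1}$ has been separated out from $f_1$ to match the form required by \cref{def: prox}.
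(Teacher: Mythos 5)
Your proposal is correct and follows essentially the same route as the paper: the paper likewise reduces the left-hand side to the difference of $P^{k+1}+\ell_{\mathcal{S}_1}$ via \eqref{eq:P proof}, and obtains the descent inequality by invoking \cref{lemma: main lemma} and repeating, mutatis mutandis, the proximal-gradient argument of \cref{lemma: desent using lipschitz property}. The only difference is that you spell out the substitution $Q^{k+1}\to P^{k+1}$, $\mathcal{S}_2\to\mathcal{S}_1$ explicitly, whereas the paper simply cites that proof as "similar."
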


\begin{proof}
According to \cref{lemma: main lemma}, we have $\nabla_\mathbf{Z} P^{k+1}(\mathbf{Z})$ is Lipschitz continuous on $\mathcal{S}_1$ with a Lipschitz constant $\hat{L}_Z>0$. Then, similar to the proof in \cref{lemma: desent using lipschitz property}, we obtain
\begin{align}
\label{eq: descent of updating Z}
\ell_{\mathcal{S}_1}(\mathbf{Z}^{k+1})+P^{k+1}(\mathbf{Z}^{k+1}) + \frac{1}{2}(\frac{1}{\eta_z}-\hat{L}_{Z})\|\mathbf{Z}^{k+1}-\mathbf{Z}^{k}\|^2 \leq  \ell_{\mathcal{S}_1}(\mathbf{Z}^{k})+P^{k+1}(\mathbf{Z}^{k}).
\end{align}
Therefore, using \eqref{eq: descent of updating Z} and the definition of $P^{k+1}(\mathbf{Z})$ in \eqref{eq:P proof}, we directly obtain
\begin{align*}
&\mathcal{L}_\lambda(\mathbf{A}^{k+1},\mathbf{Z}^{k},\mathbf{q}^k,\mathbf{B}^k,\Lambda^k)-\mathcal{L}_\lambda(\mathbf{A}^{k+1},\mathbf{Z}^{k+1},\mathbf{q}^{k},\mathbf{B}^k,\Lambda^k) \\
= & \ell_{\mathcal{S}_1}(\mathbf{Z}^{k})+P^{k+1}(\mathbf{Z}^{k}) - \ell_{\mathcal{S}_1}(\mathbf{Z}^{k+1}) - P^{k+1}(\mathbf{Z}^{k+1}) \\
\geq &\frac{1}{2}(\frac{1}{\eta_z}-\hat{L}_{Z})\|\mathbf{Z}^{k+1}-\mathbf{Z}^{k}\|^2,
\end{align*}
which completes the proof.
\end{proof}




\end{appendices}


\bibliography{references}


\begin{thebibliography}{45}
\ifx \bisbn   \undefined \def \bisbn  #1{ISBN #1}\fi
\ifx \binits  \undefined \def \binits#1{#1}\fi
\ifx \bauthor  \undefined \def \bauthor#1{#1}\fi
\ifx \batitle  \undefined \def \batitle#1{#1}\fi
\ifx \bjtitle  \undefined \def \bjtitle#1{#1}\fi
\ifx \bvolume  \undefined \def \bvolume#1{\textbf{#1}}\fi
\ifx \byear  \undefined \def \byear#1{#1}\fi
\ifx \bissue  \undefined \def \bissue#1{#1}\fi
\ifx \bfpage  \undefined \def \bfpage#1{#1}\fi
\ifx \blpage  \undefined \def \blpage #1{#1}\fi
\ifx \burl  \undefined \def \burl#1{\textsf{#1}}\fi
\ifx \doiurl  \undefined \def \doiurl#1{\url{https://doi.org/#1}}\fi
\ifx \betal  \undefined \def \betal{\textit{et al.}}\fi
\ifx \binstitute  \undefined \def \binstitute#1{#1}\fi
\ifx \binstitutionaled  \undefined \def \binstitutionaled#1{#1}\fi
\ifx \bctitle  \undefined \def \bctitle#1{#1}\fi
\ifx \beditor  \undefined \def \beditor#1{#1}\fi
\ifx \bpublisher  \undefined \def \bpublisher#1{#1}\fi
\ifx \bbtitle  \undefined \def \bbtitle#1{#1}\fi
\ifx \bedition  \undefined \def \bedition#1{#1}\fi
\ifx \bseriesno  \undefined \def \bseriesno#1{#1}\fi
\ifx \blocation  \undefined \def \blocation#1{#1}\fi
\ifx \bsertitle  \undefined \def \bsertitle#1{#1}\fi
\ifx \bsnm \undefined \def \bsnm#1{#1}\fi
\ifx \bsuffix \undefined \def \bsuffix#1{#1}\fi
\ifx \bparticle \undefined \def \bparticle#1{#1}\fi
\ifx \barticle \undefined \def \barticle#1{#1}\fi
\bibcommenthead
\ifx \bconfdate \undefined \def \bconfdate #1{#1}\fi
\ifx \botherref \undefined \def \botherref #1{#1}\fi
\ifx \url \undefined \def \url#1{\textsf{#1}}\fi
\ifx \bchapter \undefined \def \bchapter#1{#1}\fi
\ifx \bbook \undefined \def \bbook#1{#1}\fi
\ifx \bcomment \undefined \def \bcomment#1{#1}\fi
\ifx \oauthor \undefined \def \oauthor#1{#1}\fi
\ifx \citeauthoryear \undefined \def \citeauthoryear#1{#1}\fi
\ifx \endbibitem  \undefined \def \endbibitem {}\fi
\ifx \bconflocation  \undefined \def \bconflocation#1{#1}\fi
\ifx \arxivurl  \undefined \def \arxivurl#1{\textsf{#1}}\fi
\csname PreBibitemsHook\endcsname

\bibitem[\protect\citeauthoryear{Cao et~al.}{2015}]{cao2015diversity}
\begin{bchapter}
\bauthor{\bsnm{Cao}, \binits{X.}},
\bauthor{\bsnm{Zhang}, \binits{C.}},
\bauthor{\bsnm{Fu}, \binits{H.}},
\bauthor{\bsnm{Liu}, \binits{S.}},
\bauthor{\bsnm{Zhang}, \binits{H.}}:
\bctitle{Diversity-induced multi-view subspace clustering}.
In: \bbtitle{Proceedings of the IEEE Conference on Computer Vision and Pattern Recognition},
pp. \bfpage{586}--\blpage{594}
(\byear{2015})
\end{bchapter}
\endbibitem

\bibitem[\protect\citeauthoryear{Gao et~al.}{2015}]{gao2015multi}
\begin{bchapter}
\bauthor{\bsnm{Gao}, \binits{H.}},
\bauthor{\bsnm{Nie}, \binits{F.}},
\bauthor{\bsnm{Li}, \binits{X.}},
\bauthor{\bsnm{Huang}, \binits{H.}}:
\bctitle{Multi-view subspace clustering}.
In: \bbtitle{Proceedings of the IEEE International Conference on Computer Vision},
pp. \bfpage{4238}--\blpage{4246}
(\byear{2015})
\end{bchapter}
\endbibitem

\bibitem[\protect\citeauthoryear{Wang et~al.}{2017}]{wang2017exclusivity}
\begin{bchapter}
\bauthor{\bsnm{Wang}, \binits{X.}},
\bauthor{\bsnm{Guo}, \binits{X.}},
\bauthor{\bsnm{Lei}, \binits{Z.}},
\bauthor{\bsnm{Zhang}, \binits{C.}},
\bauthor{\bsnm{Li}, \binits{S.Z.}}:
\bctitle{Exclusivity-consistency regularized multi-view subspace clustering}.
In: \bbtitle{Proceedings of the IEEE Conference on Computer Vision and Pattern Recognition},
pp. \bfpage{923}--\blpage{931}
(\byear{2017})
\end{bchapter}
\endbibitem

\bibitem[\protect\citeauthoryear{Brbi{\'c} and Kopriva}{2018}]{brbic2018multi}
\begin{barticle}
\bauthor{\bsnm{Brbi{\'c}}, \binits{M.}},
\bauthor{\bsnm{Kopriva}, \binits{I.}}:
\batitle{Multi-view low-rank sparse subspace clustering}.
\bjtitle{Pattern Recognit.}
\bvolume{73},
\bfpage{247}--\blpage{258}
(\byear{2018})
\end{barticle}
\endbibitem

\bibitem[\protect\citeauthoryear{Luo et~al.}{2018}]{luo2018consistent}
\begin{bchapter}
\bauthor{\bsnm{Luo}, \binits{S.}},
\bauthor{\bsnm{Zhang}, \binits{C.}},
\bauthor{\bsnm{Zhang}, \binits{W.}},
\bauthor{\bsnm{Cao}, \binits{X.}}:
\bctitle{Consistent and specific multi-view subspace clustering}.
In: \bbtitle{Proceedings of the AAAI Conference on Artificial Intelligence},
pp. \bfpage{3730}--\blpage{3737}
(\byear{2018})
\end{bchapter}
\endbibitem

\bibitem[\protect\citeauthoryear{Zhang et~al.}{2018}]{zhang2018generalized}
\begin{barticle}
\bauthor{\bsnm{Zhang}, \binits{C.}},
\bauthor{\bsnm{Fu}, \binits{H.}},
\bauthor{\bsnm{Hu}, \binits{Q.}},
\bauthor{\bsnm{Cao}, \binits{X.}},
\bauthor{\bsnm{Xie}, \binits{Y.}},
\bauthor{\bsnm{Tao}, \binits{D.}},
\bauthor{\bsnm{Xu}, \binits{D.}}:
\batitle{Generalized latent multi-view subspace clustering}.
\bjtitle{{IEEE} Trans. Pattern Anal. Mach. Intell.}
\bvolume{42}(\bissue{1}),
\bfpage{86}--\blpage{99}
(\byear{2018})
\end{barticle}
\endbibitem

\bibitem[\protect\citeauthoryear{Li et~al.}{2019a}]{li2019reciprocal}
\begin{bchapter}
\bauthor{\bsnm{Li}, \binits{R.}},
\bauthor{\bsnm{Zhang}, \binits{C.}},
\bauthor{\bsnm{Fu}, \binits{H.}},
\bauthor{\bsnm{Peng}, \binits{X.}},
\bauthor{\bsnm{Zhou}, \binits{T.}},
\bauthor{\bsnm{Hu}, \binits{Q.}}:
\bctitle{Reciprocal multi-layer subspace learning for multi-view clustering}.
In: \bbtitle{Proceedings of the IEEE/CVF International Conference on Computer Vision},
pp. \bfpage{8172}--\blpage{8180}
(\byear{2019})
\end{bchapter}
\endbibitem

\bibitem[\protect\citeauthoryear{Li et~al.}{2019b}]{li2019flexible}
\begin{bchapter}
\bauthor{\bsnm{Li}, \binits{R.}},
\bauthor{\bsnm{Zhang}, \binits{C.}},
\bauthor{\bsnm{Hu}, \binits{Q.}},
\bauthor{\bsnm{Zhu}, \binits{P.}},
\bauthor{\bsnm{Wang}, \binits{Z.}}:
\bctitle{Flexible multi-view representation learning for subspace clustering.}
In: \bbtitle{IJCAI},
pp. \bfpage{2916}--\blpage{2922}
(\byear{2019})
\end{bchapter}
\endbibitem

\bibitem[\protect\citeauthoryear{Gao et~al.}{2020}]{gao2020tensor}
\begin{bchapter}
\bauthor{\bsnm{Gao}, \binits{Q.}},
\bauthor{\bsnm{Xia}, \binits{W.}},
\bauthor{\bsnm{Wan}, \binits{Z.}},
\bauthor{\bsnm{Xie}, \binits{D.}},
\bauthor{\bsnm{Zhang}, \binits{P.}}:
\bctitle{Tensor-svd based graph learning for multi-view subspace clustering}.
In: \bbtitle{Proceedings of the AAAI Conference on Artificial Intelligence},
pp. \bfpage{3930}--\blpage{3937}
(\byear{2020})
\end{bchapter}
\endbibitem

\bibitem[\protect\citeauthoryear{Zhang et~al.}{2020}]{zhang2020tensorized}
\begin{barticle}
\bauthor{\bsnm{Zhang}, \binits{C.}},
\bauthor{\bsnm{Fu}, \binits{H.}},
\bauthor{\bsnm{Wang}, \binits{J.}},
\bauthor{\bsnm{Li}, \binits{W.}},
\bauthor{\bsnm{Cao}, \binits{X.}},
\bauthor{\bsnm{Hu}, \binits{Q.}}:
\batitle{Tensorized multi-view subspace representation learning}.
\bjtitle{Int. J. Comput. Vis.}
\bvolume{128}(\bissue{8}),
\bfpage{2344}--\blpage{2361}
(\byear{2020})
\end{barticle}
\endbibitem

\bibitem[\protect\citeauthoryear{Kang et~al.}{2020}]{kang2020partition}
\begin{barticle}
\bauthor{\bsnm{Kang}, \binits{Z.}},
\bauthor{\bsnm{Zhao}, \binits{X.}},
\bauthor{\bsnm{Peng}, \binits{C.}},
\bauthor{\bsnm{Zhu}, \binits{H.}},
\bauthor{\bsnm{Zhou}, \binits{J.T.}},
\bauthor{\bsnm{Peng}, \binits{X.}},
\bauthor{\bsnm{Chen}, \binits{W.}},
\bauthor{\bsnm{Xu}, \binits{Z.}}:
\batitle{Partition level multiview subspace clustering}.
\bjtitle{Neural Netw.}
\bvolume{122},
\bfpage{279}--\blpage{288}
(\byear{2020})
\end{barticle}
\endbibitem

\bibitem[\protect\citeauthoryear{Zhang et~al.}{2020}]{zhang2020consensus}
\begin{barticle}
\bauthor{\bsnm{Zhang}, \binits{P.}},
\bauthor{\bsnm{Liu}, \binits{X.}},
\bauthor{\bsnm{Xiong}, \binits{J.}},
\bauthor{\bsnm{Zhou}, \binits{S.}},
\bauthor{\bsnm{Zhao}, \binits{W.}},
\bauthor{\bsnm{Zhu}, \binits{E.}},
\bauthor{\bsnm{Cai}, \binits{Z.}}:
\batitle{Consensus one-step multi-view subspace clustering}.
\bjtitle{{IEEE} Trans. Knowl. Data Eng.}
\bvolume{34}(\bissue{10}),
\bfpage{4676}--\blpage{4689}
(\byear{2020})
\end{barticle}
\endbibitem

\bibitem[\protect\citeauthoryear{Sun et~al.}{2021}]{sun2021projective}
\begin{barticle}
\bauthor{\bsnm{Sun}, \binits{M.}},
\bauthor{\bsnm{Wang}, \binits{S.}},
\bauthor{\bsnm{Zhang}, \binits{P.}},
\bauthor{\bsnm{Liu}, \binits{X.}},
\bauthor{\bsnm{Guo}, \binits{X.}},
\bauthor{\bsnm{Zhou}, \binits{S.}},
\bauthor{\bsnm{Zhu}, \binits{E.}}:
\batitle{Projective multiple kernel subspace clustering}.
\bjtitle{{IEEE} Trans. Multimedia}
\bvolume{24},
\bfpage{2567}--\blpage{2579}
(\byear{2021})
\end{barticle}
\endbibitem

\bibitem[\protect\citeauthoryear{Si et~al.}{2022}]{si2022consistent}
\begin{barticle}
\bauthor{\bsnm{Si}, \binits{X.}},
\bauthor{\bsnm{Yin}, \binits{Q.}},
\bauthor{\bsnm{Zhao}, \binits{X.}},
\bauthor{\bsnm{Yao}, \binits{L.}}:
\batitle{Consistent and diverse multi-view subspace clustering with structure constraint}.
\bjtitle{Pattern Recognit.}
\bvolume{121},
\bfpage{108196}
(\byear{2022})
\end{barticle}
\endbibitem

\bibitem[\protect\citeauthoryear{Cai et~al.}{2023}]{cai2023seeking}
\begin{barticle}
\bauthor{\bsnm{Cai}, \binits{X.}},
\bauthor{\bsnm{Huang}, \binits{D.}},
\bauthor{\bsnm{Zhang}, \binits{G.-Y.}},
\bauthor{\bsnm{Wang}, \binits{C.-D.}}:
\batitle{Seeking commonness and inconsistencies: A jointly smoothed approach to multi-view subspace clustering}.
\bjtitle{Inf. Fusion}
\bvolume{91},
\bfpage{364}--\blpage{375}
(\byear{2023})
\end{barticle}
\endbibitem

\bibitem[\protect\citeauthoryear{Kang et~al.}{2020}]{kang2020large}
\begin{bchapter}
\bauthor{\bsnm{Kang}, \binits{Z.}},
\bauthor{\bsnm{Zhou}, \binits{W.}},
\bauthor{\bsnm{Zhao}, \binits{Z.}},
\bauthor{\bsnm{Shao}, \binits{J.}},
\bauthor{\bsnm{Han}, \binits{M.}},
\bauthor{\bsnm{Xu}, \binits{Z.}}:
\bctitle{Large-scale multi-view subspace clustering in linear time}.
In: \bbtitle{Proceedings of the AAAI Conference on Artificial Intelligence},
pp. \bfpage{4412}--\blpage{4419}
(\byear{2020})
\end{bchapter}
\endbibitem

\bibitem[\protect\citeauthoryear{Sun et~al.}{2021}]{sun2021scalable}
\begin{bchapter}
\bauthor{\bsnm{Sun}, \binits{M.}},
\bauthor{\bsnm{Zhang}, \binits{P.}},
\bauthor{\bsnm{Wang}, \binits{S.}},
\bauthor{\bsnm{Zhou}, \binits{S.}},
\bauthor{\bsnm{Tu}, \binits{W.}},
\bauthor{\bsnm{Liu}, \binits{X.}},
\bauthor{\bsnm{Zhu}, \binits{E.}},
\bauthor{\bsnm{Wang}, \binits{C.}}:
\bctitle{Scalable multi-view subspace clustering with unified anchors}.
In: \bbtitle{Proceedings of the 29th ACM International Conference on Multimedia},
pp. \bfpage{3528}--\blpage{3536}
(\byear{2021})
\end{bchapter}
\endbibitem

\bibitem[\protect\citeauthoryear{Wang et~al.}{2021}]{wang2021fast}
\begin{barticle}
\bauthor{\bsnm{Wang}, \binits{S.}},
\bauthor{\bsnm{Liu}, \binits{X.}},
\bauthor{\bsnm{Zhu}, \binits{X.}},
\bauthor{\bsnm{Zhang}, \binits{P.}},
\bauthor{\bsnm{Zhang}, \binits{Y.}},
\bauthor{\bsnm{Gao}, \binits{F.}},
\bauthor{\bsnm{Zhu}, \binits{E.}}:
\batitle{Fast parameter-free multi-view subspace clustering with consensus anchor guidance}.
\bjtitle{{IEEE} Trans. Image Process.}
\bvolume{31},
\bfpage{556}--\blpage{568}
(\byear{2021})
\end{barticle}
\endbibitem

\bibitem[\protect\citeauthoryear{Liu et~al.}{2010}]{liu2010large}
\begin{bchapter}
\bauthor{\bsnm{Liu}, \binits{W.}},
\bauthor{\bsnm{He}, \binits{J.}},
\bauthor{\bsnm{Chang}, \binits{S.-F.}}:
\bctitle{Large graph construction for scalable semi-supervised learning}.
In: \bbtitle{Proceedings of the 27th International Conference on Machine Learning},
pp. \bfpage{679}--\blpage{686}
(\byear{2010})
\end{bchapter}
\endbibitem

\bibitem[\protect\citeauthoryear{Wang et~al.}{2016}]{wang2016scalable}
\begin{barticle}
\bauthor{\bsnm{Wang}, \binits{M.}},
\bauthor{\bsnm{Fu}, \binits{W.}},
\bauthor{\bsnm{Hao}, \binits{S.}},
\bauthor{\bsnm{Tao}, \binits{D.}},
\bauthor{\bsnm{Wu}, \binits{X.}}:
\batitle{Scalable semi-supervised learning by efficient anchor graph regularization}.
\bjtitle{{IEEE} Trans. Knowl. Data Eng.}
\bvolume{28}(\bissue{7}),
\bfpage{1864}--\blpage{1877}
(\byear{2016})
\end{barticle}
\endbibitem

\bibitem[\protect\citeauthoryear{Zhang et~al.}{2022}]{zhang2022graph}
\begin{barticle}
\bauthor{\bsnm{Zhang}, \binits{Y.}},
\bauthor{\bsnm{Ji}, \binits{S.}},
\bauthor{\bsnm{Zou}, \binits{C.}},
\bauthor{\bsnm{Zhao}, \binits{X.}},
\bauthor{\bsnm{Ying}, \binits{S.}},
\bauthor{\bsnm{Gao}, \binits{Y.}}:
\batitle{Graph learning on millions of data in seconds: Label propagation acceleration on graph using data distribution}.
\bjtitle{{IEEE} Trans. Pattern Anal. Mach. Intell.}
\bvolume{45}(\bissue{2}),
\bfpage{1835}--\blpage{1847}
(\byear{2022})
\end{barticle}
\endbibitem

\bibitem[\protect\citeauthoryear{Elhamifar and Vidal}{2013}]{elhamifar2013sparse}
\begin{barticle}
\bauthor{\bsnm{Elhamifar}, \binits{E.}},
\bauthor{\bsnm{Vidal}, \binits{R.}}:
\batitle{Sparse subspace clustering: Algorithm, theory, and applications}.
\bjtitle{{IEEE} Trans. Pattern Anal. Mach. Intell.}
\bvolume{35}(\bissue{11}),
\bfpage{2765}--\blpage{2781}
(\byear{2013})
\end{barticle}
\endbibitem

\bibitem[\protect\citeauthoryear{Zhang et~al.}{2019}]{zhang2019self}
\begin{bchapter}
\bauthor{\bsnm{Zhang}, \binits{J.}},
\bauthor{\bsnm{Li}, \binits{C.-G.}},
\bauthor{\bsnm{You}, \binits{C.}},
\bauthor{\bsnm{Qi}, \binits{X.}},
\bauthor{\bsnm{Zhang}, \binits{H.}},
\bauthor{\bsnm{Guo}, \binits{J.}},
\bauthor{\bsnm{Lin}, \binits{Z.}}:
\bctitle{Self-supervised convolutional subspace clustering network}.
In: \bbtitle{Proceedings of the IEEE Conference on Computer Vision and Pattern Recognition},
pp. \bfpage{5473}--\blpage{5482}
(\byear{2019})
\end{bchapter}
\endbibitem

\bibitem[\protect\citeauthoryear{Yang et~al.}{2016}]{yang2016ell}
\begin{bchapter}
\bauthor{\bsnm{Yang}, \binits{Y.}},
\bauthor{\bsnm{Feng}, \binits{J.}},
\bauthor{\bsnm{Jojic}, \binits{N.}},
\bauthor{\bsnm{Yang}, \binits{J.}},
\bauthor{\bsnm{Huang}, \binits{T.S.}}:
\bctitle{$\ell_0$-sparse subspace clustering}.
In: \bbtitle{European Conference on Computer Vision},
pp. \bfpage{731}--\blpage{747}
(\byear{2016})
\end{bchapter}
\endbibitem

\bibitem[\protect\citeauthoryear{Ng et~al.}{2002}]{ng2002spectral}
\begin{bchapter}
\bauthor{\bsnm{Ng}, \binits{A.Y.}},
\bauthor{\bsnm{Jordan}, \binits{M.I.}},
\bauthor{\bsnm{Weiss}, \binits{Y.}}:
\bctitle{On spectral clustering: Analysis and an algorithm}.
In: \bbtitle{Proceedings of the International Conference on Neural Information Processing Systems},
pp. \bfpage{849}--\blpage{856}
(\byear{2002})
\end{bchapter}
\endbibitem

\bibitem[\protect\citeauthoryear{Wang et~al.}{2009}]{wang2009unified}
\begin{barticle}
\bauthor{\bsnm{Wang}, \binits{M.}},
\bauthor{\bsnm{Hua}, \binits{X.-S.}},
\bauthor{\bsnm{Hong}, \binits{R.}},
\bauthor{\bsnm{Tang}, \binits{J.}},
\bauthor{\bsnm{Qi}, \binits{G.-J.}},
\bauthor{\bsnm{Song}, \binits{Y.}}:
\batitle{Unified video annotation via multigraph learning}.
\bjtitle{{IEEE} Trans. Circuits Syst. Video Technol.}
\bvolume{19}(\bissue{5}),
\bfpage{733}--\blpage{746}
(\byear{2009})
\end{barticle}
\endbibitem

\bibitem[\protect\citeauthoryear{Cai et~al.}{2013}]{cai2013heterogeneous}
\begin{bchapter}
\bauthor{\bsnm{Cai}, \binits{X.}},
\bauthor{\bsnm{Nie}, \binits{F.}},
\bauthor{\bsnm{Cai}, \binits{W.}},
\bauthor{\bsnm{Huang}, \binits{H.}}:
\bctitle{Heterogeneous image features integration via multi-modal semi-supervised learning model}.
In: \bbtitle{Proceedings of the IEEE International Conference on Computer Vision},
pp. \bfpage{1737}--\blpage{1744}
(\byear{2013})
\end{bchapter}
\endbibitem

\bibitem[\protect\citeauthoryear{Karasuyama and Mamitsuka}{2013}]{karasuyama2013multiple}
\begin{barticle}
\bauthor{\bsnm{Karasuyama}, \binits{M.}},
\bauthor{\bsnm{Mamitsuka}, \binits{H.}}:
\batitle{Multiple graph label propagation by sparse integration}.
\bjtitle{{IEEE} Trans. Neural Netw. Learn. Syst.}
\bvolume{24}(\bissue{12}),
\bfpage{1999}--\blpage{2012}
(\byear{2013})
\end{barticle}
\endbibitem

\bibitem[\protect\citeauthoryear{Nie et~al.}{2017}]{nie2017auto}
\begin{barticle}
\bauthor{\bsnm{Nie}, \binits{F.}},
\bauthor{\bsnm{Cai}, \binits{G.}},
\bauthor{\bsnm{Li}, \binits{J.}},
\bauthor{\bsnm{Li}, \binits{X.}}:
\batitle{Auto-weighted multi-view learning for image clustering and semi-supervised classification}.
\bjtitle{{IEEE} Trans. Image Process.}
\bvolume{27}(\bissue{3}),
\bfpage{1501}--\blpage{1511}
(\byear{2017})
\end{barticle}
\endbibitem

\bibitem[\protect\citeauthoryear{Tao et~al.}{2017}]{tao2017scalable}
\begin{barticle}
\bauthor{\bsnm{Tao}, \binits{H.}},
\bauthor{\bsnm{Hou}, \binits{C.}},
\bauthor{\bsnm{Nie}, \binits{F.}},
\bauthor{\bsnm{Zhu}, \binits{J.}},
\bauthor{\bsnm{Yi}, \binits{D.}}:
\batitle{Scalable multi-view semi-supervised classification via adaptive regression}.
\bjtitle{{IEEE} Trans. Image Process.}
\bvolume{26}(\bissue{9}),
\bfpage{4283}--\blpage{4296}
(\byear{2017})
\end{barticle}
\endbibitem

\bibitem[\protect\citeauthoryear{Liang et~al.}{2020}]{liang2020semi}
\begin{barticle}
\bauthor{\bsnm{Liang}, \binits{N.}},
\bauthor{\bsnm{Yang}, \binits{Z.}},
\bauthor{\bsnm{Li}, \binits{Z.}},
\bauthor{\bsnm{Xie}, \binits{S.}},
\bauthor{\bsnm{Su}, \binits{C.-Y.}}:
\batitle{Semi-supervised multi-view clustering with graph-regularized partially shared non-negative matrix factorization}.
\bjtitle{Knowl.-Based Syst.}
\bvolume{190},
\bfpage{105185}
(\byear{2020})
\end{barticle}
\endbibitem

\bibitem[\protect\citeauthoryear{Qin et~al.}{2021}]{qin2021semi}
\begin{barticle}
\bauthor{\bsnm{Qin}, \binits{Y.}},
\bauthor{\bsnm{Wu}, \binits{H.}},
\bauthor{\bsnm{Zhang}, \binits{X.}},
\bauthor{\bsnm{Feng}, \binits{G.}}:
\batitle{Semi-supervised structured subspace learning for multi-view clustering}.
\bjtitle{{IEEE} Trans. Image Process.}
\bvolume{31},
\bfpage{1}--\blpage{14}
(\byear{2021})
\end{barticle}
\endbibitem

\bibitem[\protect\citeauthoryear{Liang et~al.}{2022}]{liang2022co}
\begin{barticle}
\bauthor{\bsnm{Liang}, \binits{N.}},
\bauthor{\bsnm{Yang}, \binits{Z.}},
\bauthor{\bsnm{Li}, \binits{Z.}},
\bauthor{\bsnm{Xie}, \binits{S.}}:
\batitle{Co-consensus semi-supervised multi-view learning with orthogonal non-negative matrix factorization}.
\bjtitle{Inf. Process. Manag.}
\bvolume{59}(\bissue{5}),
\bfpage{103054}
(\byear{2022})
\end{barticle}
\endbibitem

\bibitem[\protect\citeauthoryear{Nie et~al.}{2016}]{nie2016parameter}
\begin{bchapter}
\bauthor{\bsnm{Nie}, \binits{F.}},
\bauthor{\bsnm{Li}, \binits{J.}},
\bauthor{\bsnm{Li}, \binits{X.}}, \betal:
\bctitle{Parameter-free auto-weighted multiple graph learning: a framework for multiview clustering and semi-supervised classification.}
In: \bbtitle{IJCAI},
pp. \bfpage{1881}--\blpage{1887}
(\byear{2016})
\end{bchapter}
\endbibitem

\bibitem[\protect\citeauthoryear{Tang et~al.}{2021}]{tang2021constrained}
\begin{barticle}
\bauthor{\bsnm{Tang}, \binits{Y.}},
\bauthor{\bsnm{Xie}, \binits{Y.}},
\bauthor{\bsnm{Zhang}, \binits{C.}},
\bauthor{\bsnm{Zhang}, \binits{W.}}:
\batitle{Constrained tensor representation learning for multi-view semi-supervised subspace clustering}.
\bjtitle{{IEEE} Trans. Multimedia}
\bvolume{24},
\bfpage{3920}--\blpage{3933}
(\byear{2021})
\end{barticle}
\endbibitem

\bibitem[\protect\citeauthoryear{Ngo et~al.}{2012}]{ngo2012trace}
\begin{barticle}
\bauthor{\bsnm{Ngo}, \binits{T.T.}},
\bauthor{\bsnm{Bellalij}, \binits{M.}},
\bauthor{\bsnm{Saad}, \binits{Y.}}:
\batitle{The trace ratio optimization problem}.
\bjtitle{SIAM Rev.}
\bvolume{54}(\bissue{3}),
\bfpage{545}--\blpage{569}
(\byear{2012})
\end{barticle}
\endbibitem

\bibitem[\protect\citeauthoryear{Duin}{}]{misc_multiple_features_72}
\begin{botherref}
\oauthor{\bsnm{Duin}, \binits{R.}}:
{Multiple Features}.
UCI Machine Learning Repository.
\url{https://doi.org/10.24432/C5HC70}. Accessed 7 Sept 2022.
\end{botherref}
\endbibitem

\bibitem[\protect\citeauthoryear{Li et~al.}{}]{li_andreeto_ranzato_perona_2022}
\begin{botherref}
\oauthor{\bsnm{Li}, \binits{F.-F.}},
\oauthor{\bsnm{Andreeto}, \binits{M.}},
\oauthor{\bsnm{Ranzato}, \binits{M.}},
\oauthor{\bsnm{Perona}, \binits{P.}}:
Caltech 101.
CaltechDATA.
\url{https://doi.org/10.22002/D1.20086}. Accessed 15 Nov 2022.
\end{botherref}
\endbibitem

\bibitem[\protect\citeauthoryear{Amini et~al.}{2009}]{amini2009learning}
\begin{bchapter}
\bauthor{\bsnm{Amini}, \binits{M.R.}},
\bauthor{\bsnm{Usunier}, \binits{N.}},
\bauthor{\bsnm{Goutte}, \binits{C.}}:
\bctitle{Learning from multiple partially observed views-an application to multilingual text categorization}.
In: \bbtitle{Proceedings of the 22nd International Conference on Neural Information Processing Systems},
pp. \bfpage{28}--\blpage{36}
(\byear{2009})
\end{bchapter}
\endbibitem

\bibitem[\protect\citeauthoryear{Chua et~al.}{2009}]{chua2009nus}
\begin{bchapter}
\bauthor{\bsnm{Chua}, \binits{T.-S.}},
\bauthor{\bsnm{Tang}, \binits{J.}},
\bauthor{\bsnm{Hong}, \binits{R.}},
\bauthor{\bsnm{Li}, \binits{H.}},
\bauthor{\bsnm{Luo}, \binits{Z.}},
\bauthor{\bsnm{Zheng}, \binits{Y.}}:
\bctitle{Nus-wide: a real-world web image database from national university of singapore}.
In: \bbtitle{Proceedings of the ACM International Conference on Image and Video Retrieval},
pp. \bfpage{1}--\blpage{9}
(\byear{2009})
\end{bchapter}
\endbibitem

\bibitem[\protect\citeauthoryear{Wolf et~al.}{2011}]{wolf2011face}
\begin{bchapter}
\bauthor{\bsnm{Wolf}, \binits{L.}},
\bauthor{\bsnm{Hassner}, \binits{T.}},
\bauthor{\bsnm{Maoz}, \binits{I.}}:
\bctitle{Face recognition in unconstrained videos with matched background similarity}.
In: \bbtitle{CVPR},
pp. \bfpage{529}--\blpage{534}
(\byear{2011})
\end{bchapter}
\endbibitem

\bibitem[\protect\citeauthoryear{Li et~al.}{2017}]{li2017robust}
\begin{barticle}
\bauthor{\bsnm{Li}, \binits{Z.}},
\bauthor{\bsnm{Tang}, \binits{J.}},
\bauthor{\bsnm{He}, \binits{X.}}:
\batitle{Robust structured nonnegative matrix factorization for image representation}.
\bjtitle{{IEEE} Trans. Neural Netw. Learn. Syst.}
\bvolume{29}(\bissue{5}),
\bfpage{1947}--\blpage{1960}
(\byear{2017})
\end{barticle}
\endbibitem

\bibitem[\protect\citeauthoryear{Hubert and Arabie}{1985}]{hubert1985comparing}
\begin{barticle}
\bauthor{\bsnm{Hubert}, \binits{L.}},
\bauthor{\bsnm{Arabie}, \binits{P.}}:
\batitle{Comparing partitions}.
\bjtitle{J. Classif.}
\bvolume{2}(\bissue{1}),
\bfpage{193}--\blpage{218}
(\byear{1985})
\end{barticle}
\endbibitem

\bibitem[\protect\citeauthoryear{Qian}{}]{hang2024count}
\begin{botherref}
\oauthor{\bsnm{Qian}, \binits{H.}}:
{Counting the Floating Point Operations (FLOPS)}.
MATLAB Central File Exchange.
\url{https://www.mathworks.com/matlabcentral/fileexchange/50608-counting-the-floating-point-operations-flops}. Accessed 15 Jul 2024.
\end{botherref}
\endbibitem

\bibitem[\protect\citeauthoryear{Attouch et~al.}{2013}]{attouch2013convergence}
\begin{barticle}
\bauthor{\bsnm{Attouch}, \binits{H.}},
\bauthor{\bsnm{Bolte}, \binits{J.}},
\bauthor{\bsnm{Svaiter}, \binits{B.F.}}:
\batitle{Convergence of descent methods for semi-algebraic and tame problems: proximal algorithms, forward--backward splitting, and regularized gauss--seidel methods}.
\bjtitle{Math. Program.}
\bvolume{137}(\bissue{1-2}),
\bfpage{91}--\blpage{129}
(\byear{2013})
\end{barticle}
\endbibitem

\end{thebibliography}

\end{document}